\newlength{\minipagewidth}
\newcommand{\bookbox}[1]{\small
\par\medskip\noindent
\framebox[\columnwidth]{
\begin{minipage}{\minipagewidth} {#1} \end{minipage} } \par\medskip }
\newtheorem{theo}{Theorem}[section]
\newtheorem{prop}[theo]{Proposition}
\newtheorem{lemm}[theo]{Lemma}
\newtheorem{defi}{Definition}
\newtheorem{coro}[theo]{Corollary}
\newcommand{\Pro}{\mathbb{P}}
\newcommand{\N}{\mathbb{N}}
\newcommand{\R}{\mathbb{R}}
\newcommand{\tth}{\tilde{\theta}}
\newcommand{\tnu}{\tilde{\nu}}
\newcommand{\tmu}{\tilde{\mu}}
\newcommand\set[2]{\{#1,\dots,#2\}}
\newcommand{\IND}{\mathbbm{1}}
\newcommand{\hX}{\hat{X}}
\newcommand{\hR}{\hat{R}}
\renewcommand{\P}{\mathbb{P}}
\newcommand\E{\mathbb{E}}
\newcommand{\ra}{\rightarrow}
\newcommand\tC{\tilde{C}}
\newcommand\fracl[2]{{(#1)}/{#2}}
\newcommand\fracc[2]{{#1}/{#2}}
\newcommand\fracr[2]{{#1}/{(#2)}}
\newcommand{\argmax}{\mathop{\mathrm{argmax}}}
\def\beglab{\begin{equation} \label}
\def\endlab{\end{equation}}
\newcommand\und[2]{\underset{#2}{#1}\;}
\newcommand\F{\mathcal{F}}
\newsavebox{\fmbox}
\begin{document}

\begin{frontmatter}

\title{Robustness of anytime bandit policies}

\author[imagine]{Antoine Salomon}
\ead{salomona@imagine.enpc.fr}

\author[imagine,sierra]{Jean-Yves Audibert}
\ead{audibert@imagine.enpc.fr}

\address[imagine]{Imagine, LIGM \\
\'Ecole des Ponts ParisTech\\
Universit\'e Paris Est}
\address[sierra]{Sierra, CNRS/ENS/INRIA, Paris, France}

\begin{abstract}
This paper studies the deviations of the regret in a stochastic multi-armed bandit problem.
When the total number of plays $n$ is known beforehand by the agent, \citet{audibert2009exploration} exhibit
a policy such that with probability at least $1-1/n$, the regret of the policy is 
of order $\log n$. They have also shown that such a property is not shared by the popular {\sc ucb1} policy
of \citet{AuCeFi02}. This work first answers an open question: it extends this negative result to any anytime policy.
Another contribution of this paper is to design anytime robust policies for specific multi-armed bandit problems in which
some restrictions are put on the set of possible distributions of the different arms. We also show that, for any policy (i.e. when the number of plays is known), the regret is of order $\log n$ with probability at least $1-1/n$, so that the policy of \citeauthor{audibert2009exploration} has the best possible deviation properties. 
\end{abstract}

\begin{keyword}
exploration-exploitation tradeoff \sep
multi-armed stochastic bandit \sep regret deviations/risk
\end{keyword}

\end{frontmatter}

\section{Introduction}

Bandit problems illustrate the fundamental difficulty of sequential decision making in the
face of uncertainty: a decision maker must choose between following what seems
to be the best choice in view of the past (``exploitation'') or testing (``exploration'')
some alternative, hoping to discover a choice that beats the current empirical best choice.
More precisely, in the stochastic multi-armed bandit problem, at each stage, an agent (or
decision maker) chooses one action (or arm), and receives a reward from it. The
agent aims at maximizing his rewards. Since he does not know the process generating
the rewards, he does not know the best arm, that is the one having the highest expected reward. 
He thus incurs a regret, that is the difference between the cumulative reward he would have got by always drawing the best arm
and the cumulative reward he actually got. The name ``bandit'' comes from imagining a gambler in a casino playing with $K$ slot machines, where at each round, the gambler pulls the arm of any of the machines and gets a payoff as a result.

The multi-armed bandit problem is the simplest setting where one encounters the 
exploration-exploitation dilemma. It has a wide range of applications
including advertisement \citep{Baba09,Dev09}, economics \citep{BerVal08,LamPagTar04}, games \citep{Gel06} and
optimization \citep{Kle05,Coq07,Kle08,Bub08}.
It can be a central building block of larger systems, like in evolutionary programming \citep{Hol92} and reinforcement learning \citep{Sut98}, in particular in large state space Markovian Decision Problems \citep{KocSze06}. Most of these applications require
that the policy of the forecaster works well {\it for any time}. For instance, in tree search using bandit policies at each node,
the number of times the bandit policy will be applied at each node is not known beforehand (except for the root node in some cases),
and the bandit policy should thus provide consistently low regret whatever the total number of rounds is.

Most previous works on the stochastic multi-armed bandit 
\cite[][among others]{ro52,LaiRo85,Agr95,AuCeFi02} focused on the expected regret, and showed that after $n$ rounds, 
the expected regret is of order $\log n$.
So far, the analysis of the upper tail of the regret was only addressed in \citet{audibert2009exploration}.
The two main results there about the deviations of the regret are the following. First, after $n$ rounds, for large enough constant $C>0$, the probability that the regret of {\sc ucb1} (and also its variant taking into account the empirical variance) exceeds $C\log n$
is upper bounded by $1/(\log n)^{C'}$ for some constant $C'$ depending on the distributions of the arms and on $C$ (but not on $n$). Besides, for most bandit problems, this upper bound is tight to the extent that the probability is also lower bounded by a quantity of the same form.
Second, a new upper confidence bound policy was proposed: 
it requires to know the total number of rounds in advance and uses this knowledge to design
a policy which essentially explores in the first rounds and then exploits the information gathered in the exploration phase.
Its regret has the advantage of being more concentrated to the extent that
with probability at least $1-1/n$, the regret is of order $\log n$. 
The problem left open by \cite{audibert2009exploration} is whether it is possible to design an anytime robust policy, that is a policy such that for any $n$, with probability at least $1-1/n$, the regret is of order $\log n$.
In this paper, we answer negatively to this question when the reward distributions of all arms are just assumed to be uniformly bounded, say all rewards are in $[0,1]$ for instance (Corollary \ref{cor:main}). We then study which kind of restrictions on the set of probabilities defining the bandit problem allows to answer positively. One of our positive results is the following: if the agent knows the value of the expected reward of the best arm (but does not know which arm is the best one), the agent can use this information to design an anytime robust policy (Theorem \ref{th:mu*}). We also show that it is not possible to design a policy such that the regret is of order $\log n$ with a probability that would significantly greater than $1-1/n$, even if the agent knows the total number of rounds in advance (Corollary \ref{cor:hor}).\\
The paper is organised as follows: in the first section, we formally describe the problem we address and give the corresponding definitions and properties. Next we present our main impossibility result. In the third section, we provide restrictions under which it is possible to design anytime robust policies. In the fourth section, we study the robustness of policies that can use the knowledge of the total number of rounds. Then we provide experiments to compare our robust policy to the classical {\sc UCB} algorithms. The last section is devoted to the proofs of our results.


\section{Problem setup and definitions}

In the stochastic multi-armed bandit problem with $K\ge 2$ arms, at each time step $t=1,2,\dots$, an
agent has to choose an arm $I_t$ in the set $\{1,\dots,K\}$ and obtains a reward drawn from $\nu_{I_t}$ independently from the past (actions
and observations). The environment is thus parameterized by a $K$-tuple of probability distributions
$\theta=(\nu_1,\dots,\nu_K)$.
The agent aims at maximizing his rewards. He does not know $\theta$ but knows that it belongs to some set $\Theta$.
We assume for simplicity that $\Theta \subset \bar\Theta$, where $\bar\Theta$ denotes the set of all $K$-tuple of probability distributions on $[0,1]$. We thus assume that the rewards are in $[0,1]$. 

For each arm $k$ and all times $t \geq 1$, let $T_{k}(t)=\sum_{s=1}^t \IND_{I_s=k}$ denote the number of times arm $k$ was pulled from round $1$ to round $t$, and $X_{k,1}, X_{k,2},\ldots, X_{k,T_{k}(t)}$ the sequence of associated rewards. 
For an environment parameterized by $\theta=\big(\nu_1,\dots,\nu_K)$, let $\P_\theta$ denote the distribution on the probability space such that
for any $k\in\set{1}{K}$, the random variables $X_{k,1}, X_{k,2}, \dots$ are i.i.d. realizations of $\nu_k$, and such that these
$K$ infinite sequence of random variables are independent. Let $\E_\theta$ denote the associated expectation.

Let $\mu_k=\int x d\nu_k(x)$ be the mean reward of arm $k$. Introduce $\mu^*=\max_k \mu_k$ and fix an arm $k^*\in\argmax_{k\in\set{1}{K}} \mu_k$, that is $k^*$ has the best expected reward.
The suboptimality of arm $k$ is measured by $\Delta_k = \mu^*-\mu_k$. The agent aims at minimizing its regret defined as the difference between the cumulative reward he would have got by always drawing the best arm
and the cumulative reward he actually got. At time $n\ge 1$, its regret is thus
\begin{equation}    
    \hR_n=\sum_{t=1}^{n} X_{k^*,t}-\sum_{t=1}^{n} X_{I_t,T_{I_t}(t)}. \label{def:defreg}
\end{equation}
The expectation of this regret has a simple expression in terms of the suboptimalities of the arms and the expected sampling times of the arms at time $n$. Precisely, we have
    \begin{eqnarray} 
    \E_\theta \hR_n &=& n \mu^* - \sum_{t=1}^n \E_{\theta}(\mu_{I_t}) =  n \mu^* - \E_{\theta}\bigg( \sum_{k=1}^K T_k(n) \mu_k\bigg) \nonumber \\ &=& \mu^* \sum_{k=1}^K \E_{\theta}[T_k(n)] - \sum_{k=1}^K \mu_k \E_{\theta}[T_k(n)]
     =\sum_{k=1}^{K} \Delta_k \E_{\theta}[T_k(n)]. \nonumber
    \end{eqnarray}
Other notions of regret exists in the literature: the quantity $\sum_{k=1}^{K} \Delta_kT_k(n)$ is called the pseudo regret and may be more practical to study, and the quantity $\max_k\sum_{t=1}^{n} X_{k,t}-\sum_{t=1}^{n} X_{I_t,T_{I_t}(t)}$ defines the regret in adverserial settings. Results and ideas we want to convey here are more suited to definition \eqref{def:defreg}, and taking another definition of the regret would only bring some more technical intricacies.\\

Our main interest is the study of the \emph{deviations} of the regret $\hR_n$, i.e. the value of $\Pro_{\theta}(\hR_n\geq x)$ when $x$ is larger and of order of $\E_\theta \hR_n$. If a policy has small deviations, it means that the regret is small with high probability and in particular, if the policy is used on some real data, it is very likely to be small on this specific dataset.
Naturally, small deviations imply small expected regret since we have
$$\E_\theta \hR_n\le\E_\theta \max(\hR_n,0)=\int_0^{+\infty}\Pro_{\theta}\left(\hR_n\geq x\right)dx.$$
To a lesser extent it is also interesting to study the deviations of the sampling times $T_n(k)$, as this shows the ability of a policy to match the best arm. Moreover our analysis is mostly based on results on the deviations of the sampling times, which then enables to
derive results on the regret. We thus define below the notion of being $f$-upper tailed for both quantities.\\
Define $\R_+^*=\{x\in\R:x>0\}$, and let $\Delta= \min_{k\neq k^*} \Delta_k$ be the gap between the best arm and second best arm.
\begin{defi}[$f$-$\mathcal T$ and $f$-$\mathcal R$] 
Consider a mapping $f:\R\ra\R_+^*$. 
A policy has $f$-upper tailed sampling Times (in short, we will say that the policy is $f$-$\mathcal T$) if and only if
\begin{align*}
 \exists C,\tilde{C}>0, \ \forall \theta\in\Theta \textit{ such that } & \Delta\neq 0, \\
 & \forall n\geq 2, \ \forall k\neq k^*, \ \Pro_{\theta}\left(T_k(n)\geq C\frac{\log n}{\Delta_k^2}\right)\leq \frac{\tilde{C}}{f(n)}.
\end{align*}
A policy has $f$-upper tailed Regret (in short, $f$-$\mathcal R$) if and only if
\begin{eqnarray*}
 \exists C,\tilde{C}>0, \ \forall \theta\in\Theta \textit{ such that } \Delta\neq 0, \ \forall n\geq 2, \ \Pro_{\theta}\left(\hR_n\geq C\frac{\log n}{\Delta}\right)\leq \frac{\tilde{C}}{f(n)}.
\end{eqnarray*}
\end{defi}

We will sometimes prefer to denote $f(n)$-$\mathcal T$ (resp. $f(n)$-$\mathcal R$) instead of $f$-$\mathcal T$ (resp. $f$-$\mathcal R$) for readability. Note also that, for sake of simplicity, we leave aside the degenerated case of $\Delta$ being null (i.e. when there are at least two optimal arms).


In this definition, we considered that the number $K$ of arms is fixed, meaning that $C$ and $\tC$ may depend on $K$.
The thresholds considered on $T_k(n)$ and $\hR_n$ directly come from known tight upper bounds 
on the expectation of these quantities for several policies. To illustrate this, let us recall the definition and
properties of the popular {\sc ucb1} policy. 
Let $\hat{X}_{k,s}=\frac{1}{s} \sum_{t=1}^s X_{k,t}$ be the empirical mean of arm $k$ after $s$ pulls.
In {\sc ucb1}, the agent plays each arm once, and then (from $t\ge K+1$), he plays 
\begin{equation}\label{eq:ucb}
I_t\in\argmax_{k\in\set{1}{K}} \Bigg\{ \hat{X}_{k,T_k(t-1)} + \sqrt{\frac{2\log t}{T_k(t-1)}} \Bigg\}.
\end{equation}
While the first term in the bracket ensures the exploitation of the knowledge gathered during steps $1$ to $t-1$,
the second one ensures the exploration of the less sampled arms. For this policy, \citet{AuCeFi02} proved:
    $$
    \forall n \ge 3, \quad \E[T_k(n)]\le 12 \frac{\log n}{\Delta_k^2} \quad \text{and} \quad 
    \E_{\theta} \hR_n \le 12 \sum_{k=1}^K \frac{\log n}{\Delta_k} \le 12 K \frac{\log n}{\Delta}.
    $$
\citet{LaiRo85} showed that these results cannot be improved up to numerical constants. 
\citet{audibert2009exploration} proved that {\sc ucb1} is $\log^3$-$\mathcal T$ and $\log^3$-$\mathcal R$ where $\log^3$ is the function $x\mapsto[\log(x)]^3$.
Besides, they also study the case when $2 \log t$ is replaced by $\rho \log t$ in \eqref{eq:ucb} with $\rho>0$,
and proved that this modified {\sc ucb1} is $\log^{2 \rho -1}$-$\mathcal T$ and $\log^{2 \rho -1}$-$\mathcal R$ for $\rho>1/2$,
and that $\rho=\frac12$ is actually a critical value. Indeed, for $\rho<\fracc12$ the policy does
not even have a logarithmic regret guarantee in expectation.
Another variant of {\sc ucb1} proposed by \citeauthor{audibert2009exploration} is to replace 
$2 \log t$ by $2 \log n$ in \eqref{eq:ucb} when we want to have low and concentrated regret at a fixed given time $n$.
We refer to it as {\sc ucb-h} as its implementation requires the knowledge of the \underline{h}orizon $n$ of the game.
The behaviour of {\sc ucb-h} on the time interval $[1,n]$ is significantly different to the one of {\sc ucb1}, as 
{\sc ucb-h} will explore much more at the beginning of the interval, and thus avoids exploiting the suboptimal arms
on the early rounds. \citeauthor{audibert2009exploration} showed that {\sc ucb-h} is $n$-$\mathcal T$ and $n$-$\mathcal R$ (as it will be recalled in Theorem \ref{th:ucbh}). As it will be confirmed by our results, whether a policy knows in advance the horizon $n$ or not matters a lot, that is why we introduce the following terms.

\begin{defi}
 A policy that uses the knowledge of the horizon $n$ (e.g. {\sc ucb-h}) is a horizon policy. A policy that does not use the knowledge of $n$ (e.g. {\sc ucb1}) is an anytime policy.
\end{defi}

We now introduce the weak notion of $f$-upper tailed as 
this notion will be used to get our strongest impossibility results.

\begin{defi}[$f$-w$\mathcal T$ and $f$-w$\mathcal R$] 
Consider a mapping $f:\R\ra\R_+^*$. 
A policy has weak $f$-upper tailed sampling Times (in short, we will say that the policy is $f$-w$\mathcal T$) if and only if
\begin{align*}
 \forall \theta\in\Theta \textit{ such that } & \Delta\neq 0, \\
 & \exists C,\tilde{C}>0, \ \forall n\geq 2, \ \forall k\neq k^*, \ \Pro_{\theta}\left(T_k(n)\geq C\frac{\log n}{\Delta_k^2}\right)\leq \frac{\tilde{C}}{f(n)}.
\end{align*}
A policy has weak $f$-upper tailed Regret (in short, $f$-w$\mathcal R$) if and only if
\begin{eqnarray*}
 \forall \theta\in\Theta \textit{ such that } \Delta\neq 0, \ \exists C,\tilde{C}>0, \ \forall n\geq 2, \ \Pro_{\theta}\left(\hR_n\geq C\frac{\log n}{\Delta}\right)\leq \frac{\tilde{C}}{f(n)}.
\end{eqnarray*}
\end{defi}

The only difference between $f$-$\mathcal T$ and $f$-w$\mathcal T$ (and between $f$-$\mathcal R$ and $f$-w$\mathcal R$) is
the interchange of ``$\forall \theta$'' and ``$\exists C,\tilde{C}$''.
Consequently, a policy that is $f$-$\mathcal T$ (respectively $f$-$\mathcal R$) is $f$-$\mathcal T$ (respectively $f$-w$\mathcal R$).
Let us detail the links between the $f$-$\mathcal T$, $f$-$\mathcal R$, $f$-w$\mathcal T$ and $f$-w$\mathcal R$. \\

\begin{prop}\label{prop:eqRT}
Assume that there exists $\alpha,\beta>0$ such that $f(n)\leq \alpha n^{\beta}$ for any $n\geq 2$. We have
    $$
    f\text{-}\mathcal T \Rightarrow f\text{-}\mathcal R \Rightarrow f\text{-w}\mathcal R \Leftrightarrow f\text{-w}\mathcal T.
    $$
\end{prop}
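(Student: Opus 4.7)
\textbf{Proof plan for Proposition~\ref{prop:eqRT}.} The workhorse is the decomposition
\[
\hR_n \;=\; \text{psreg}_n + F_n^{(1)} - F_n^{(2)},
\]
where $\text{psreg}_n := \sum_{k\neq k^*}T_k(n)\Delta_k$, $F_n^{(1)} := \sum_{s=T_{k^*}(n)+1}^n (X_{k^*,s}-\mu^*)$, and $F_n^{(2)} := \sum_{k\neq k^*}\sum_{s=1}^{T_k(n)}(X_{k,s}-\mu_k)$. Both fluctuation terms involve at most $N := n-T_{k^*}(n) = \sum_{k\neq k^*}T_k(n)$ centered bounded summands. For $F_n^{(1)}$ the summands are ``fresh'' i.i.d.\ samples of $\nu_{k^*}$ that are independent of the history, so Hoeffding conditionally on $T_{k^*}(n)$ yields $\Pro(|F_n^{(1)}|\geq\epsilon,\,N\leq M)\leq 2\exp(-2\epsilon^2/M)$. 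For $F_n^{(2)}$, Hoeffding's maximal inequality applied to each of the $K-1$ martingales $m\mapsto\sum_{s\leq m}(X_{k,s}-\mu_k)$ together with a union bound gives the analogous control. Since $f(n)\leq \alpha n^\beta$ ensures $\log f(n)=O(\log n)$, one obtains $|F_n^{(1)}-F_n^{(2)}|\leq \kappa\sqrt{M\log n}$ on $\{N\leq M\}$ with probability $\geq 1-O(1/f(n))$.

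\emph{Implications $f\text{-}\mathcal T\Rightarrow f\text{-}\mathcal R$ and $f\text{-w}\mathcal T\Rightarrow f\text{-w}\mathcal R$.} These share an argument, the only difference being uniform vs.\ $\theta$-dependent constants. A union bound on the $\mathcal T$ assumption over the $K-1$ suboptimal arms produces an event $E$ of probability $\geq 1-O(1/f(n))$ on which $\text{psreg}_n\leq C_0(K-1)\log n/\Delta$ and $N\leq N_0 := C_0(K-1)\log n/\Delta^2$. Applying the fluctuation bound above with $M=N_0$ gives $|F_n^{(1)}-F_n^{(2)}|=O(\log n/\Delta)$, so $\hR_n\leq C\log n/\Delta$ outside an event of probability $O(1/f(n))$. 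The implication $f\text{-}\mathcal R\Rightarrow f\text{-w}\mathcal R$ is immediate, since the universal constants of the former are in particular valid for each individual $\theta$.

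\emph{Implication $f\text{-w}\mathcal R\Rightarrow f\text{-w}\mathcal T$ (main obstacle).} Here a tail bound on $\hR_n$ must be converted into a tail bound on $T_k(n)$ while $N$ is a priori unbounded. Fix $\theta$, $k\neq k^*$, and the constants $C,\tilde C$ provided by $f\text{-w}\mathcal R$; set $N_*:=C'\log n/\Delta_k^2$ and $A:=\{T_k(n)\geq N_*\}$, so that on $A$, $\text{psreg}_n\geq C'\log n/\Delta_k$ and $N\geq N_*$. Peel $A$ along the dyadic shells $D_j:=\{2^{j-1}N_*\leq N<2^j N_*\}$, $j\geq 1$; on $A\cap D_j$ one has $\text{psreg}_n\geq 2^{j-1}N_*\Delta$, and the conditional fluctuation bound with $M=2^j N_*$ gives
\[
\Pro\bigl(D_j\cap\{|F_n^{(1)}-F_n^{(2)}|>\text{psreg}_n/2\}\bigr)\,\leq\,C_1\,n^{-c\cdot 2^{j-1}}
\]
with $c$ proportional to $C'\Delta^2/(K^2\Delta_k^2)$. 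Choose $C'$ large enough (depending on $\theta,K,\beta,C$) so that (i) $C'/\Delta_k\geq 2C/\Delta$, ensuring $\text{psreg}_n/2\geq C\log n/\Delta$ on $A$, and (ii) $c\geq 4\beta$, making the shell probabilities summable. This yields
\[
A\;\subseteq\;\{\hR_n\geq C\log n/\Delta\}\,\cup\,\bigcup_{j\geq 1}\bigl(D_j\cap\{|F_n^{(1)}-F_n^{(2)}|>\text{psreg}_n/2\}\bigr),
\]
whose probability is $\leq \tilde C/f(n)+\sum_{j\geq 1} n^{-\beta 2^{j-1}} = O(1/f(n))$, since the series is a geometric tail dominated by $n^{-\beta}$. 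The delicate point is tuning $C'$ so that the pseudo-regret dominates the fluctuation bound \emph{uniformly} across dyadic slices, which is precisely what the polynomial hypothesis on $f$ enables, and which also explains the mismatch between the $\Delta_k^{-2}$ scale in the $\mathcal T$ threshold and the $\Delta^{-1}$ scale in the $\mathcal R$ threshold.
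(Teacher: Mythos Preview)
Your argument is correct. For $f\text{-}\mathcal T\Rightarrow f\text{-}\mathcal R$ (and its weak analogue) you proceed essentially as the paper does: the same regret decomposition, a union bound over the suboptimal arms, and Hoeffding-type control of the two fluctuation pieces on the event where $N\le N_0$.

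For $f\text{-w}\mathcal R\Rightarrow f\text{-w}\mathcal T$, however, your route differs genuinely from the paper's. The paper argues by \emph{contrapositive}: assuming a $\theta$ and an arm $k'$ for which $\Pro_\theta(T_{k'}(n)\ge C'\log n/\Delta_{k'}^2)$ is not small, it introduces a single \emph{time-uniform} event $\xi''=\{\forall k,\forall s\le n,\ |S_{k,s}|\le\sqrt{sL/2}\}$ (with $L=\log(f(n)nK/\tilde C)$), and then shows by elementary algebra that on $\xi'\cap\xi''$ the regret already exceeds $C\log n/\Delta$. There is no peeling; the polynomial growth of $f$ enters only to ensure $L\le(\beta+2)\log n$, so that the $\sqrt{sL}$ envelope is dominated by the linear term $\Delta_k s$ at the scale $s\approx C'\log n/\Delta_{k'}^2$. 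Your proof is \emph{direct}: you bound $\Pro_\theta(T_k(n)\ge N_*)$ itself via a dyadic decomposition of $N$, and on each shell $D_j$ balance the linear lower bound $\text{psreg}_n\ge 2^{j-1}N_*\Delta$ against the Hoeffding deviation at scale $M=2^jN_*$. Both approaches exploit the same underlying fact---that the pseudo-regret grows linearly in $N$ while fluctuations grow like $\sqrt{N}$---but package it differently. The paper's method is shorter and avoids the summation over shells; your peeling argument is more modular and would adapt more readily if the fluctuation tails were heavier than sub-Gaussian. Your stopping-time treatment of $F_n^{(1)}$ is valid (since $\{T_{k^*}(n)=m\}$ is measurable with respect to $X_{k^*,1},\dots,X_{k^*,m}$ and the other arms' samples), though note that the paper sidesteps this by simply bounding $|S_{k^*,n}-S_{k^*,T_{k^*}(n)}|\le\max_{0\le s\le M}|S_{k^*,n}-S_{k^*,n-s}|$ and applying the maximal inequality, which is a slightly cleaner alternative.
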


The proof of this proposition is technical but rather straightforward. Note that we do not have $f\text{-}\mathcal R \Rightarrow f\text{-}\mathcal T$, because the agent may not regret having pulled a suboptimal arm if the latter has delivered good rewards. Note also that $f$ is required to be at most polynomial: if not some rare events such as unlikely deviations of rewards towards their actual mean can not be neglected, and none of the implications hold in general (except, of course, $f\text{-}\mathcal R \Rightarrow f\text{-w}\mathcal R$ and $f\text{-}\mathcal T \Rightarrow f\text{-w}\mathcal T$).



\section{Impossibility result} 

Here and in section \ref{sec:pos} we mostly deal with anytime policies, and the word policy (or algorithm) implicitly refers to anytime policy.\\
In the previous section, we have mentioned that for any $\rho>1/2$, there is a variant of {\sc ucb1} (obtained by changing $2 \log t$ into $\rho\log t$ in \eqref{eq:ucb}) which is $\log^{2\rho-1}$-$\mathcal T$. This means that, for any $\alpha>0$, there exists a $\log^{\alpha}$-$\mathcal T$ policy, and a hence $\log^{\alpha}$-$\mathcal R$ policy. The following result shows that it is impossible to find an algorithm that would have better deviation properties than these {\sc ucb} policies. For many usual settings (e.g., when $\Theta$ is the set $\bar\Theta$ of all $K$-tuples of measures on $[0,1]$), with not so small probability, the agent gets stuck drawing a suboptimal arm he believes best. Precisely, this situation arises when simultaneously:
\begin{itemize}
\item[(a)] an arm $k$ delivers payoffs according to a same distribution $\nu_k$ in two distinct environments $\theta$ and $\tth$,
\item[(b)] arm $k$ is optimal in $\theta$ but suboptimal in $\tth$,
\item[(c)] in environment $\tth$, other arms may behave as in environment $\theta$, i.e. with positive probability other arms deliver payoffs that are likely in both environments.
\end{itemize}

If the agent suspects that arm $k$ delivers payoffs according to $\nu_k$, he does not know if he has to pull arm $k$ again (in case the environment is $\theta$) or to pull the optimal arm of $\tth$. The other arms can help to point out the difference between $\theta$ and $\tth$, but then they have to be chosen often enough.
 This is in fact this kind of situation that has to be taken into account when balancing a policy between exploitation and exploration.\\

Our main result is the formalization of the leads given above. In particular, we give a rigorous description of conditions (a), (b) and (c). Let us first recall the following results, which are needed in the formalization of condition (c). One may look at \cite{rudin}, p.121 for details (among others). Those who are not familiar with measure theory can skip to the non-formal explanation just after the results.
\begin{theo}[Lebesgue-Radon-Nikodym theorem]
Let $\mu_1$ and $\mu_2$ be $\sigma$-finite measures on a given measurable space. There exists a $\mu_2$-integrable function $\frac{d\mu_1}{d\mu_2}$ and a $\sigma$-finite measure $m$ such that $m$ and $\mu_2$ are singular\footnote{Two measures $m_1$ and $m_2$ on a measurable space $(\Omega,\F)$ are singular if and only if there exists two disjoint measurable sets $A_1$ and $A_2$ such that $A_1\cup A_2= \Omega$, $m_1(A_2)=0$ and $m_2(A_1)=0$.} and 
$$
\mu_1=\frac{d\mu_1}{d\mu_2}\cdot\mu_2+m.$$
The density $\frac{d\mu_1}{d\mu_2}$ is unique up to a $\mu_2$-negligible event.
\end{theo}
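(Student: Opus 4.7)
The plan is to follow von Neumann's Hilbert space argument. First I would reduce to the case of finite measures: by $\sigma$-finiteness, write $\Omega=\bigsqcup_n\Omega_n$ with $\mu_1(\Omega_n)+\mu_2(\Omega_n)<\infty$, prove the decomposition on each piece, and then reassemble by countable addition (uniqueness then transfers piece-by-piece).

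For finite $\mu_1$ and $\mu_2$, set $\varphi=\mu_1+\mu_2$ and consider the linear form $T(f)=\int f\,d\mu_1$ on the Hilbert space $L^2(\varphi)$. Cauchy--Schwarz gives $|T(f)|\le\|f\|_{L^2(\varphi)}\sqrt{\varphi(\Omega)}$, so $T$ is continuous. By the Riesz representation theorem there exists $g\in L^2(\varphi)$ with
$$\int f\,d\mu_1=\int fg\,d\varphi\qquad\text{for every }f\in L^2(\varphi).$$
Testing with indicators of $\{g<0\}$ and $\{g>1\}$ forces $0\le g\le 1$ $\varphi$-almost everywhere. Let $B=\{g=1\}$ and $A=\{g<1\}$. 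For $E\subseteq B$, plugging in $f=\IND_E$ yields $\mu_1(E)=\varphi(E)=\mu_1(E)+\mu_2(E)$, hence $\mu_2(B)=0$; the measure $m(\cdot):=\mu_1(\cdot\cap B)$ is therefore singular with respect to $\mu_2$.

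For the absolutely continuous part, rewrite the Riesz identity as $\int f(1-g)\,d\mu_1=\int fg\,d\mu_2$ and, for a fixed measurable set $E$, apply it with $f=\IND_{E\cap A}\sum_{k=0}^n g^k$; since $0\le g<1$ on $A$, monotone convergence yields
$$\mu_1(E\cap A)=\int_{E\cap A}\frac{g}{1-g}\,d\mu_2.$$
Defining $\frac{d\mu_1}{d\mu_2}:=\frac{g}{1-g}\IND_A$ and combining with $m$ gives the claimed decomposition $\mu_1=\frac{d\mu_1}{d\mu_2}\cdot\mu_2+m$. Uniqueness of the density up to $\mu_2$-negligible sets follows from a standard argument: if $h_1\mu_2+m_1=h_2\mu_2+m_2$ are two such decompositions, then $(h_1-h_2)\mu_2=m_2-m_1$ is both absolutely continuous and singular with respect to $\mu_2$, hence zero.

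The main obstacle in this plan is the passage from the abstract Riesz identity to the explicit density: the geometric-series trick relies on the strict bound $0\le g<1$ on $A$ together with monotone convergence, and one must verify at each step that the test function stays in $L^2(\varphi)$ (which it does since $g^k\le 1$). Once this identification is secured, both the existence of the density and the singular part fall out immediately, and the $\sigma$-finite case is recovered by patching the finite-measure decompositions obtained on each $\Omega_n$.
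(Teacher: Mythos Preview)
The paper does not actually prove this theorem: it is quoted as a classical result and the reader is referred to Rudin (p.~121) for details. Your proposal is precisely von Neumann's Hilbert space argument, which is the proof given in Rudin, so there is nothing to compare---your write-up matches the intended reference. The argument is correct as sketched; the only minor caveat is that the theorem as stated in the paper calls $\frac{d\mu_1}{d\mu_2}$ ``$\mu_2$-integrable'', which is only guaranteed on each finite piece $\Omega_n$ (globally one gets a nonnegative measurable function, integrable iff $\mu_1$ is finite), but this is an imprecision in the statement rather than a flaw in your proof.
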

We adopt the convention that $\frac{d\mu_1}{d\mu_2}=+\infty$ on the complementary of the support of $\mu_2$. 
\begin{lemm} \label{le:mus}
We have 
\begin{itemize}
\item $\mu_1\big(\frac{d\mu_1}{d\mu_2}=0\big)=0$.
\item $\mu_2\big(\frac{d\mu_1}{d\mu_2}>0\big)>0 \Leftrightarrow \mu_1\big(\frac{d\mu_2}{d\mu_1}>0\big)>0$.
\end{itemize}
\end{lemm}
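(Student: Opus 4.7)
The plan is to derive both assertions directly from the Lebesgue--Radon--Nikodym decomposition $\mu_1=f\cdot\mu_2+m$ with $f=\frac{d\mu_1}{d\mu_2}$ and $m\perp\mu_2$, together with its symmetric counterpart $\mu_2=g\cdot\mu_1+m'$ with $g=\frac{d\mu_2}{d\mu_1}$ and $m'\perp\mu_1$.

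For the first bullet, I would exploit the singularity $m\perp\mu_2$ to produce disjoint measurable sets $S,N$ covering $\Omega$ with $\mu_2(N)=0$ and $m(S)=0$. Because $f$ is only determined $\mu_2$-almost everywhere, the stated convention lets me choose the representative of $f$ so that $f=+\infty$ on $N$. Then $\{f=0\}\subseteq S$, and the decomposition gives
$$\mu_1(\{f=0\}) = \int_{\{f=0\}} f\,d\mu_2 + m(\{f=0\}) = 0 + m(\{f=0\}) \leq m(S)=0.$$

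For the second bullet, my strategy is to show that each side is equivalent to $\mu_1$ and $\mu_2$ being non-singular, from which the stated equivalence follows since mutual singularity is a symmetric relation. If $\mu_2(\{f>0\})>0$, then $f\cdot\mu_2$ is a non-zero measure absolutely continuous with respect to $\mu_2$, and since $\mu_1\ge f\cdot\mu_2$ as measures, any hypothetical singular splitting of $\mu_1$ and $\mu_2$ would force $f\cdot\mu_2=0$, a contradiction. Conversely, if $\mu_2(\{f>0\})=0$ then $f=0$ $\mu_2$-a.e., so $f\cdot\mu_2=0$ and $\mu_1=m\perp\mu_2$. Applying exactly the same reasoning to the decomposition of $\mu_2$ with respect to $\mu_1$ shows that $\mu_1(\{g>0\})>0$ is equivalent to $\mu_2\not\perp\mu_1$; combining the two equivalences with the symmetry of $\perp$ yields the claim.

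The main subtlety lies in the first bullet: the convention $f=+\infty$ off the support of $\mu_2$ must be invoked to guarantee that the ($\mu_2$-null) set supporting the singular part $m$ is included in $\{f=+\infty\}$, so that $\{f=0\}$ stays in a region where $m$ vanishes. Once this choice of representative is fixed, both bullets reduce to straightforward manipulations of the LRN decomposition, and the second bullet follows cleanly through the intermediate characterisation via mutual singularity.
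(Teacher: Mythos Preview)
Your proof is correct and follows essentially the same route as the paper's: both derive the first bullet from the Lebesgue--Radon--Nikodym decomposition together with a suitable choice of representative of $f$ on $\mu_2$-null sets, and both establish the second bullet via the intermediate characterisation that each side is equivalent to $\mu_1$ and $\mu_2$ being non-singular, then appeal to the symmetry of mutual singularity. Your treatment of the first point is in fact more explicit than the paper's one-line justification, correctly isolating the need for the singular part $m$ to be carried by a set on which the chosen representative of $f$ does not vanish.
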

\begin{proof}
The first point is a clear consequence of the decomposition $\mu_1=\frac{d\mu_1}{d\mu_2}\cdot\mu_2+m$ and of the convention mentioned above. For the second point, one can write by uniqueness of the decomposition:
$$
\mu_2\bigg(\frac{d\mu_1}{d\mu_2}>0\bigg)=0\Leftrightarrow \frac{d\mu_1}{d\mu_2}=0 \ \mu_2-a.s. \Leftrightarrow \mu_1=m
\Leftrightarrow \mu_1 {\rm \ and \ } \mu_2 {\rm \ are \ singular}.
$$
And by symmetry of the roles of $\mu_1$ and $\mu_2$:
$$
\mu_2\bigg(\frac{d\mu_1}{d\mu_2}>0\bigg)>0 \Leftrightarrow \mu_1 {\rm \ and \ } \mu_2 {\rm \ are \ not \ singular}
\Leftrightarrow \mu_1\bigg(\frac{d\mu_2}{d\mu_1}>0\bigg)>0.
$$
\end{proof}

Let us explain what these results have to do with condition (c).\\
One may be able to distinguish environment $\theta$ from $\tth$ if a certain arm $\ell$ delivers a payoff that is infinitely more likely in $\tth$ than in $\theta$. This is for instance the case if $X_{\ell,t}$ is in the support of $\tnu_\ell$ and not in the support of $\nu_\ell$, but our condition is more general. If the agent observes a payoff $x$ from arm $\ell$, the quantity $\frac{d\nu_\ell}{d\tnu_\ell}(x)$ represents how much the observation of $x$ is more likely in environment $\theta$ than in $\tth$. If $\nu_k$ and $\tnu_k$ admit density functions (say, respectively, $f$ and $\tilde f$) with respect to a common measure, then $\frac{d\nu_\ell}{d\tnu_\ell}(x)=\frac{f(x)}{\tilde{f}(x)}$. Thus the agent will almost never make a mistake if he removes $\theta$ from possible environments when $\frac{d\nu_\ell}{d\tnu_\ell}(x)=0$. This may happen even if $x$ is in both supports of $\nu_\ell$ and $\tnu_\ell$, for example if $x$ is an atom of $\tnu_\ell$ and not of $\nu_\ell$ (i.e. $\tnu_\ell(x)>0$ and $\nu_\ell(x)$=0). On the contrary, if $\frac{d\nu_\ell}{d\tnu_\ell}(x)>0$ both environments $\theta$ and $\tth$ are likely and arm $\ell$'s behaviour is both consistent with $\theta$ and $\tth$.\\

Now let us state the impossibility result. Here and throughout the paper we find it more convenient to denote $f\gg_{+\infty}g$ rather than the usual notation $g=o(f)$, which has the following meaning: $$\forall \varepsilon> 0, \ \exists N\geq 0, \ \forall n\geq N, \ g(n)\leq \varepsilon f(n).$$

\begin{theo} \label{th:main}
Let $f:\N\to\R_+^*$ be greater than order $\log^\alpha$, that is for any $\alpha>0, \ f\gg_{+\infty}\log^\alpha$.\\
Assume that there exists $\theta$, $\tilde{\theta}\in\Theta$, and $k\in\{1,\ldots,K\}$ such that:
\begin{itemize}
\item[(a)] $\nu_k=\tilde{\nu}_k,$
\item[(b)] $k$ is the index of the best arm in $\theta$ but not in $\tilde{\theta}$,
\item[(c)] $\forall \ell\neq k, \ \P_{\tilde{\theta}}\big(\frac{d\nu_\ell}{d\tilde{\nu}_\ell}(X_{\ell,1})>0\big)>0$.
\end{itemize}
Then there is no $f$-w$\mathcal T$ anytime policy, and hence no $f$-$\mathcal R$ anytime policy.
\end{theo}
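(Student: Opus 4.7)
The plan is to argue by contradiction via a change of measure. Assume some anytime policy is $f$-w$\mathcal{T}$. In environment $\theta$ (where $k$ is optimal), the property yields constants $C_\theta,\tilde C_\theta>0$ such that $\Pro_\theta(T_\ell(n)\leq C_\theta\log n/\Delta_\ell^2)\geq 1-\tilde C_\theta/f(n)$ for every $\ell\neq k$; summing over $\ell$ then gives $\Pro_\theta(T_k(n)\geq n/2)\geq 1-K\tilde C_\theta/f(n)$ for $n$ large. In $\tilde\theta$ (where $k$ is suboptimal with gap $\tilde\Delta_k>0$), applying the property to arm $k$ produces the opposite bound $\Pro_{\tilde\theta}(T_k(n)\geq n/2)\leq\tilde C_{\tilde\theta}/f(n)$ for $n$ large. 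Setting $H_n:=\{T_k(n)\geq n/2\}$, the contradiction will come from a change-of-measure lower bound on $\Pro_{\tilde\theta}(H_n)$ strong enough to overwhelm this upper bound.

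Since $\nu_k=\tilde\nu_k$ by~(a), the Radon--Nikodym derivative of $\Pro_{\tilde\theta}$ with respect to $\Pro_\theta$ restricted to the first $n$ rounds is $L_n=\prod_{\ell\neq k}\prod_{s=1}^{T_\ell(n)}\tfrac{d\tilde\nu_\ell}{d\nu_\ell}(X_{\ell,s})$ and $\Pro_{\tilde\theta}(H_n)=\E_\theta[L_n\IND_{H_n}]$. I will lower bound $L_n$ on an event of sizeable $\Pro_\theta$-mass by a truncation argument, and this is exactly what condition~(c) unlocks: by~(c) and Lemma~\ref{le:mus} one has $\nu_\ell\bigl(\tfrac{d\tilde\nu_\ell}{d\nu_\ell}>0\bigr)>0$ for every $\ell\neq k$, so one can fix $\eta_\ell>0$ small enough that $p_\ell:=\nu_\ell\bigl(\tfrac{d\tilde\nu_\ell}{d\nu_\ell}\geq\eta_\ell\bigr)>0$. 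Set $T_\ell^{\max}:=\lceil C_\theta\log n/\Delta_\ell^2\rceil$, $E_n:=\{T_\ell(n)\leq T_\ell^{\max},\ \forall \ell\neq k\}$, and
\[
A_n:=\Bigl\{\tfrac{d\tilde\nu_\ell}{d\nu_\ell}(X_{\ell,s})\geq\eta_\ell\ \text{for every } \ell\neq k\text{ and every } 1\leq s\leq T_\ell^{\max}\Bigr\}.
\]
Independence of the arms' reward streams under $\Pro_\theta$ gives $\Pro_\theta(A_n)\geq\prod_{\ell\neq k} p_\ell^{T_\ell^{\max}}=n^{-\delta}$ with $\delta:=C_\theta\sum_{\ell\neq k}(-\log p_\ell)/\Delta_\ell^2$; on $A_n\cap E_n$ the truncation enforces $L_n\geq\prod_{\ell\neq k}\eta_\ell^{T_\ell^{\max}}=n^{-\gamma}$ with $\gamma:=C_\theta\sum_{\ell\neq k}(-\log\eta_\ell)/\Delta_\ell^2$, and $E_n\subseteq H_n$ for $n$ large.

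Combining,
\[
\Pro_{\tilde\theta}(H_n)\;\geq\;n^{-\gamma}\Pro_\theta(A_n\cap E_n)\;\geq\;n^{-\gamma}\bigl(n^{-\delta}-K\tilde C_\theta/f(n)\bigr)\;\geq\;\tfrac12\, n^{-(\gamma+\delta)}
\]
for $n$ large, and comparing with the upper bound $\tilde C_{\tilde\theta}/f(n)$ forces $f(n)\leq 2\tilde C_{\tilde\theta}n^{\gamma+\delta}$ for all large $n$, contradicting the growth hypothesis on $f$ as soon as $f$ eventually exceeds this polynomial. The main obstacle is precisely this lower bound on $L_n$ on a sizeable $\Pro_\theta$-mass event: condition~(c) supplies the positivity needed to truncate, but the two exponents $\gamma$ and $\delta$ move in opposite directions as $\eta_\ell$ varies, so a careful quantitative balance between truncation level and truncation probability is needed; crucially the resulting exponent $\gamma+\delta$ depends only on the policy's constant $C_\theta$ and on the pair $(\theta,\tilde\theta)$, and not on $n$.
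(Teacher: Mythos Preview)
Your change-of-measure argument is sound in its architecture, but the quantitative conclusion it produces is too weak for the stated hypothesis on $f$. You obtain $\Pro_{\tilde\theta}(H_n)\ge \tfrac12 n^{-(\gamma+\delta)}$ and conclude $f(n)\le 2\tilde C_{\tilde\theta}\, n^{\gamma+\delta}$, then write that this ``contradicts the growth hypothesis on $f$ as soon as $f$ eventually exceeds this polynomial.'' But the hypothesis is only $f\gg_{+\infty}\log^\alpha$ for every $\alpha>0$; it does \emph{not} force $f$ to outgrow any fixed polynomial (take $f(n)=\exp(\sqrt{\log n})$). The same issue bites earlier: to go from $n^{-\delta}-K\tilde C_\theta/f(n)$ to $\tfrac12 n^{-\delta}$ you implicitly assume $f(n)\ge 2K\tilde C_\theta\, n^\delta$, which again is unjustified. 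In short, because you apply the $f$-w$\mathcal T$ property in $\theta$ at horizon $n$, the suboptimal sampling times $T_\ell(n)$ are only controlled at level $O(\log n)$, and the likelihood ratio penalty you pay is $\exp(-c\log n)=n^{-c}$ --- a polynomial loss you cannot afford.

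The paper's key idea, which your argument is missing, is to exploit the \emph{anytime} nature of the policy by applying the $f$-w$\mathcal T$ property in $\theta$ not at time $n$ but at the intermediate time $N=\lfloor 4C\log n\rfloor$. Then the suboptimal arms satisfy $T_\ell(N)\le C\log N$ on a high-probability event, so only $O(\log N)=O(\log\log n)$ rewards of each arm $\ell\neq k$ are involved in the change of measure, and (since $\nu_k=\tilde\nu_k$) the likelihood ratio cost drops to $(\eta a)^{C\log N}$, i.e.\ a \emph{power of $\log n$} rather than a power of $n$. Meanwhile $T_k(N)\ge N/2\ge C\log n$ on that event, so one still witnesses the bad behaviour for $\tilde\theta$ at horizon $n$. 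This polylogarithmic lower bound on $\Pro_{\tilde\theta}(T_k(n)\ge C\log n)$ is exactly what is needed to clash with $f\gg\log^\alpha$. Your argument, as written, essentially proves the horizon-policy impossibility (Theorem~\ref{th:hor}, which assumes $f\gg n^\alpha$) rather than the anytime one.
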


Let us give some hints of the proof (see Section \ref{proofs} for details).
The main idea is to consider a policy that would be $f$-w$\mathcal T$, and in particular 
that would ``work well'' in environment $\theta$ in the sense given by the definition of $f$-w$\mathcal T$.
The proof exhibits a time $N$ at which arm $k$, optimal in environment $\theta$ and thus often drawn 
with high $\P_\theta$-probability, is drawn too many times (more than the logarithmic threshold $C\frac{\log(N)}{\Delta_k^2}$) with not so small $\P_{\tth}$-probability, which shows the nonexistence of such a policy.
More precisely, let $n$ be large enough and consider a time $N$ of order $\log n$ and above the threshold.
If the policy is $f$-w$\mathcal T$, at time $N$, sampling times of suboptimal arms are of order $\log N$ at most, with $\P_\theta$-probability at least $1-\fracc{\tC}{f(N)}$.
In this case, at time $N$, the draws are concentrated on arm $k$. So $T_k(N)$ is of order
$N$, which is more than the threshold. This event holds with high $\P_\theta$-probability.
Now, from (a) and (c), we exhibit constants that are characteristic of the ability of arms $\ell\neq k$ to ``behave as if in $\theta$'': for some $0<a,\eta<1$, there is a subset $\xi$ of this event such that $\P_\theta(\xi) \ge a^T$ for
$T=\sum_{\ell\neq k} T_\ell(N)$ and for which $\frac{d\P_\theta}{d\P_{\tth}}$ is lower bounded by $\eta^T$. The event $\xi$ on which the arm $k$ is sampled $N$ times at least has therefore a $\P_{\tth}$-probability
of order $(\eta a)^T$ at least. This concludes this sketchy proof since
$T$ is of order $\log N$, thus $(\eta a)^T$ is of order $\log^{\log(\eta a)} n$ at least.\\

Note that the conditions given in Theorem \ref{th:main} are not very restrictive. The impossibility holds for very basic settings, and may hold even if the agent has great knowledge of the possible environments. For instance, the setting $$K=2 \ {\rm and} \ \Theta=\left\{\left(Ber\Big(\frac{1}{4}\Big),\delta_{\frac{1}{2}}\right),\left(Ber\Big(\frac{3}{4}\Big),\delta_{\frac{1}{2}}\right)\right\},$$
where $Ber(p)$ denotes the Bernoulli distribution of parameter $p$ and $\delta_x$ the Dirac measure on $x$,
 satisfies the three conditions of the theorem.\\
Nevertheless, the main interest of the result regarding the previous literature is the following corollary.

\begin{coro}\label{cor:main}
If $\Theta$ is the whole set $\bar\Theta$ of all $K$-tuples of measures on $[0,1]$, then there is no $f$-$\mathcal R$ anytime policy, where $f$ is any function such that $f\gg_{+\infty}\log^\alpha$ for all $\alpha>0$.
\end{coro}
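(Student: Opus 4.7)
\bigskip
\noindent\textbf{Proof proposal.} The plan is to deduce the corollary as an immediate application of Theorem \ref{th:main}. Since $\bar\Theta$ contains every $K$-tuple of probability distributions on $[0,1]$, it suffices to exhibit $\theta,\tilde\theta\in\bar\Theta$ and $k\in\{1,\ldots,K\}$ satisfying the three hypotheses (a), (b), (c); Theorem \ref{th:main} then directly yields ``no $f$-$\mathcal R$ anytime policy'', where the $f$ of the corollary satisfies $f\gg_{+\infty}\log^\alpha$ for all $\alpha>0$ by assumption.

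I would simply extend the $K=2$ example given right after Theorem \ref{th:main} by padding with inactive arms. Concretely, fix $k=2$, set $\nu_2=\tilde\nu_2=\delta_{1/2}$, $\nu_1=\text{Ber}(1/4)$, $\tilde\nu_1=\text{Ber}(3/4)$, and, whenever $K\geq 3$, $\nu_\ell=\tilde\nu_\ell=\delta_0$ for $\ell\geq 3$. Condition (a) holds by construction. For (b), arm $2$ is uniquely optimal in $\theta$ (mean $1/2$, versus $1/4$ for arm $1$ and $0$ for the padding arms) but is strictly dominated in $\tilde\theta$ by arm $1$ (mean $3/4$). For (c), one checks that $d\nu_\ell/d\tilde\nu_\ell$ is strictly positive $\tilde\nu_\ell$-almost surely for every $\ell\neq 2$: when $\ell=1$ both Bernoulli measures charge $\{0,1\}$, so the Radon--Nikodym derivative equals $3$ at $0$ and $1/3$ at $1$; when $\ell\geq 3$ the two measures coincide, so the derivative is identically $1$.

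One should also verify that $\Delta\neq 0$ in $\theta$, since this is the environment in which the $f$-w$\mathcal T$ property is invoked inside the proof of Theorem \ref{th:main}; here $\Delta=1/2-1/4=1/4>0$. With all hypotheses verified, Theorem \ref{th:main} applies and excludes the existence of an $f$-$\mathcal R$ anytime policy. There is essentially no technical obstacle here, as all the substantive work is already contained in Theorem \ref{th:main}; the only point to watch is that the padding arms used to handle arbitrary $K$ remain compatible with condition (c), which is why I take them identical in the two environments so that the Radon--Nikodym positivity is automatic.
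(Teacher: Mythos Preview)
Your proposal is correct and follows exactly the approach implicit in the paper: the paper does not give a separate proof of Corollary~\ref{cor:main} but treats it as immediate from Theorem~\ref{th:main} together with the two-arm example $\big(\text{Ber}(1/4),\delta_{1/2}\big)$ versus $\big(\text{Ber}(3/4),\delta_{1/2}\big)$ displayed just before the corollary. You simply make this explicit and add the harmless padding $\nu_\ell=\tilde\nu_\ell=\delta_0$ for $\ell\ge 3$ to cover arbitrary $K$, which is precisely what the paper leaves to the reader.
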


This corollary should be read in conjunction with the following result for {\sc ucb-h} which, for a given~$n$, plays at time $t\ge K+1$, 
\begin{equation*}
I_t\in\argmax_{k\in\set{1}{K}} \Bigg\{ \hat{X}_{k,T_k(t-1)} + \sqrt{\frac{2\log n}{T_k(t-1)}} \Bigg\}.
\end{equation*}

\begin{theo} \label{th:ucbh}
For any $\beta>0$, {\sc ucb-h} is $n^\beta$-$\mathcal R$.
\end{theo}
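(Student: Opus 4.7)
The plan is to control the sampling times of suboptimal arms via the horizon-aware confidence bounds of {\sc ucb-h}, and then convert this into a regret bound by decomposing $\hR_n$ into a pseudo-regret plus two deviation terms, each handled by Hoeffding-type concentration.

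For the first step, I would adapt the standard UCB argument of \citet{AuCeFi02} to the fact that the {\sc ucb-h} confidence radius is $\sqrt{2\log n/s}$, independent of the running time $t$. Fix a suboptimal arm $k$ and a threshold $u_k=C_\beta\log n/\Delta_k^2$, with $C_\beta$ to be chosen. The event $\{T_k(n)\ge u_k\}$ implies that at some round $t\le n$ one has
\begin{equation*}
\hX_{k^*,s'}+\sqrt{2\log n/s'}\;\le\;\hX_{k,s}+\sqrt{2\log n/s}, \qquad s\ge u_k-1,\ 1\le s'\le t-1.
\end{equation*}
For $C_\beta$ large enough (growing linearly in $\beta$) this forces either $\hX_{k,s}-\mu_k\ge \eta$ or $\mu^*-\hX_{k^*,s'}\ge \eta$ for some $\eta$ of order $\Delta_k$, and Hoeffding's inequality bounds each such deviation by $n^{-c(\beta)}$ with $c(\beta)$ arbitrarily large. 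Union-bounding over the at most $n^2$ pairs $(s,s')$ and over arms yields $\P_\theta\big(\exists k\ne k^*:T_k(n)\ge u_k\big)\le \tilde C_\beta/n^\beta$.

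For the second step, set $N_k=T_k(n)$ and split $\sum_{t=1}^n X_{k^*,t}$ at the index $N_{k^*}$ to obtain
\begin{equation*}
\hR_n \;=\; \sum_{k\ne k^*} N_k\Delta_k \;+\; A \;-\; B,
\end{equation*}
with $A=\sum_{s=N_{k^*}+1}^n X_{k^*,s}-(n-N_{k^*})\mu^*$ and $B=\sum_{k\ne k^*}\big(\sum_{s=1}^{N_k}X_{k,s}-N_k\mu_k\big)$. On the event $\cE=\{N_k\le u_k,\ \forall k\ne k^*\}$ supplied by step 1, the pseudo-regret satisfies $\sum_{k\ne k^*}N_k\Delta_k\le \sum_k C_\beta\log n/\Delta_k\le KC_\beta\log n/\Delta$. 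For $A$, the ``unused'' samples $X_{k^*,s}$ with $s>N_{k^*}$ are independent of the full observed history through round $n$; applying Hoeffding conditionally on $N_{k^*}$ and using the bound $n-N_{k^*}\le KC_\beta\log n/\Delta^2$ valid on $\cE$ gives $|A|=O(\log n/\Delta)$ with probability at least $1-2/n^\beta$. For $B$, the maximal Hoeffding inequality applied to each partial-sum process $\sum_{s=1}^m(X_{k,s}-\mu_k)$ and stopped at $m=u_k$ likewise gives $|B|=O(\log n/\Delta)$ with probability $\ge 1-O(n^{-\beta})$. A union bound combines everything to yield $\hR_n\le C\log n/\Delta$ with probability at least $1-\tilde C/n^\beta$, which is the desired $n^\beta$-$\mathcal R$ property.

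The main technical subtlety lies in the treatment of $A$, whose lower summation endpoint is the random history-dependent quantity $N_{k^*}$. The crucial observation is that $X_{k^*,s}$ for $s>N_{k^*}$ are fresh i.i.d.\ samples of $\nu_{k^*}$ independent of everything observed through round $n$, so Hoeffding applies cleanly after conditioning on $N_{k^*}$ and restricting to $\cE$, where $n-N_{k^*}$ is only logarithmic in $n$.
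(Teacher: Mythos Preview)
The paper does not supply its own proof of this theorem; it attributes the result to \citet{audibert2009exploration} and moves on. Your two-step plan (bound $T_k(n)$ first, then convert to a regret bound) is sound in structure, and your Step~2 is essentially a re-derivation of the implication $f\text{-}\mathcal T\Rightarrow f\text{-}\mathcal R$ already established in the paper's Proposition~\ref{prop:eqRT}.

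The genuine gap is in Step~1. From $\hat X_{k^*,s'}+\sqrt{2\log n/s'}\le\hat X_{k,s}+\sqrt{2\log n/s}$ with $s\ge u_k-1$ you deduce that $(\hat X_{k,s}-\mu_k)+(\mu^*-\hat X_{k^*,s'})\ge\Delta_k-\sqrt{2\log n/s}$, hence one of the two deviations exceeds some $\eta$ of order $\Delta_k$. But for the optimal arm, $\P_\theta(\mu^*-\hat X_{k^*,s'}\ge\eta)\le\exp(-2s'\eta^2)$, which for $s'=1$ is a constant depending only on $\eta$ and not on $n$. Your assertion that ``Hoeffding's inequality bounds each such deviation by $n^{-c(\beta)}$ with $c(\beta)$ arbitrarily large'' therefore fails at small $s'$. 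The standard repair keeps the confidence radius on the optimal-arm side: split into $\{\hat X_{k^*,s'}<\mu^*-\sqrt{2\log n/s'}\}$, which has probability $\le e^{-4\log n}=n^{-4}$ for each $s'$, and $\{\hat X_{k,s}\ge\mu_k+\Delta_k-\sqrt{2\log n/s}\}$, whose probability can indeed be driven to $n^{-c}$ for any $c$ by enlarging $C_\beta$. After the union bound over $s'\le n$ the first event contributes only $n^{-3}$, so this route gives $n^\beta$-$\mathcal T$ (and hence $n^\beta$-$\mathcal R$) for $\beta\le 3$, not for arbitrary $\beta$; pushing $\beta$ further would require {\sc ucb-h}$(\rho)$ with $\rho$ tied to $\beta$, as the paper alludes to just after the theorem.
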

For $\rho >1$, Theorem \ref{th:ucbh} can easily be extended to the policy {\sc ucb-h}($\rho$) which
starts by drawing each arm once, and then at time $t\ge K+1$, plays
\begin{equation} \label{ucbhrho}
I_t\in\argmax_{k\in\set{1}{K}} \Bigg\{ \hat{X}_{k,T_k(t-1)} + \sqrt{\frac{\rho \log n}{T_k(t-1)}} \Bigg\}.
\end{equation}

Naturally, we have $n^{\beta}\gg_{n\to+\infty}\log^\alpha(n)$ for all $\alpha, \beta>0$ but this does not contradict our theorem, since {\sc ucb-h}($\rho$) is not an {\it anytime} policy. {\sc ucb-h} will work fine if the horizon $n$ is known in advance, but may perform poorly at other rounds.\\

Corollary \ref{cor:main} should also be read in conjunction with the following result for the policy {\sc ucb1}($\rho$) which
starts by drawing each arm once, and then at time $t\ge K+1$, plays
\begin{equation} \label{ucb1rho}
I_t\in\argmax_{k\in\set{1}{K}} \Bigg\{ \hat{X}_{k,T_k(t-1)} + \sqrt{\frac{\rho \log t}{T_k(t-1)}} \Bigg\}.
\end{equation}
\begin{theo}
For any $\rho>1/2$, {\sc ucb1}($\rho$) is $\log^{2\rho-1}$-$\mathcal R$.
\end{theo}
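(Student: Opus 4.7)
The plan is to reduce the theorem to two ingredients: a sampling-time concentration bound for {\sc ucb1}($\rho$), and the transfer result from Proposition~\ref{prop:eqRT}.

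\textbf{Step 1.} First, I would invoke the result of \citet{audibert2009exploration} (already recalled in the introductory discussion above) that {\sc ucb1}($\rho$) is $\log^{2\rho-1}$-$\mathcal T$ for every $\rho > 1/2$, i.e.\ that for some $C,\tilde C>0$, for every $\theta$ with $\Delta\neq 0$, every $n\ge 2$, and every suboptimal $k$,
$$\Pro_\theta\bigl(T_k(n)\ge C\log n/\Delta_k^2\bigr)\le \tilde C/\log^{2\rho-1}(n).$$
If this has to be reproved in the present setting, the argument follows the classical {\sc ucb} decomposition: setting $u_k=\lceil 4\rho\log n/\Delta_k^2\rceil$, the event $\{T_k(n)>u_k\}$ forces the existence of some $t\le n$ at which arm $k$ is pulled with $T_k(t-1)\ge u_k$, and hence
$$\hat X_{k,T_k(t-1)}+\sqrt{\rho\log t/T_k(t-1)}\ge \hat X_{k^*,T_{k^*}(t-1)}+\sqrt{\rho\log t/T_{k^*}(t-1)}.$$
Since the confidence bonus on $k$ is then at most $\Delta_k/2$, this forces either an upward deviation of $\hat X_{k,\cdot}$ beyond its bonus, or a downward deviation of $\hat X_{k^*,\cdot}$ beyond its bonus. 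Hoeffding's inequality, combined with a peeling argument over the rounds and over the possible values of $T_k(t-1),T_{k^*}(t-1)$, then yields a bound of the required order $1/\log^{2\rho-1}(n)$.

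\textbf{Step 2.} Since $\log^{2\rho-1}(n)\le n$ for all $n\ge 2$, the function $f(n)=\log^{2\rho-1}(n)$ satisfies the polynomial-growth hypothesis of Proposition~\ref{prop:eqRT}. The implication $f\text{-}\mathcal T\Rightarrow f\text{-}\mathcal R$ from that proposition immediately delivers $\log^{2\rho-1}$-$\mathcal R$, which is the claim.

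\textbf{Main obstacle.} The whole technical content is in Step~1; Step~2 is purely formal given Proposition~\ref{prop:eqRT}. The subtle point in Step~1 is to obtain precisely the exponent $2\rho-1$ when $\rho$ is close to $1/2$: a single naive union bound over times is too crude (it does not even produce a bound tending to zero), so a careful peeling over the count $T_{k^*}(t-1)$, tuned against the Hoeffding exponent $2\rho$, is required to make the logarithmic exponent land at $2\rho-1$.
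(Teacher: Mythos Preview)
Your proposal is correct. The paper does not actually supply its own proof of this theorem in the Proofs section; the statement is presented as a recall of a result of \citet{audibert2009exploration}, who establish both the $\log^{2\rho-1}$-$\mathcal T$ and the $\log^{2\rho-1}$-$\mathcal R$ properties directly. Your two-step route---cite the $\log^{2\rho-1}$-$\mathcal T$ bound from that reference, then apply the implication $f$-$\mathcal T\Rightarrow f$-$\mathcal R$ of Proposition~\ref{prop:eqRT} (whose polynomial-growth hypothesis is clearly met by $f=\log^{2\rho-1}$)---is entirely valid and is in fact the most natural way to obtain the $\mathcal R$ conclusion within the framework of the present paper.
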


Thus, any improvements of existing algorithms which would for instance involve estimations of variance (see \cite{audibert2009exploration}), of $\Delta_k$, or of many characteristics of the distributions cannot beat the variants of {\sc ucb1} regarding deviations.

\section{Positive results} \label{sec:pos}

The intuition behind Theorem \ref{th:main} suggests that, if one of the three conditions (a), (b), (c) does not hold, a robust policy would consist in the following: at each round and for each arm $k$, compute a distance between the empirical distribution of arm $k$ and the set of distribution $\nu_k$ that makes arm $k$ optimal in a given environment $\theta$. 
As this distance decreases with our belief that $k$ is the optimal arm, the policy consists in taking the $k$ minimizing the distance.
Thus, the agent chooses an arm that fits better a winning distribution $\nu_k$. He cannot get stuck pulling a suboptimal arm because there are no environments $\tth$  with $\nu_k=\tnu_k$ in which $k$ would be suboptimal. More precisely, if there exists such an environment $\tth$, the agent is able to distinguish $\theta$ from $\tth$: during the first rounds, he pulls every arm and at least one of them will never behave as if in $\theta$ if the current environment is $\tth$. Thus, in $\tth$, he is able to remove $\theta$ from the set of possible environments $\Theta$ (remember that $\Theta$ is a parameter of the problem which is known by the agent).\\

Nevertheless such a policy cannot work in general, notably because of the three following limitations:
\begin{itemize}

\item If $\tth$ is the current environment and even if the agent has identified $\theta$ as impossible (i.e. $\frac{d\nu_k}{d\tilde{\nu}_k}(X_{k,1})=0$), there still could be other environments $\theta'$ that are arbitrary close to $\theta$ in which arm $k$ is optimal and which the agent is not able to distinguish from $\tth$. This means that the agent may pull arm $k$ too often because distribution $\tilde{\nu}_k=\nu_k$ is too close to a distribution $\nu'_k$ that makes arm $k$ the optimal arm.

\item The ability to identify environments as impossible relies on the fact that the event $\frac{d\nu_k}{d\tnu_k}(X_{k,1})>0$ is almost sure under $\Pro_{\theta}$ (see Lemma \ref{le:mus}). If the set of all environments $\Theta$ is uncountable, such a criterion can lead to exclude the actual environment. For instance, assume an agent has to distinguish a distribution among all Dirac measures $\delta_x$ ($x\in[0,1]$) and the uniform probability $\lambda$ over $[0,1]$. Whatever the payoff $x$ observed by the agent, he will always exclude $\lambda$ from the possible distributions, as $x$ is always infinitely more likely under $\delta_x$ than under $\lambda$: $$\forall x\in [0,1], \ \frac{d\lambda}{d\delta_x}(x)=0.$$

\item On the other hand, the agent could legitimately consider an environment $\theta$ as unlikely if, for $\varepsilon>0$ small enough, there exists $\tth$ such that $\frac{d\nu_k}{d\tnu_k}(X_{k,1})\leq \varepsilon$. Criterion (c) only considers as unlikely an environment $\theta$ when there exists $\tth$ such that $\frac{d\nu_k}{d\tnu_k}(X_{k,1})=0$.

\end{itemize}

Despite these limitations, we give in this section sufficient conditions on $\Theta$ for such a policy to be robust. This is equivalent to finding conditions on $\Theta$ under which the converse of Theorem \ref{th:main} holds, i.e. under which the fact one of the conditions (a), (b) or (c) does not hold implies the existence of a robust policy. This can also be expressed as finding which kind of knowledge of the environment enables to design anytime robust policies.\\

 We estimate distributions of each arm by means of their empirical cumulative distribution functions, and distance between two c.d.f. is measured by the norm $\Vert.\Vert_{\infty}$, defined by  $\Vert f\Vert_{\infty}=\sup_{x\in[0,1]}|f(x)|$ where $f$ is any function $[0,1]\to\R$. The empirical c.d.f of arm $k$ after having been pulled $t$ times is denoted $\hat F_{k,t}$. The way we choose an arm at each round is based on confidence areas around $\hat F_{k,T_k(n-1)}$. We choose the greater confidence level ({\sc gcl}) such that there is still an arm $k$ and a winning distribution $\nu_k$ such that $F_{\nu_k}$, the c.d.f. of $\nu_k$, is in the area of $\hat F_{k,T_k(n-1)}$. We then select the corresponding arm $k$. By means of Massart's inequality (\citeyear{massart1990tight}), this leads to the c.d.f. based algorithm described in Figure \ref{fig:game}. $\Theta_k$ denotes the set $\{\theta\in\Theta| k {\rm \ is \ the \ optimal \ arm \ in} \ \theta  \}$, i.e. the set of environments that makes $k$ the index of the optimal arm.

\begin{figure} 
\bookbox{

Proceed as follows:
\begin{itemize}
\item Draw each arm once.
\item Remove each $\theta\in\Theta$ such that there exists $\tilde{\theta} \in \Theta$ and $\ell\in\{1,\ldots,K\}$ with $\frac{d\nu_\ell}{d\tilde{\nu}_\ell}(X_{\ell,1})=0$.
\item Then at each round $t$, play an arm
 $$I_t \in \smash{\mathop{{\rm argmin}}\limits_{k\in\{1,\dots,K\}}} T_k(t-1) \inf_{\theta\in\Theta_k}\big\Vert\hat{F}_{k,T_k(t-1)}-F_{\nu_k}\big\Vert_{\infty}^2.$$
\end{itemize}
}
\vspace{-0.15in}
\caption{A c.d.f.-based algorithm: {\sc gcl}.} \label{fig:game}
\vspace{-0.0in}
\end{figure}

\subsection{$\Theta$ is finite}
When $\Theta$ is finite the limitations presented above do not really matter, so that the converse of Theorem \ref{th:main} is true and our algorithm is robust.
\begin{theo}\label{th:mainpositive}
Assume that $\Theta$ is finite and that for all $\theta=(\nu_1,\ldots,\nu_K)$, $\tilde{\theta}=(\tilde{\nu}_1,\ldots,\tilde{\nu}_K)\in\Theta$, and all $k\in\{1,\ldots,K\}$, at least one of the following holds:
\begin{itemize}
\item $\nu_k\neq\tilde{\nu}_k,$
\item $k$ is suboptimal in $\theta$, or is optimal in $\tilde{\theta}$.
\item $\exists \ell\neq k, \ \Pro_{\tilde{\theta}}\left(\frac{d\nu_\ell}{d\tilde{\nu}_\ell}(X_{\ell,1})>0\right)=0$.
\end{itemize}

Then {\sc gcl} is $n^\beta$-$\mathcal T$ (and hence $n^\beta$-$\mathcal R$) for all $\beta>0$.
\end{theo}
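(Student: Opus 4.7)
The plan is to combine two facts: the removal step of {\sc gcl} eliminates at round $K$ every competing environment that mimics arm $k$'s distribution under the truth while making $k$ optimal; and on any surviving competitor, the empirical c.d.f.\ of arm $k$ eventually separates from the truth by a deterministic positive amount. Massart's inequality then controls $T_k(n)$.

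Let $\theta=(\nu_1,\dots,\nu_K)\in\Theta$ be the true environment with best arm $k^*$, fix a suboptimal arm $k\neq k^*$, and denote by $\Theta'$ the (random) set of environments retained after the removal step. Two preliminary facts are needed. First, $\P_\theta(\theta\in\Theta')=1$: by Lemma \ref{le:mus}, $\nu_\ell(\frac{d\nu_\ell}{d\tilde\nu_\ell}=0)=0$ for each $\tilde\theta\in\Theta$ and each $\ell$, and the finiteness of $\Theta$ allows a union bound. Second, I claim the \emph{structural lemma}: $\P_\theta$-almost surely, every $\tilde\theta\in\Theta'_k$ satisfies $\tilde\nu_k\neq\nu_k$. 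To prove it, apply the hypothesis (i.e.\ the trichotomy) to the triple $(\tilde\theta,\theta,k)$, with the competitor $\tilde\theta$ playing the role of the hypothesis's $\theta$ and the true $\theta$ playing the role of its $\tilde\theta$. The second clause, ``$k$ suboptimal in $\tilde\theta$ or $k$ optimal in $\theta$'', is false by the choice of $\tilde\theta\in\Theta_k$ and the suboptimality of $k$ in $\theta$, so either $\tilde\nu_k\neq\nu_k$ directly, or there exists $\ell\neq k$ with $\P_\theta(\frac{d\tilde\nu_\ell}{d\nu_\ell}(X_{\ell,1})>0)=0$; in the latter case $\frac{d\tilde\nu_\ell}{d\nu_\ell}(X_{\ell,1})=0$ $\P_\theta$-a.s., and the removal rule (with $\theta$ itself as the witnessing environment) discards $\tilde\theta$. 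Since $\Theta$ is finite,
$$
\delta_k := \min\big\{\|F_{\tilde\nu_k}-F_{\nu_k}\|_\infty \;:\; \tilde\theta\in\Theta_k,\ \tilde\nu_k\neq\nu_k\big\}>0,
$$
with the convention that the min over the empty set is $+\infty$ (in which case $\Theta'_k$ is a.s.\ empty and arm $k$ is never played after the initial $K$ pulls).

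Next, I convert this into a sampling-time tail. If $T_k(n)\geq N$, then at the round $t\leq n$ of the $N$-th pull of arm $k$ the selection rule of {\sc gcl} forces
$$
(N-1)\inf_{\tilde\theta\in\Theta'_k}\big\|\hat F_{k,N-1}-F_{\tilde\nu_k}\big\|_\infty^2 \;\leq\; T_{k^*}(t-1)\,\big\|\hat F_{k^*,T_{k^*}(t-1)}-F_{\nu_{k^*}}\big\|_\infty^2,
$$
where I specialised the right-hand infimum to $\theta\in\Theta'_{k^*}$. On the event $\|\hat F_{k,N-1}-F_{\nu_k}\|_\infty\leq\delta_k/2$, a triangle inequality and the structural lemma give the left-hand side $\geq(N-1)(\delta_k/2)^2$. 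Choosing $N$ of order $(\log n)/\delta_k^2$ with a large enough constant (depending on $\beta$), a union-bounded Massart inequality over $m\leq n$ ensures the right-hand side is strictly smaller than $(N-1)(\delta_k/2)^2$ with $\P_\theta$-probability $1-O(n^{-\beta})$, while a direct Massart bound gives $\P_\theta(\|\hat F_{k,N-1}-F_{\nu_k}\|_\infty>\delta_k/2)=O(n^{-\beta})$. Neither event is compatible with $T_k(n)\geq N$, so $\P_\theta(T_k(n)\geq N)=O(n^{-\beta})$. Finiteness of $\Theta$ bounds $\Delta_k^2/\delta_k^2$ uniformly, and absorbing this ratio into a single constant $C$ turns the estimate into $\P_\theta(T_k(n)\geq C(\log n)/\Delta_k^2)\leq\tilde C\,n^{-\beta}$; thus {\sc gcl} is $n^\beta$-$\mathcal T$, and Proposition \ref{prop:eqRT} delivers $n^\beta$-$\mathcal R$.

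The main obstacle is the structural lemma: the hypothesis must be applied with its $\theta$ and $\tilde\theta$ in swapped roles compared to the naive reading, and its third clause has to be converted via Lemma \ref{le:mus} into an actual first-round removal performed by {\sc gcl}. Once this is secured, the rest reduces to a standard Dvoretzky--Kiefer--Wolfowitz concentration argument, with finiteness of $\Theta$ used only to make the constants uniform.
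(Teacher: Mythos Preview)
Your proof is correct and follows essentially the same route as the paper: both arguments first use Lemma~\ref{le:mus} and finiteness to show that the true $\theta$ survives the removal step, then apply the hypothesis with the roles of $\theta$ and $\tilde\theta$ swapped to eliminate every surviving competitor in $\Theta_k$ with $\tilde\nu_k=\nu_k$, define the positive gap $\delta_k$, and finish with a Massart-based contradiction on the selection rule together with a uniform bound from finiteness. The only cosmetic difference is that the paper packages the concentration into a single event $\xi=\{T\|\hat F_{k,T}-F_{\nu_k}\|_\infty^2<\tfrac{\beta+1}{2}\log n\text{ for all }k,T\le n\}$ and argues by contradiction on $\xi$, whereas you split the deviation control into two separate Massart applications; the resulting constants and thresholds coincide.
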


\subsection{Bernoulli laws}\label{bernoulli}
We assume that any $\nu_k$ ($k\in\{1,\hdots,K\}$, $\theta\in\Theta$) is a Bernoulli law, and denote by $\mu_k$ its parameter. We also assume that there exists $\gamma\in(0,1)$ such that $\mu_k\in[\gamma,1]$ for all $k$ and all $\theta$.\footnote{The result also holds if all parameters $\mu_k$ are in a given interval $[0,\gamma]$, $\gamma\in(0,1)$.} Moreover we may denote arbitrary environments $\theta, \tth$ by $\theta=(\mu_1,\hdots,\mu_K)$ and $\tth=(\tilde{\mu}_1,\hdots,\tilde{\mu}_K)$.\\
In this case $\frac{d\nu_\ell}{d\tilde{\nu}_\ell}(1)=\frac{\mu_l}{\tmu_l}>0$, so that for any $\theta$, $\tth\in\Theta$ and any $l\in\set{1}{K}$ one has $$\P_{\tilde{\theta}}\left(\frac{d\nu_\ell}{d\tilde{\nu}_\ell}(X_{\ell,1})>0\right)\geq \P_{\tilde{\theta}}(X_{\ell,1}=1)=\tmu_l>0.$$
Therefore condition (c) of Theorem \ref{th:main} holds, and the impossibility result only relies on conditions (a) and (b). Our algorithm can be made simpler: there is no need to try to exclude unlikely environments, and computing the empirical c.d.f. is equivalent to computing the empirical mean (see Figure \ref{fig:ber}). The theorem and its converse are expressed as follows. We will refer to our policy as {\sc gcl-b} as it looks for the environment matching the observations at the Greatest Confidence Level, in the case of Bernoulli distributions.

\begin{figure} 
\bookbox{

Proceed as follows:
\begin{itemize}
\item Draw each arm once.
\item Then at each round $t$, play an arm
 $$I_t \in \smash{\mathop{{\rm argmin}}\limits_{k\in\{1,\dots,K\}}} T_k(t-1) \inf_{\theta\in\Theta_k}\left( \mu_k-\hat X_{k,T_k(t-1)} \right)^2.$$
\end{itemize}

}
\vspace{-0.15in}
\caption{A c.d.f.-based algorithm in case of Bernoulli laws: {\sc gcl-b}.} \label{fig:ber}
\vspace{-0.0in}
\end{figure}

\begin{theo} \label{th:bern}
For any $\theta\in\Theta$ and any $k\in\{1,\dots,K\}$, let us set
$$d_k=\inf_{\tiny \tth\in\Theta_k }|\mu_k-\tilde{\mu}_k|.$$
{\sc gcl-b} is such that:
\[
\forall \beta>0, \ \exists C,\tilde{C}>0, \forall \theta\in\Theta, \ \forall n\geq 1, \ \forall k\in\{1,\dots,K\}, \ \Pro_{\theta}\left(T_k(n)\geq \frac{C\log n}{d_k^2}\right)\leq \frac{\tilde{C}}{n^{\beta}}.
\]

\bigskip

Let $f:\N^*\to\R_+^*$ be greater than order $\log^{\alpha}$: $\forall \alpha>0, \ f\gg_{+\infty}\log^\alpha$.\\
If there exists $k$ such that 
\begin{itemize}
\item[(a')] $\displaystyle\inf_{\theta\in\Theta\setminus\Theta_k}d_k=\inf_{\tiny \begin{array}{c}
\theta\in\Theta_k \\ \tth\in\Theta\setminus\Theta_k\end{array}}|\mu_k-\tilde{\mu}_k|=0,$
\end{itemize}
 then there is no anytime policy such that:
\[
\exists C,\tilde{C}>0, \forall \theta\in\Theta, \ \forall n\geq 2, \ \forall k\neq k^*, \ \Pro_{\theta}\left(T_k(n)\geq C\log n\right)\leq \frac{\tilde{C}}{f(n)}.
\]

\end{theo}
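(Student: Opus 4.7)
The theorem contains two independent claims---a concentration bound for {\sc gcl-b} (the positive direction) and an impossibility result under~(a') (the negative direction)---and I plan to handle them separately, using Hoeffding concentration for the first and change-of-measure for the second.

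For the positive direction, fix $\beta>0$, $\theta=(\nu_1,\dots,\nu_K)\in\Theta$ and $k\in\{1,\dots,K\}$. The case $k=k^*$ is vacuous since $d_k=0$ makes the threshold $C\log n/d_k^2$ infinite, so I assume $k\neq k^*$ with $d_k>0$. Suppose at round $t$ the algorithm selects $k$ with $T_k(t-1)\geq C\log n/d_k^2$. The {\sc gcl-b} selection rule combined with $\theta\in\Theta_{k^*}$ yields
\begin{equation*}
T_k(t-1)\inf_{\theta'\in\Theta_k}(\mu_k'-\hX_{k,T_k(t-1)})^2\leq T_{k^*}(t-1)(\mu_{k^*}-\hX_{k^*,T_{k^*}(t-1)})^2.
\end{equation*}
On the event $\{|\hX_{k,T_k(t-1)}-\mu_k|\leq d_k/2\}$, the triangle inequality and the definition of $d_k$ force the left-hand side to be at least $T_k(t-1)d_k^2/4\geq C\log n/4$, so that $|\hX_{k^*,T_{k^*}(t-1)}-\mu_{k^*}|\geq\sqrt{C\log n/(4T_{k^*}(t-1))}$. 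Each of these is a Hoeffding event of probability $O(n^{-C/2})$ per sample-size value, and a union bound over the $O(n^2)$ possible pairs $(T_k(t-1),T_{k^*}(t-1))\in\{1,\dots,n\}^2$ yields total probability $O(n^{2-C/2})$; choosing $C=2(\beta+3)$ completes this part.

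For the negative direction, I argue by contradiction: suppose an anytime policy meets the bound with constants $C,\tilde C$, and let $k$ be the arm from~(a'). For each $n$, (a') supplies $\theta_n\in\Theta_k$ and $\tth_n\in\Theta\setminus\Theta_k$ with $\varepsilon_n:=|\mu_k-\tmu_k|$ arbitrarily small. The hypothesis in $\theta_n$, applied to each of the $K-1$ suboptimal arms and union-bounded, yields $T_k(n)\geq n/2$ with $\P_{\theta_n}$-probability at least $1/2$ for $n$ large; the hypothesis in $\tth_n$ (where $k$ is suboptimal) gives $\Pr_{\tth_n}(T_k(n)\geq C\log n)\leq\tilde C/f(n)$. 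The link is the change of measure $\Pr_{\tth_n}(A)=\E_{\theta_n}[L_n\IND_A]$ with $L_n=d\P_{\tth_n}/d\P_{\theta_n}$: $\log L_n$ decomposes arm by arm, with the arm-$k$ contribution bounded below by $-O(T_k(n)\varepsilon_n^2)-O(\sqrt{T_k(n)}\varepsilon_n)$ (Bernstein on $O(\varepsilon_n)$-terms of mean $-T_k(n)KL(\nu_k,\tnu_k)$), and each $\ell\neq k$ contribution bounded below by $-O(T_\ell(n))$ (per-sample log-ratios being bounded since $\mu_\ell,\tmu_\ell\in[\gamma,1]$). Mimicking the sketch of Theorem~\ref{th:main}, the analysis is carried out at a time $N$ of order $\log n$ rather than $n$: the hypothesis in $\theta_n$ then bounds $T_\ell(N)=O(\log N)=O(\log\log n)$ on the typical event, so the non-$k$ cost to $\log L_n$ is only $O(\log\log n)$; scaling $\varepsilon_n$ so that $N\varepsilon_n^2=O(1)$ keeps the arm-$k$ cost of order $1$. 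The resulting lower bound $\Pr_{\tth_n}(T_k(N)\geq C\log N)\gtrsim(\log n)^{-c_0}$ then contradicts the assumed upper bound $\tilde C/f(N)$.

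The main obstacle is the final quantitative comparison of $(\log n)^{-c_0}$ with $\tilde C/f(N)$, since a naive bookkeeping only shows $f(N)\ll(\log n)^{c_0}$ and at $N=\Theta(\log n)$ the hypothesis $f\gg\log^\alpha$ only forces $f(N)\gg(\log\log n)^\alpha$. Closing the gap requires exploiting the Bernoulli structure more tightly: for instance by selecting $(\theta_n,\tth_n)$ that coincide on all arms other than $k$---possible under~(a') when $\Theta$ contains, for every small $\delta>0$, an environment in $\Theta_k$ with gap of size $\delta$, so that perturbing only $\mu_k$ past the optimality threshold stays inside $\Theta$---which removes the non-$k$ contribution from $\log L_n$ altogether, shrinks $c_0$ to zero, and upgrades the lower bound to an absolute constant; this constant then defeats the vanishing $\tilde C/f(N)$, delivering the contradiction.
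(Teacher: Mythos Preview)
Your positive direction is correct and essentially matches the paper: a uniform Hoeffding event plus the {\sc gcl-b} comparison inequality, with only cosmetic differences in constants.

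The negative direction, however, has a genuine gap. You derive a lower bound $\Pr_{\tth_n}(T_k(N)\ge C\log N)\gtrsim(\log n)^{-c_0}$ and then try to contradict the hypothesis \emph{at time $N$}, comparing with $\tilde C/f(N)$. That comparison indeed fails, as you observe. But this is not the comparison the paper makes. The paper takes $N=\lfloor 4C\log n\rfloor$ and uses the chain
\[
\{T_k(n)\ge C\log n\}\ \supseteq\ \{T_k(N)\ge N/2\}\ \supseteq\ \bigcap_{\ell\neq k}\{T_\ell(N)<C\log N\},
\]
so the lower bound of order $(\log n)^{-c_0}$ is actually a lower bound on $\Pr_{\tth_n}(T_k(n)\ge C\log n)$, and the contradiction is with the hypothesis \emph{at time $n$}, i.e.\ with $\tilde C/f(n)$. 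Since $f(n)\gg(\log n)^{\alpha}$ for every $\alpha$, the inequality $(\log n)^{-c_0}>\tilde C/f(n)$ holds for large $n$ and your ``obstacle'' disappears. Your proposed fix---choosing $\theta_n,\tth_n$ that coincide on all arms $\ell\neq k$---is therefore unnecessary, and in any case is not licensed by condition (a'), which says nothing about the non-$k$ coordinates.

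Two secondary points on the change of measure. First, your claim that the per-sample log-ratios for $\ell\neq k$ are bounded ``since $\mu_\ell,\tmu_\ell\in[\gamma,1]$'' is not uniform: on $\{X_{\ell,s}=0\}$ the ratio is $(1-\tmu_\ell)/(1-\mu_\ell)$, and nothing keeps $1-\tmu_\ell$ away from $0$. The paper avoids this by working on the event $A_t=\bigcap_{s\le t}\{X_{\ell,s}=1\ \forall \ell\neq k\}$ (or rather the event that the product of ratios is at least $\gamma^{K-1}$), where the ratio is $\tmu_\ell/\mu_\ell\ge\gamma$ uniformly; this is exactly where the hypothesis $\mu_\ell\in[\gamma,1]$ is used. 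Second, for arm $k$ the paper's quantitative choice is cleaner than your Bernstein estimate: pick $\theta^n,\tth^n$ so that $\tilde\mu_k^n\ge 2^{-1/N}\mu_k^n$ and $1-\tilde\mu_k^n\ge 2^{-1/N}(1-\mu_k^n)$, which makes the per-sample ratio at least $2^{-1/N}$ and the total arm-$k$ cost over $N$ samples exactly a factor $1/2$.
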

Note that we do not adopt the former definitions of robustness ($f$-$\mathcal R$ and $f$-$\mathcal T$), because the significant term here is $d_k$ (and not $\Delta_k$)\footnote{There is no need to leave aside the case of $d_k=0$: with the convention $\frac{1}0=+\infty$, the corresponding event has zero probability.}, which represents the distance between $\Theta_k$ and $\Theta\smallsetminus\Theta_k$. Indeed robustness lies on the ability to distinguish environments, and this ability is all the more stronger as the distance between the parameters of these environments is greater. Provided that the density $\frac{d\nu}{d\tnu}$ is uniformly bounded away from zero, the theorem holds for any parametric model, with $d_k$ being defined with a norm on the space of parameters (instead of $|.|$).\\
Note also that the second part of the theorem is a bit weaker than Theorem \ref{th:main}, because of the interchange of ``$\forall \theta$'' and ``$\exists C,\tilde{C}$''. The reason for this is that condition (a) is replaced by a weaker assumption: $\nu_k$ does not equal $\tnu_k$, but condition (a') means that such $\nu_k$ and $\tnu_k$ can be chosen arbitrarily close.

\subsection{$\mu^*$ is known}

This section shows that the impossibility result also breaks down if
$\mu^*$ is known by the agent. This situation is formalized as $\mu^*$ being constant over $\Theta$. Conditions (a) and (b) of Theorem \ref{th:main} do not hold: if a distribution $\nu_k$ makes arm $k$ optimal in an environment $\theta$, it is still optimal in any environment $\tth$ such that $\tnu_k=\nu_k$.\\
In this case, our algorithm can be made simpler (see Figure \ref{fig:mu*}). At each round we choose the greatest confidence level such that at least one empirical mean $\hX_{k,T_k(t-1)}$ has $\mu^*$ in its confidence interval, and select the corresponding arm $k$. This is similar to the previous algorithm, deviations being evaluated according to Hoeffding's inequality instead of Massart's one. There is one more refinement: the level confidence of arm $k$ at time step $t$ can be defined as $T_k(t-1)( \mu^*-\hat X_{k,T_k(t-1)} )_+^2$ (where, for any $x\in\R$, $x_+$ denotes $\max(0,x)$) instead of \mbox{$T_k(t-1)( \mu^*-\hat X_{k,T_k(t-1)} )^2$}. Indeed, there is no need to penalize an arm for his empirical mean reward being too much greater than $\mu^*$. We will refer to this policy as {\sc gcl$^*$}.

\begin{figure} 
\bookbox{
Proceed as follows:
\begin{itemize}
\item Draw each arm once.
\item Then at each round $t$, play an arm
 $$I_t \in \smash{\mathop{{\rm argmin}}\limits_{k\in\{1,\dots,K\}}} T_k(t-1)\left( \mu^*-\hat X_{k,T_k(t-1)} \right)_+^2.$$
\end{itemize}
}
\vspace{-0.15in}
\caption{{\sc gcl$^*$}: a variant of c.d.f.-based algorithm when $\mu^*$ is known.} \label{fig:mu*}
\vspace{-0.0in}
\end{figure}

\begin{theo}\label{th:mu*}
When $\mu^*$ is known, {\sc gcl$^*$} is $n^\beta$-T (and hence $n^\beta$-R) for all $\beta>0$.
\end{theo}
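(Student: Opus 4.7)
Heuristically, {\sc gcl$^*$} scores each arm by how far its empirical mean lies below $\mu^*$: for $k^*$ the score $s(\mu^* - \hat X_{k^*,s})_+^2$ stays of order $\log n$ uniformly in $s$ by Hoeffding's concentration around $\mu^*$, while for any suboptimal arm $k$ the score grows at least like $s\Delta_k^2$ once $\hat X_{k,s}$ has concentrated around $\mu_k = \mu^* - \Delta_k$. So after roughly $\log n/\Delta_k^2$ pulls of $k$, the argmin rule has to refuse $k$. The plan is to turn this heuristic into a standard concentration-plus-contradiction argument, analogous to but noticeably simpler than the {\sc ucb} analyses of \citet{audibert2009exploration} because knowing $\mu^*$ bypasses the need for an exploration bonus.

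Fix $\beta>0$ and choose a constant $C = C(\beta)$ satisfying $C > 2(\beta+1)$, e.g.\ $C = 2\beta+3$. Set $u_k = \lceil C\log n/\Delta_k^2\rceil$ and introduce the favourable events
\[
 \mathcal{A} = \Big\{ \forall s \in \{1,\dots,n\},\ \mu^* - \hat X_{k^*,s} \leq \sqrt{\tfrac{(\beta+1)\log n}{2s}} \Big\},
\]
\[
 \mathcal{B}_k = \Big\{ \forall s \in \{u_k,\dots,n\},\ \hat X_{k,s} - \mu_k \leq \Delta_k/2 \Big\}.
\]
Hoeffding's inequality applied to the i.i.d.\ sequences $(X_{k^*,s})_s$ and $(X_{k,s})_s$ (both independent of the policy), together with a union bound over $s$, gives $\Pro_\theta(\mathcal{A}^c) \leq n \cdot n^{-(\beta+1)} = n^{-\beta}$ and $\Pro_\theta(\mathcal{B}_k^c) \leq n\exp(-u_k\Delta_k^2/2) \leq n^{1-C/2} \leq n^{-\beta}$. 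Both bounds are uniform in $\theta$, since they only use that the rewards lie in $[0,1]$.

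On $\mathcal{A}\cap\mathcal{B}_k$ I would argue by contradiction: suppose $T_k(n) > u_k$ and let $t$ be the first round at which $T_k(t) = u_k+1$. Then $I_t = k$, $T_k(t-1) = u_k \geq 1$ and $T_{k^*}(t-1) \geq 1$ (since {\sc gcl$^*$} pulls every arm once during initialization). On $\mathcal{B}_k$ the score of $k$ at $t$ is at least $u_k(\Delta_k/2)^2 \geq C\log n/4$, while on $\mathcal{A}$ the score of $k^*$ at $t$ is at most $(\beta+1)\log n/2$. Since $C/4 > (\beta+1)/2$, the argmin rule that defined $I_t$ could not possibly return $k$, which is the desired contradiction. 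Hence $T_k(n) \leq u_k$ on $\mathcal{A}\cap\mathcal{B}_k$, so $\Pro_\theta(T_k(n) > u_k) \leq 2/n^\beta$; absorbing the $+1$ from the ceiling into a slightly larger $C$ yields the $n^\beta$-$\mathcal T$ bound, and Proposition~\ref{prop:eqRT} then gives $n^\beta$-$\mathcal R$. The only real obstacle is constant-juggling: $C$ must simultaneously guarantee concentration in $\mathcal{B}_k$ and a clean margin in the score comparison; the degenerate regime $u_k \geq n$, where $\{T_k(n) > u_k\}$ is empty, is absorbed into~$\tilde C$.
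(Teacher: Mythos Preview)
Your proof is correct and follows essentially the same strategy as the paper's: define a high-probability concentration event via Hoeffding's inequality and a union bound, then argue by contradiction at the first round $t$ where $T_k$ crosses the threshold, comparing the scores of $k$ and $k^*$ under the argmin rule. The only difference is cosmetic: the paper uses a single two-sided concentration event $\xi=\{s(\hat X_{k,s}-\mu_k)^2<\tfrac{\beta+1}{2}\log n \text{ for all }k,s\}$ and then splits into the cases $\hat X_{k,T_k(t-1)}\ge\mu^*$ versus $<\mu^*$, whereas your one-sided events $\mathcal A$ and $\mathcal B_k$ let you skip that case distinction entirely --- a mild simplification that exploits the $(\cdot)_+$ in the score more directly.
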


{\sc gcl$^*$} relies on the use of Hoeffding's inequality. It is now well-established that in general,
the Hoeffding inequality does not lead to the best factor in front of the $\log n$ in the expected regret bound.
The minimax factor has been identified in the works of \citet{LaiRo85,BurKat96} for specific families of probability distributions.
This result has been strengthened in \citet{HonTak10} to deal with the whole set of probability distributions on $[0,1]$.
Getting the best factor in front of the $\log n$ term in the expected regret bound appeared there to be tightly linked with the use of Sanov's inequality. The recent work of \citet{maillard2011finite} builds on a non-asymptotic version of Sanov's inequality to get tight non-asymptotic bounds for probability distributions with finite support. \citet{garivier2011kl} adopts a different starting point: the Chernoff inequality. This inequality states that for i.i.d. random variables $V,V_1,\dots,V_T$, taking their values in
{$[0,1]$}, for any {$\tau<\E V$} we have
    \begin{equation}
    {\P\bigg(\frac1T\sum_{i=1}^T V_i\le \tau\bigg) \le \exp\big(-T\ \mathcal{K}(\tau,\E V)\big)},
    \end{equation}
where $\mathcal{K}(p,q)$ denotes the Kullback-Leibler divergence between Bernoulli distributions of respective parameter $p$ and $q$. It is known to be tight for Bernoulli random variables (as discussed e.g. in \cite{garivier2011kl}).
A Chernoff version of {\sc GCL*} would consist in the following:
 \begin{equation} 
 I_t \in \smash{\mathop{{\rm argmin}}\limits_{k\in\{1,\dots,K\}}} T_k(t-1)\mathcal{K}\big(\min(\hat X_{k,T_k(t-1)},\mu^*),\mu^*\big).
 \end{equation}
 
At the expense of a more refined analysis, it is easy to prove that Theorem \ref{th:mu*} still holds for
this algorithm. Getting a small constant in front of the logarithmic term being an orthogonal discussion to the main topic of this paper, we do not detail further this point.

\section{Horizon policies}

We now study regret deviation properties of horizon policies. Again, we prove that {\sc ucb} policies are optimal. Indeed, deviations of {\sc ucb-h} are of order $1/n^{\alpha}$ (for all $\alpha>0$) and our result shows that this cannot be improved in general.\\
This second impossibility result holds for many settings, that is the one for which there exists $\theta, \tth\in\Theta$ such that:
\begin{itemize}
\item[(b)] an arm $k$ is optimal in $\theta$ but not in $\tth$,
\item[(c')] in environment $\tth$, all arms may behave as if in $\theta$.
\end{itemize}

Indeed, draws have to be concentrated on arm $k$ in environment $\theta$. In particular, with large $\Pro_\theta$-probability, the number of draws of arm $k$ (and only of arm $k$) exceed the logarithmic threshold $\frac{C\log n}{\Delta^2}$ at step $N=\left\lceil K\frac{C\log n}{\Delta^2} \right\rceil$. Such an event only affects a small (logarithmic) number of pulls, so that in environment $\tth$ arms may easily behave as in $\theta$, and   arm $k$ is pulled too often with not so small $\Pro_{\tth}$-probability. More precisely, this event 
happens with at least $\Pro_{\theta}$-probability $1-\frac{(K-1)\tC}{f(n)}$ for a $f$-wT policy. Because arms under environment $\tth$ are able to behave as in $\theta$, there exist constants $0<a,\eta<1$ and a subset $\xi$ of this event such that $\P_\theta(\xi) \ge a^N$ and for which $\frac{d\P_\theta}{d\P_{\tth}}$ is lower bounded by $\eta^N$. The event $\xi$ has then $\P_{\tth}$-probability of order $(\eta a)^N$ at least. As $N$ is of order $\log n$, the probability of arm $k$ being pulled too often in $\tth$ is therefore at least of order $1/n$ to the power of a constant. Hence the following result.

\begin{theo}\label{th:hor}
Let $f:\N\to\R_+^*$ be greater than order $n^\alpha$, that is for any $\alpha>0, \ f(n)\gg_{n\to+\infty}n^\alpha$.\\
Assume that there exists $\theta$, $\tilde{\theta}\in\Theta$, and $k\in\{1,\ldots,K\}$ such that:
\begin{itemize}
\item[(b)] $k$ is the index of the best arm in $\theta$ but not in $\tilde{\theta}$,
\item[(c')] $\forall \ell\in\{1,...,K\}, \ \P_{\tilde{\theta}}\big(\frac{d\nu_\ell}{d\tilde{\nu}_\ell}(X_{\ell,1})>0\big)>0$.
\end{itemize}
Then there is no $f$-w$\mathcal T$ horizon policy, and hence no $f$-$\mathcal T$ horizon policy.
\end{theo}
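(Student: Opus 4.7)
The plan is to mimic the argument sketched for Theorem~\ref{th:main} but with an intermediate time $N$ now of order $\log n$ rather than of polynomial order in $n$, so that the stronger polynomial tolerance $f\gg_{+\infty}n^\alpha$ for every $\alpha>0$ can still be defeated by a likelihood-ratio change of measure. Assume for contradiction that some $f$-w$\mathcal T$ horizon policy exists, and let $C_\theta,\tilde C_\theta$ and $C_{\tth},\tilde C_{\tth}$ be the constants supplied by the w$\mathcal T$ property at $\theta$ and $\tth$. Since enlarging $C$ only strengthens the w$\mathcal T$ inequality (the threshold on $T_\ell$ gets larger), fix a single $C\ge C_\theta$ that also satisfies $C/\Delta^2\ge C_{\tth}/\tD_k^2$, where $\tD_k=\tmu^*-\tmu_k>0$ is the suboptimality of arm $k$ under $\tth$.

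Set $N=\lceil KC\log n/\Delta^2\rceil$ (so $N\le n$ for $n$ large) and form under $\theta$ the event $E=\{\forall\ell\ne k,\ T_\ell(n)\le C\log n/\Delta_\ell^2\}$. A union bound over the $K-1$ suboptimal arms gives $\Pro_\theta(E)\ge 1-(K-1)\tilde C_\theta/f(n)$. On $E$, monotonicity of $T_\ell$ and the identity $\sum_\ell T_\ell(N)=N$ force $T_k(N)\ge N-(K-1)C\log n/\Delta^2\ge C\log n/\Delta^2$. Next, combining condition (c') with the second part of Lemma~\ref{le:mus} gives $\Pro_\theta\bigl(\frac{d\tnu_\ell}{d\nu_\ell}(X_{\ell,1})>0\bigr)>0$ for every $\ell$; pick $\eta,a\in(0,1)$ such that $\Pro_\theta\bigl(\frac{d\tnu_\ell}{d\nu_\ell}(X_{\ell,1})\ge\eta\bigr)\ge a$ uniformly in $\ell$, and define $B=\bigl\{\frac{d\tnu_\ell}{d\nu_\ell}(X_{\ell,s})\ge\eta\ \forall\ell,\forall s\le N\bigr\}$. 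By independence of the reward streams under $\Pro_\theta$, $\Pro_\theta(B)\ge a^{KN}$.

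Set $\xi=E\cap B$, an $\F_N$-measurable event. The path-space likelihood ratio on $\F_N$ factorises (the policy factors $\pi(I_t\mid h_{t-1})$ do not depend on the environment and cancel) as
\[
\frac{d\Pro_{\tth}|_{\F_N}}{d\Pro_\theta|_{\F_N}}=\prod_{\ell=1}^K\prod_{s=1}^{T_\ell(N)}\frac{d\tnu_\ell}{d\nu_\ell}(X_{\ell,s}),
\]
which on $B$ is at least $\eta^{\sum_\ell T_\ell(N)}=\eta^N$. Hence $\Pro_{\tth}(\xi)\ge\eta^N\,\Pro_\theta(\xi)\ge\eta^N\bigl(a^{KN}-(K-1)\tilde C_\theta/f(n)\bigr)$. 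Since $N=\Theta(\log n)$, the leading term $a^{KN}$ is of order $n^{-c_1}$ for an explicit $c_1>0$, whereas $1/f(n)$ decays faster than any polynomial, so for $n$ large the subtraction is negligible and $\Pro_{\tth}(\xi)\ge\tfrac12(\eta a^K)^N\ge n^{-c}/2$ for some $c>0$. On $\xi$, $T_k(n)\ge T_k(N)\ge C\log n/\Delta^2\ge C_{\tth}\log n/\tD_k^2$, so $\Pro_{\tth}\bigl(T_k(n)\ge C_{\tth}\log n/\tD_k^2\bigr)\ge n^{-c}/2$, which exceeds $\tilde C_{\tth}/f(n)$ once $n$ is large, contradicting $f$-w$\mathcal T$ at $\tth$.

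The main obstacle is the Radon--Nikodym bookkeeping when $\nu_\ell$ and $\tnu_\ell$ are not mutually absolutely continuous: one has to work with a common dominating measure on the infinite product space, use the convention $\frac{d\tnu_\ell}{d\nu_\ell}=+\infty$ off $\mathrm{supp}(\nu_\ell)$, and justify the product formula above $\Pro_\theta$-a.s.\ on the adaptive history $\sigma$-algebra $\F_N$. Once that is in place, everything reduces to calibrating the three scales $N\sim\log n$, $(\eta a^K)^N\sim n^{-c}$ and $1/f(n)=o(n^{-\alpha})$, which is precisely what the assumption $f\gg_{+\infty}n^\alpha$ for every $\alpha>0$ is tailored to exploit.
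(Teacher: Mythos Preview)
Your overall strategy matches the paper's proof exactly: assume the w$\mathcal T$ property at $\theta$, take an intermediate time $N$ of order $K C\log n$, use pigeonhole to force $T_k(N)$ above the threshold on a high-$\Pro_\theta$-probability event, and transfer this to $\tth$ via a change of measure on the first $N$ rounds, producing a lower bound of order $n^{-c}$ which defeats $1/f(n)$. The paper phrases the pigeonhole step via $Y=\min\argmax_\ell T_\ell(N)$ and the event $\{Y=k\}$, but this is cosmetic.

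There is, however, a measurability mismatch in your change-of-measure step that makes the inequality $\Pro_{\tth}(\xi)\ge\eta^N\Pro_\theta(\xi)$ unjustified as written. First, your $E$ involves $T_\ell(n)$ rather than $T_\ell(N)$, so it is not $\F_N$-measurable; replace $n$ by $N$ in the definition of $E$ (the bound on $\Pro_\theta(E^c)$ survives since $\{T_\ell(N)>C\log n/\Delta_\ell^2\}\subset\{T_\ell(n)>C\log n/\Delta_\ell^2\}$). Second, and more importantly, your event $B$ constrains \emph{all} $X_{\ell,s}$ with $s\le N$, including those with $s>T_\ell(N)$ that are never observed by round $N$; hence $B$, and therefore $\xi$, is not measurable with respect to the observed-history $\sigma$-algebra, and the likelihood-ratio formula $\prod_{\ell}\prod_{s\le T_\ell(N)}\frac{d\tnu_\ell}{d\nu_\ell}(X_{\ell,s})$ you invoke applies only to events measurable in that filtration. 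The paper avoids this by working on $\sigma\bigl((X_{\ell,s})_{1\le\ell\le K,\,1\le s\le N}\bigr)$ and using the full product $\prod_{\ell=1}^K\prod_{s=1}^N\frac{d\tnu_\ell}{d\nu_\ell}(X_{\ell,s})$, which on your $B$ is at least $\eta^{KN}$ rather than $\eta^N$. Since $KN=\Theta(\log n)$, this still yields a polynomial lower bound $\Pro_{\tth}(\xi)\gtrsim(\eta a)^{KN}\sim n^{-c}$ and the remainder of your argument goes through unchanged.
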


Note that the conditions under which the impossibility holds are far less restrictive than in Theorem \ref{th:main}. Indeed, conditions (b) and (c') are equivalent to:
\begin{itemize}
\item[(a'')] $\P_{\tilde{\theta}}\big(\frac{d\nu_k}{d\tilde{\nu}_k}(X_{\ell,1})>0\big)>0,$
\item[(b)] $k$ is the index of the best arm in $\theta$ but not in $\tilde{\theta}$,
\item[(c)] $\forall \ell\neq k, \ \P_{\tilde{\theta}}\big(\frac{d\nu_\ell}{d\tilde{\nu}_\ell}(X_{\ell,1})>0\big)>0$.
\end{itemize}

These are the same conditions as in Theorem \ref{th:main}, except for the first one, (a''), which is weaker than condition (a).\\
As a consequence, corollary \ref{cor:main} can also be written in the context of horizon policies.

\begin{coro}\label{cor:hor}
If $\Theta$ is the whole set $\bar\Theta$ of all $K$-tuples of measures on $[0,1]$, then there is no $f$-$\mathcal T$ horizon policy, where $f$ is any function such that $f(n)\gg_{n\to+\infty}n^\alpha$ for all $\alpha>0$.
\end{coro}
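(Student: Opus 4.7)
The plan is to reduce the statement to a direct application of Theorem \ref{th:hor} with $\Theta=\bar\Theta$. Since that theorem already delivers ``no $f$-$\mathcal T$ horizon policy'' as soon as its hypotheses are satisfied, it is enough to exhibit one triple $(\theta,\tth,k)$ with $\theta,\tth\in\bar\Theta$ for which conditions (b) and (c') hold.

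I would construct a simple Bernoulli witness. First, fix pairwise distinct parameters $p_2,\dots,p_K\in(0,1)$, and then pick $p_1,q_1\in(0,1)\setminus\{p_2,\dots,p_K\}$ with $p_1>\max_{2\leq\ell\leq K}p_\ell$ and $q_1<\max_{2\leq\ell\leq K}p_\ell$. Define $\nu_1=Ber(p_1)$, $\tnu_1=Ber(q_1)$ and $\nu_\ell=\tnu_\ell=Ber(p_\ell)$ for $2\leq\ell\leq K$, and take $k=1$. Under both $\theta$ and $\tth$ the $K$ arm-means are pairwise distinct, so the gap $\Delta$ is strictly positive in each environment (in particular the $f$-$\mathcal T$ property is non-vacuous at $\theta$). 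Arm $1$ is the unique optimal arm in $\theta$ and is strictly suboptimal in $\tth$, which proves (b). For (c'), any two Bernoulli laws with parameters in $(0,1)$ are mutually absolutely continuous on $\{0,1\}$, so $\frac{d\nu_\ell}{d\tnu_\ell}$ takes strictly positive values at both points of the support of $\tnu_\ell$; hence $\P_{\tth}\big(\frac{d\nu_\ell}{d\tnu_\ell}(X_{\ell,1})>0\big)=1$ for every $\ell\in\{1,\dots,K\}$, which is (c'). Both $K$-tuples clearly lie in $\bar\Theta$, and Theorem \ref{th:hor} then immediately rules out any $f$-$\mathcal T$ horizon policy on $\bar\Theta$.

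I do not anticipate any real obstacle: the entire nontrivial content is already packaged inside Theorem \ref{th:hor}, and this corollary merely observes that its mild structural hypotheses (b) and (c') admit a trivial realisation within the rich class $\bar\Theta$ via the Bernoulli construction above.
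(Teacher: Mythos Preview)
Your proposal is correct and matches the paper's approach: the corollary is stated without a separate proof precisely because it is an immediate application of Theorem~\ref{th:hor}, and your Bernoulli construction is a perfectly valid way to witness conditions (b) and (c') inside $\bar\Theta$. The extra care you take to make all arm means pairwise distinct (so that $\Delta\neq 0$ in both $\theta$ and $\tth$) is appropriate, since the $f$-$\mathcal T$ and $f$-w$\mathcal T$ properties only constrain environments with $\Delta\neq 0$.
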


Moreover, the impossibility also holds for many basic settings, such as the ones described in section \ref{bernoulli}. This shows that {\sc gcl-b} is not only better in terms of deviations than {\sc ucb} anytime algorithms, but it is also optimal and, despite being an anytime policy , it is at least as good as any horizon policy. In fact, in most settings suitable for a {\sc gcl} algorithm, {\sc gcl} is optimal and is as good as {\sc ucb-h} without using the knowledge of the horizon $n$.\\

Nevertheless, the impossibility is not strong enough to avoid the existence of $f$-$\mathcal R$ horizon policies, with $f(n)\gg_{n\to+\infty}n^\alpha$ and any $\alpha>0$. Proposition \ref{prop:eqRT} does not enable to deduce the non-existence of $f$-$\mathcal R$ policy from the non-existence of $f$-w$\mathcal T$ policy because it needs $f$ to be less than a function of the form $\alpha n^\beta$. We believe that, in general, the impossibility still holds for $f$-$\mathcal R$ horizon policies, but the corresponding conditions will not be easy to write and the analysis will not be as clear as our previous results. Basically, the impossibility would require the existence of a pair of environments $\theta,\tth$ such that
\begin{itemize}
\item an arm $k$ is optimal in $\theta$ but not in $\tth$,
\item in environment $\tth$, all arms may behave as if in $\theta$ in such a way that best arm in $\tth$ would have actually given greater rewards than the other arms if it had been pulled more often.
\end{itemize}

Finally, as in section \ref{sec:pos} one can wonder if there exists a converse to our result. Again, such an analysis would be tougher to perform and we only give some basic hints.\\ If $\Theta$ is such that for any $\theta,\tth\in\Theta$ either (b) or (c') does not hold, then one could actually perform very well. In this degenerated case, only one pull of each arm may make it possible to distinguish tricky pairs of environments $\theta, \tth$, and thus to learn the best arm $k^*$. The agent then keeps on pulling arm $k^*$, and its regret is almost surely less than $K$ at any time step. The tricky part is that, as the distinction relies on the fact that the event $\frac{d\nu_k}{d\tnu_k}(X_{k,1})>0$ is almost sure under $\Pro_{\theta}$ (see Lemma \ref{le:mus}), this may not work if $\Theta$ is uncountable.

\section{Experiments}

Our goal is to compare anytime {\sc ucb} policies, more precisely {\sc ucb1}($\rho$) for $\rho\ge0$ defined by \eqref{ucb1rho}, to the low-deviation 
policies {\sc ucb-h}($\rho$) for $\rho\ge0$, defined by \eqref{ucbhrho}, and {\sc gcl$^*$} introduced in Section \ref{sec:pos} (see Figure \ref{fig:mu*}).
Most bandit policies contain a parameter allowing to tune the exploration-exploitation trade-off. To do a fair comparison with anytime {\sc ucb} policies, we consider the full range of possible exploration parameters.

We estimate the distribution of the regret of a policy by running $100000$ simulations. In parti\-cular, this implies that the confidence interval for the expected regret of a policy is smaller than the size of the markers for $n=100$, and smaller than the linewidth for $n\ge 500$.
The reward distribution of the arms are here either the uniform (Unif), or the Bernoulli (Ber) or the Dirac distributions.



\subsection{Tuning the exploration parameter in {\sc ucb} policies for low expected regret} \label{sec:expA}

In {\sc ucb} policies, the exploration parameter can be interpreted as the confidence level at which a deviation inequality is applied (neglecting union bounds issues).
For instance, the popular {\sc ucb1} uses an exploration term $\sqrt{\fracl{2 \log t}{T_k(t)}}=\sqrt{\fracr{\log(t^4)}{2T_k(t)}}$ corresponding to a $1/t^4$
confidence level in view of Hoeffding's inequality.
Several studies \citep{LaiRo85,Agr95,BurKat96,audibert2009exploration,HonTak10} have shown that 
the critical confidence level is $1/t$. In particular, \cite{audibert2009exploration} 
have considered the policy {\sc ucb1}($\rho$) having the exploration term $\sqrt{\fracl{\rho \log t}{T_k(t)}}$, and shown that this policy have polynomial regrets as soon as $\rho < 1/2$ (and have logarithmic regret for $\rho>1/2$). Precisely, for $\rho <1/2$, the regret of the policy can be lower bounded by $n^\gamma$ with $0<\gamma<1$ which is all the smaller as $\rho$ is close to $1/2$.

The first experiments, reported in Figures \ref{fig:1} and \ref{fig:2}, show that for $n \le 10^8$, taking $\rho$ in $[0.2,0.5)$ generally leads to better performance than 
taking the critical $\rho=0.5$. There is not really a contradiction with the previous results as for such $\rho$, the 
exponent $\gamma$ is so small than there is no great difference between $\log n$ and $n^\gamma$. 
For $n \le 10^8$, the polynomial regret will appear for smaller values of $\rho$ (i.e. $\rho\approx 0.1$ in our experiments).

The numerical simulations exhibit two different types of bandit problems: 
in simple bandit problems (which contain the case where the optimal arm is a Dirac distribution, or the case when the smallest reward that the optimal arm can give
is greater than the largest reward than the other arms can give), the performance of UCB policies is all the better as the exploration is reduced, that is
the expected regret is an increasing function of the exploration parameter. 
In difficult bandit problems (which contain in particular the case when the smaller reward that the optimal arm is smaller than the mean of the second best arm),
there is a real trade-off between exploration and exploitation: the expected regret of {\sc ucb1}($\rho$) decreases with $\rho$ for small $\rho$ and then increases.
Both types of problems are illustrated in Figure \ref{fig:2}.

\begin{figure}
\begin{center}
\scalebox{0.29}{\includegraphics{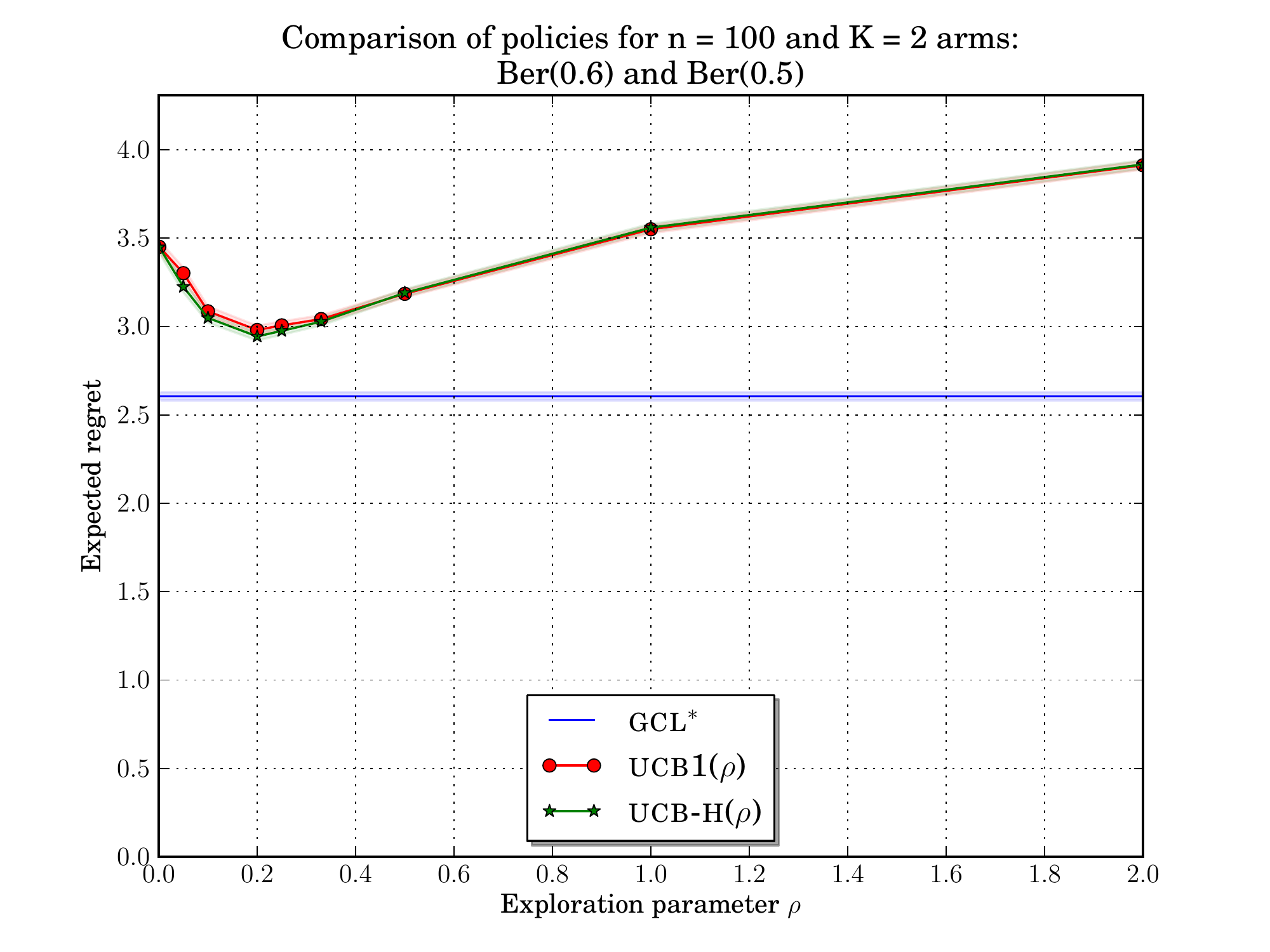}}
\scalebox{0.29}{\includegraphics{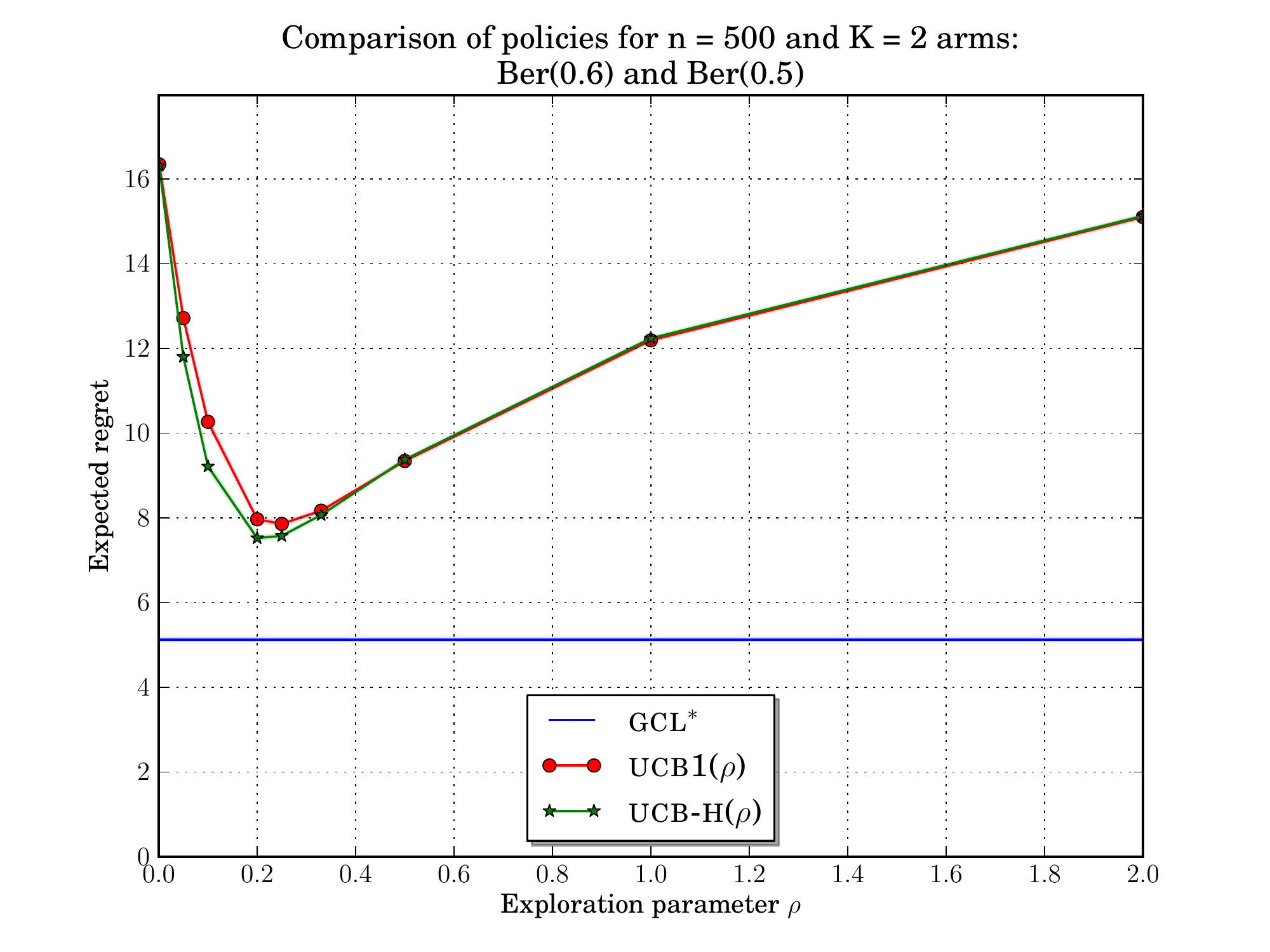}}
\\
\scalebox{0.29}{\includegraphics{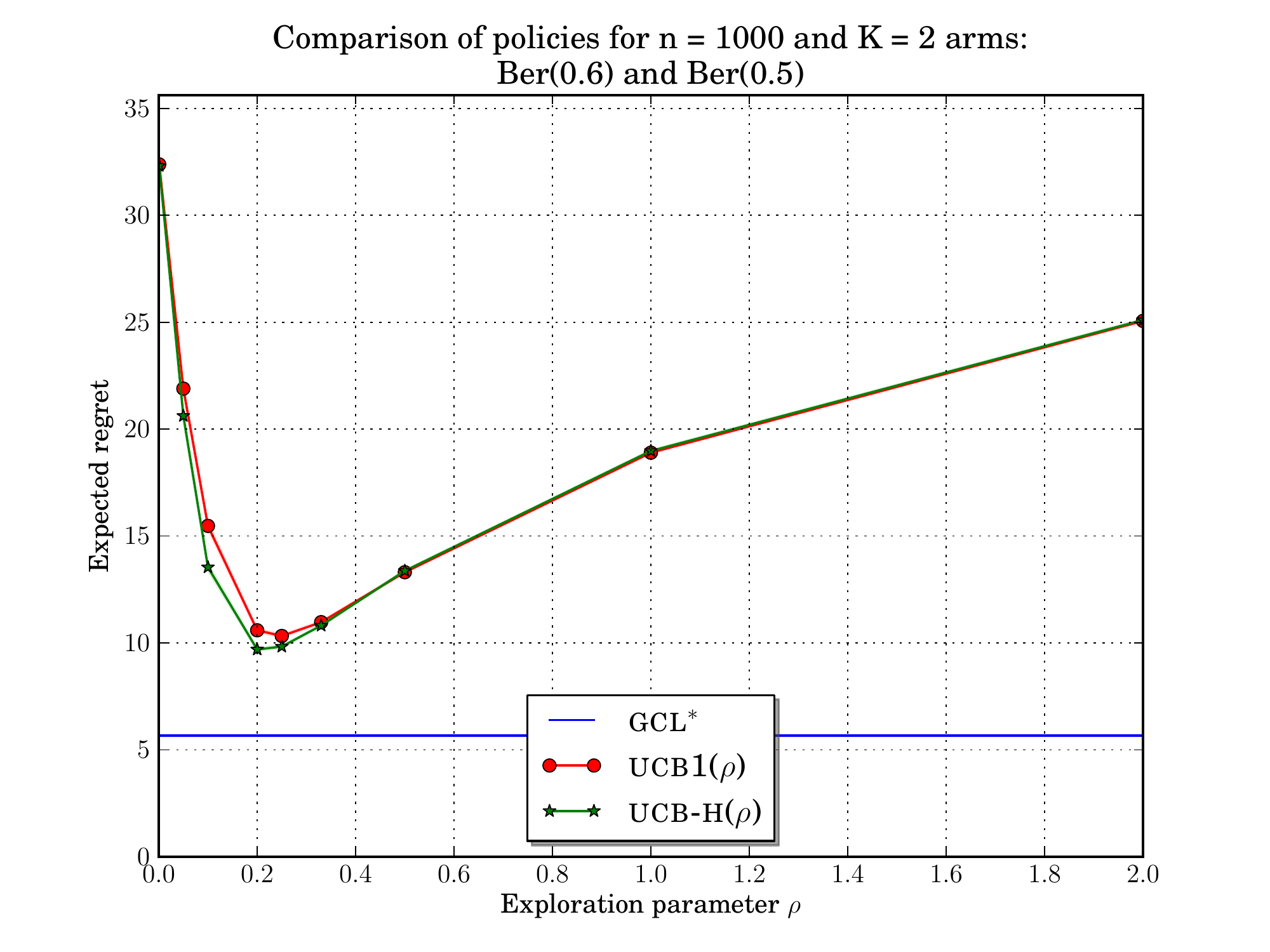}}
\scalebox{0.29}{\includegraphics{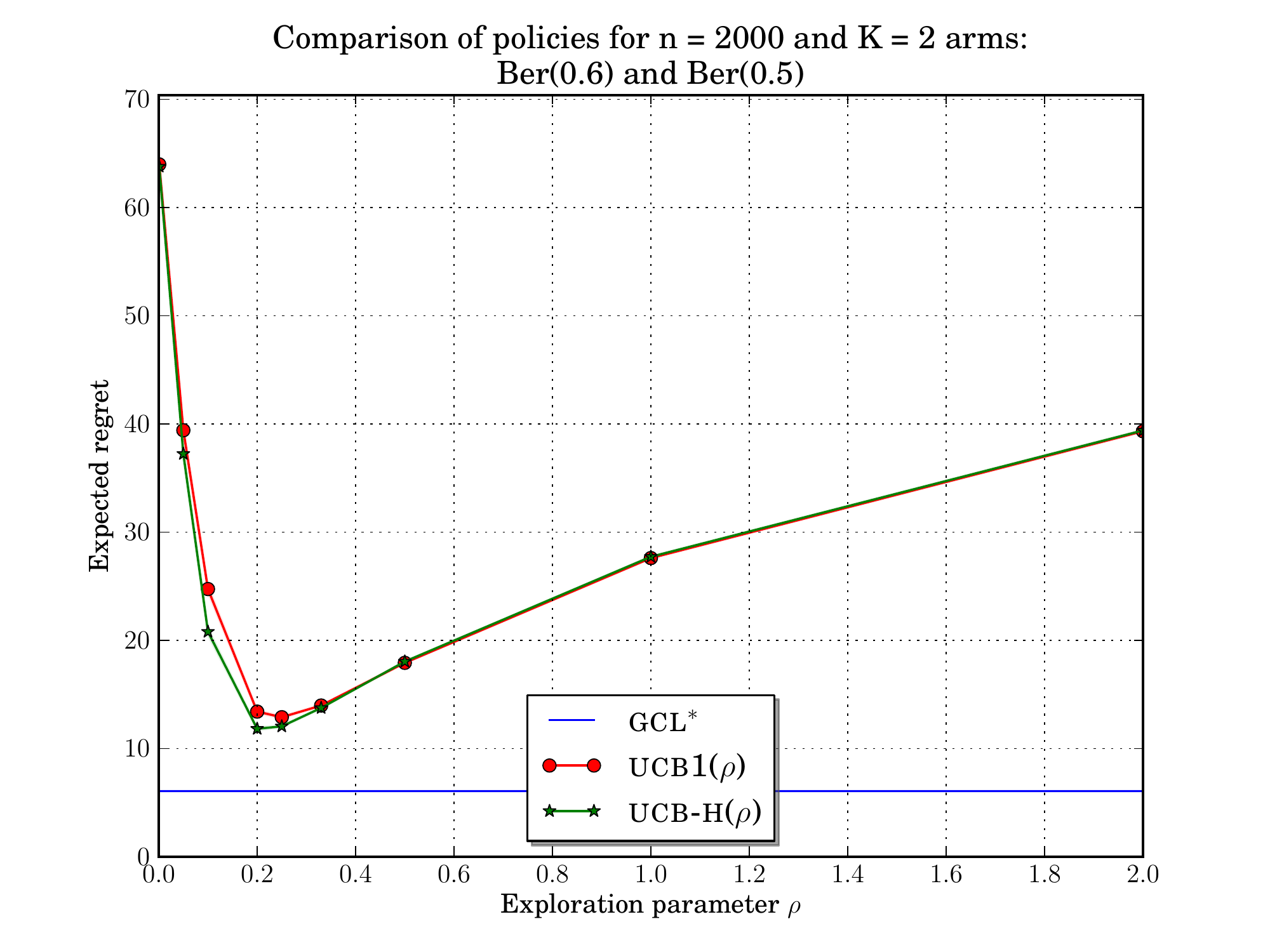}}
\\
\scalebox{0.29}{\includegraphics{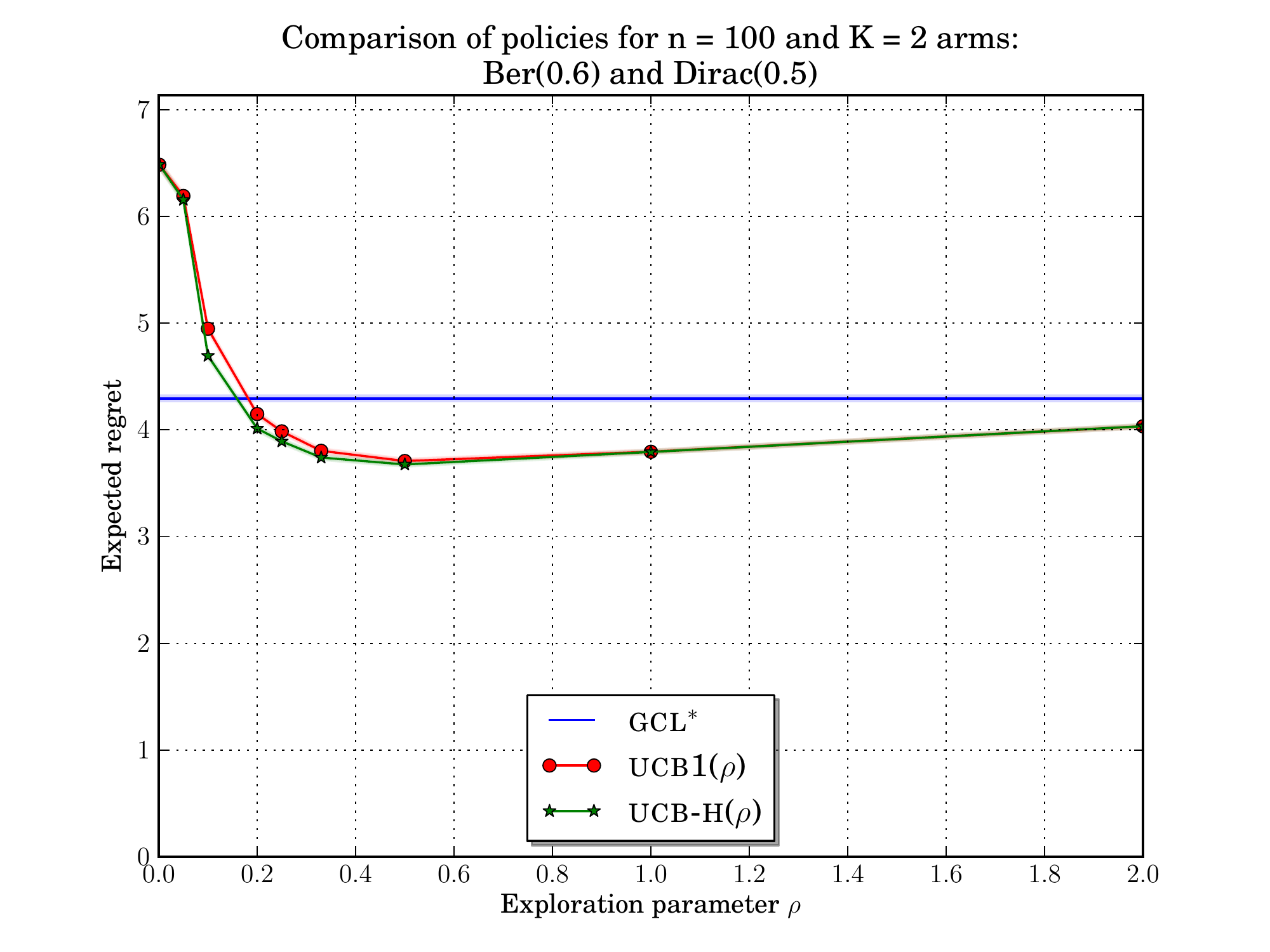}}
\scalebox{0.29}{\includegraphics{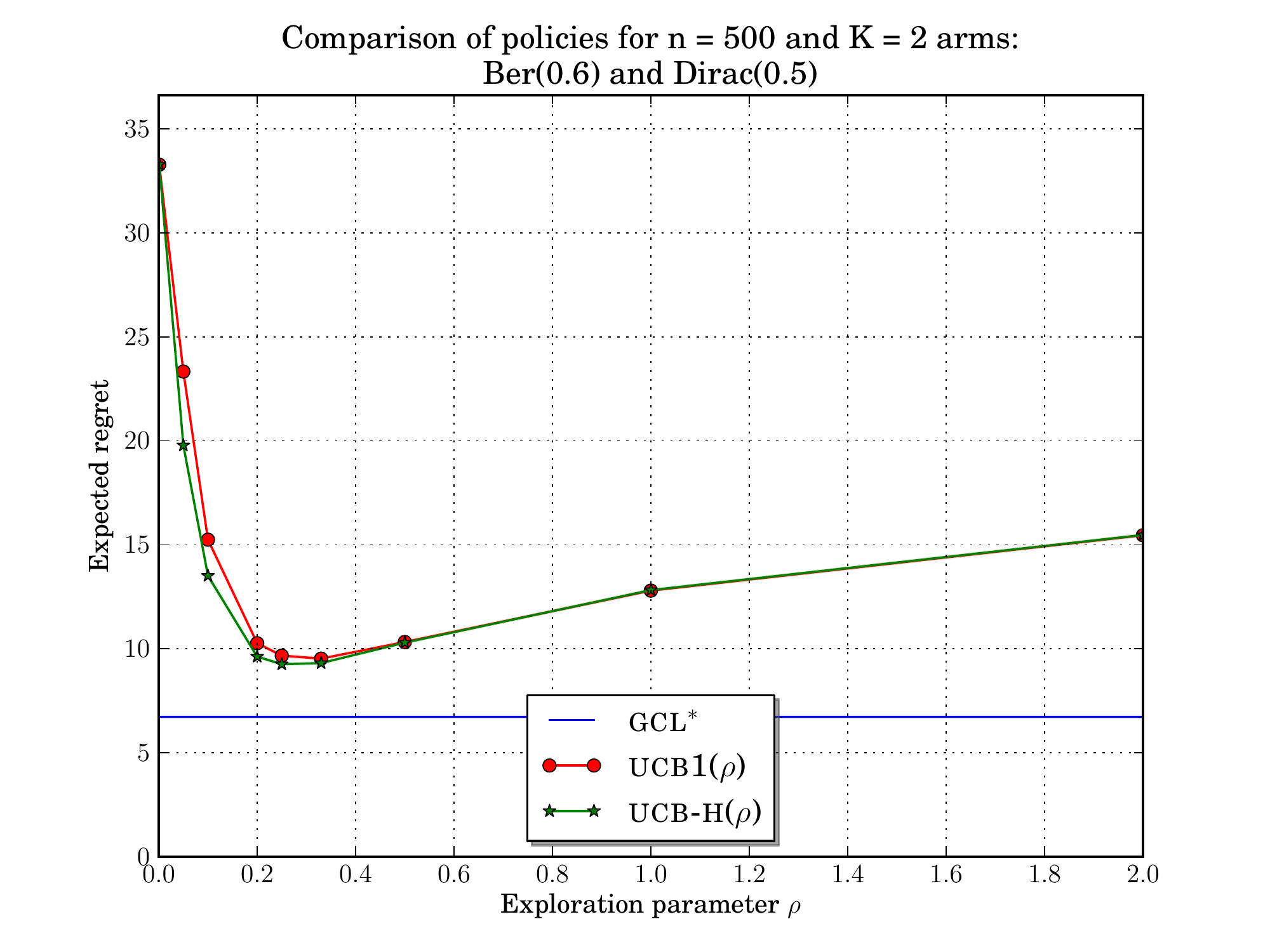}}
\\
\scalebox{0.29}{\includegraphics{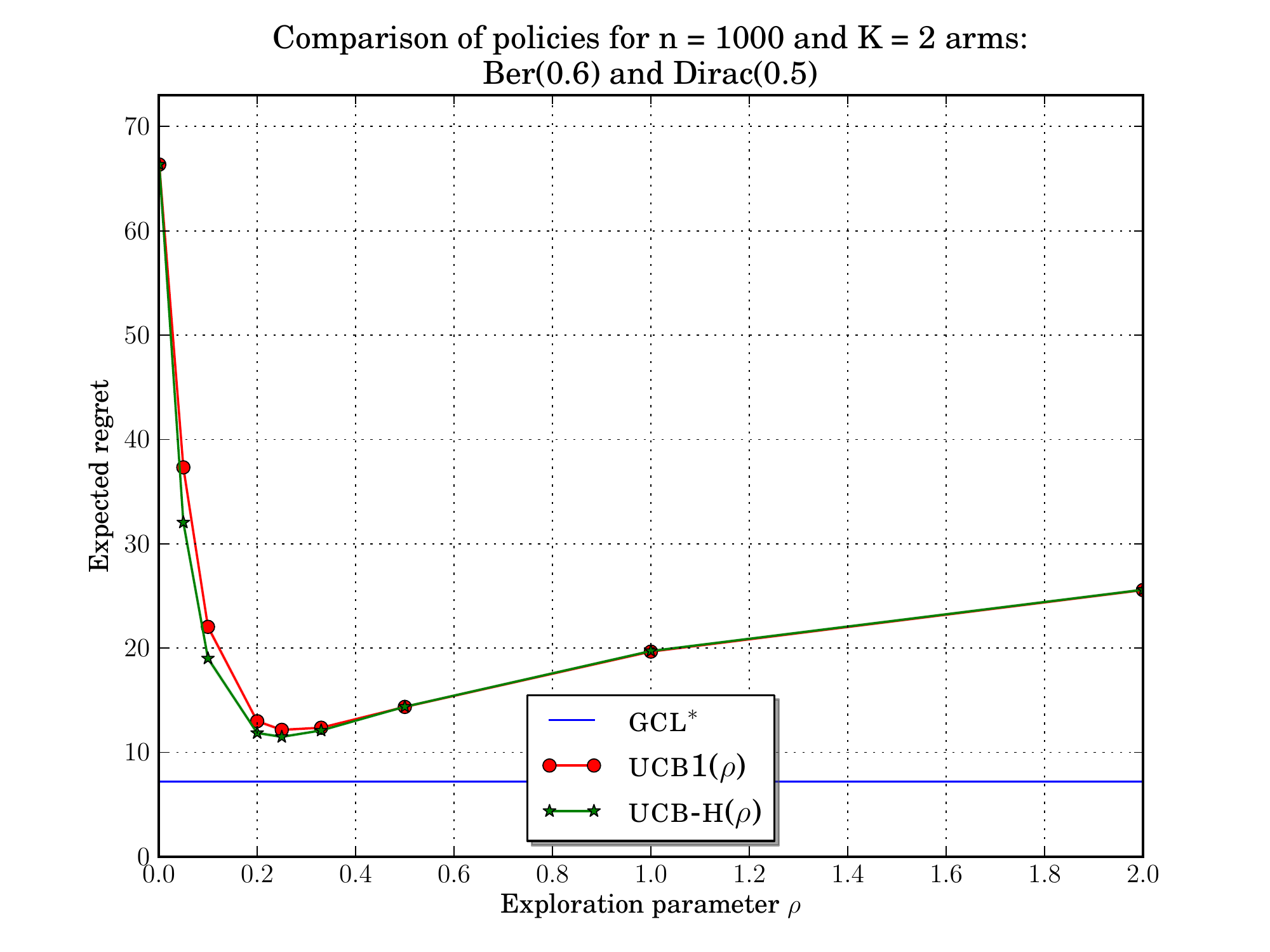}}
\scalebox{0.29}{\includegraphics{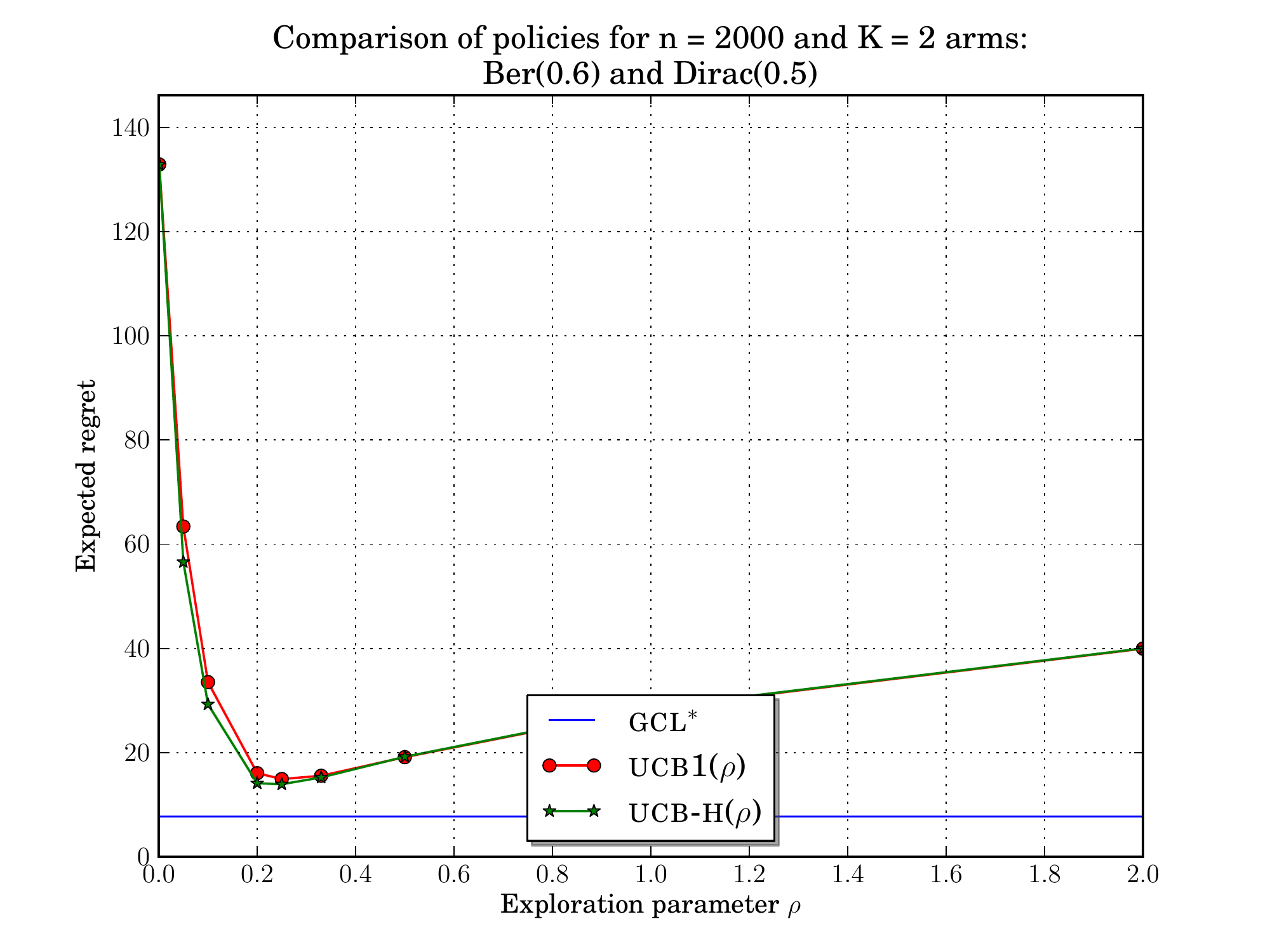}}
\end{center}
\caption{Expected regret of {\sc ucb1}($\rho$), {\sc ucb-h}($\rho$) and {\sc gcl}$^*$
for various bandit settings (1/2). }
\label{fig:1}
\end{figure}

\begin{figure}
\begin{center}
\scalebox{0.29}{\includegraphics{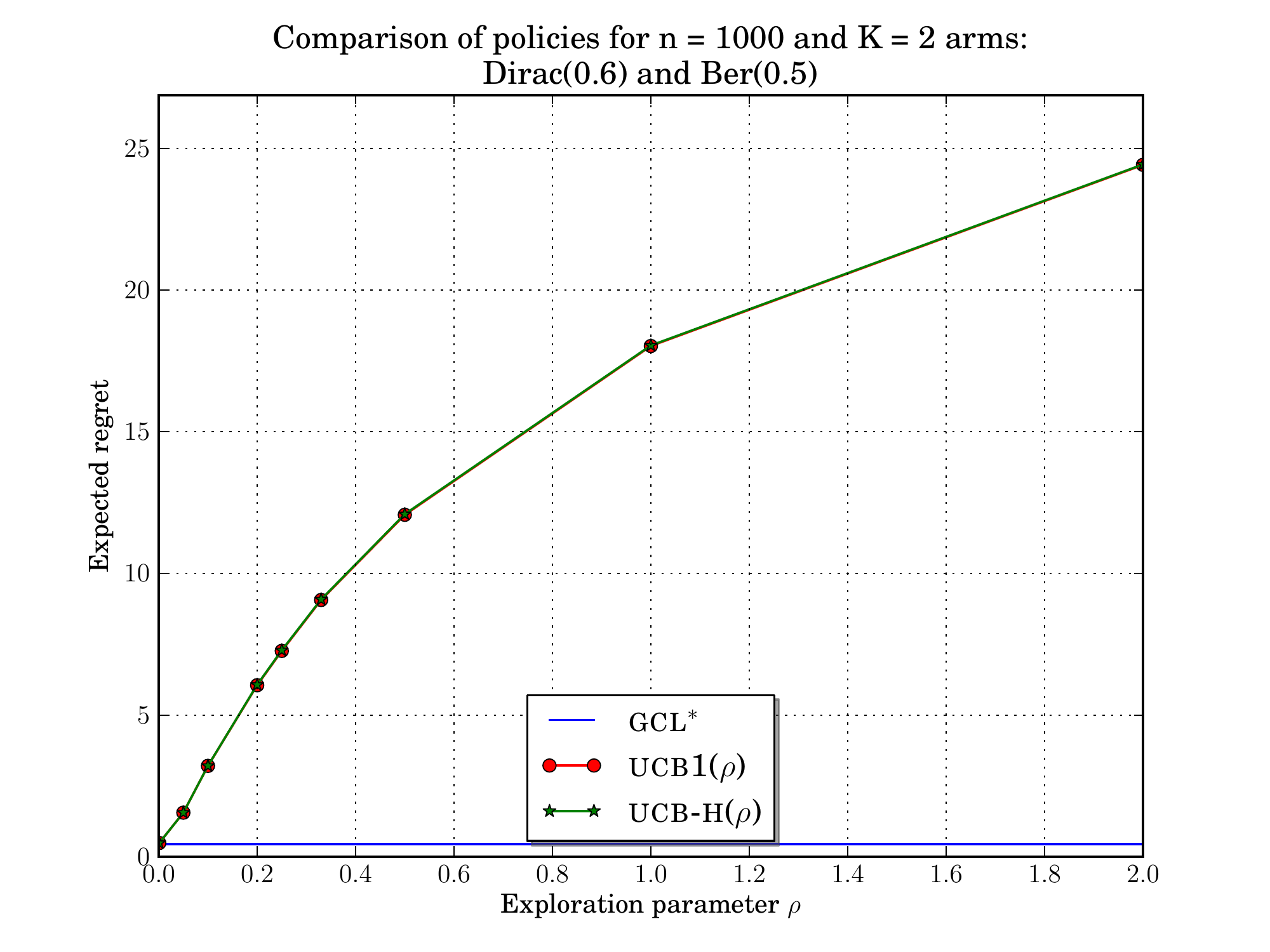}}
\scalebox{0.29}{\includegraphics{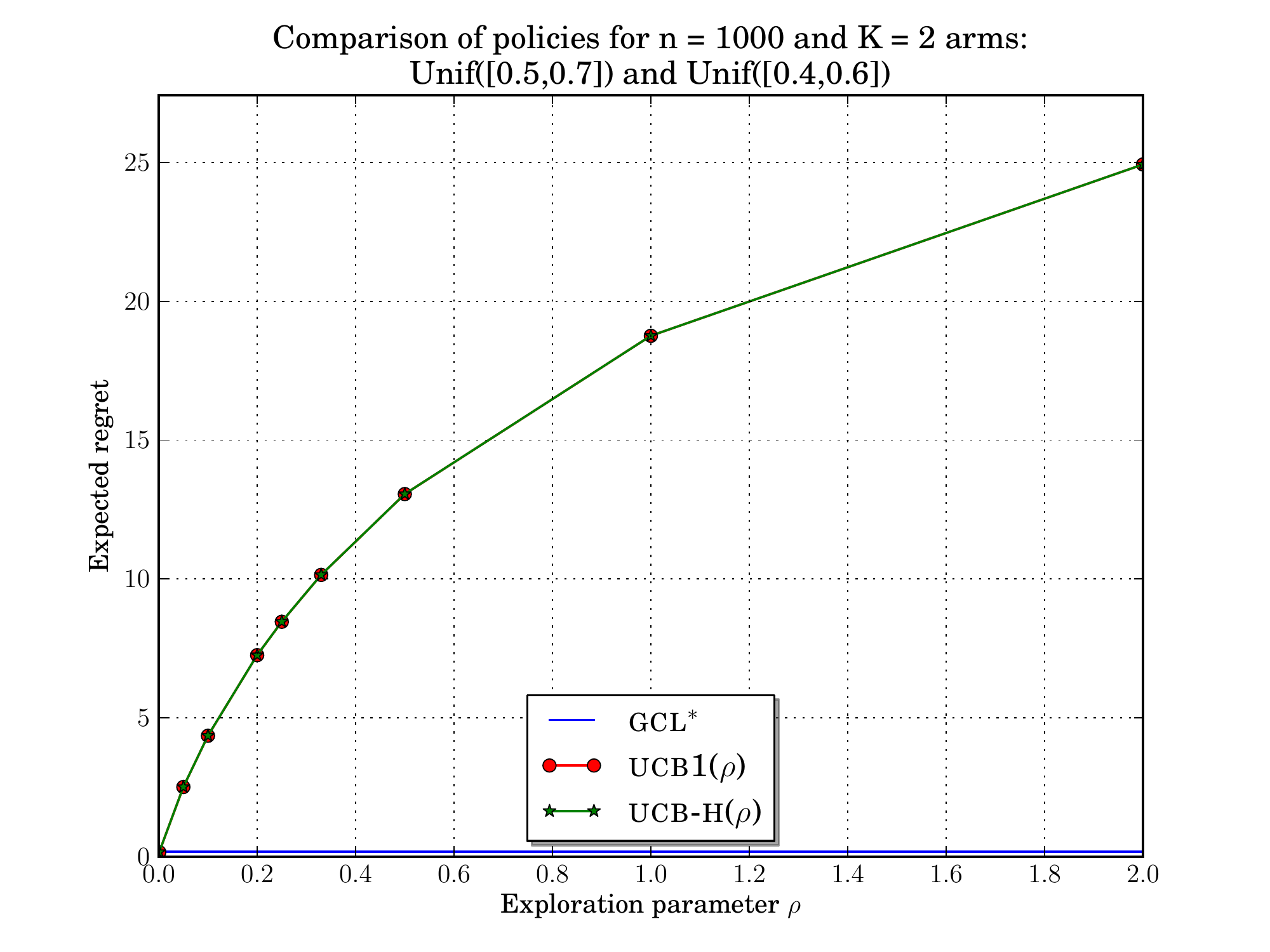}}
\\
\scalebox{0.29}{\includegraphics{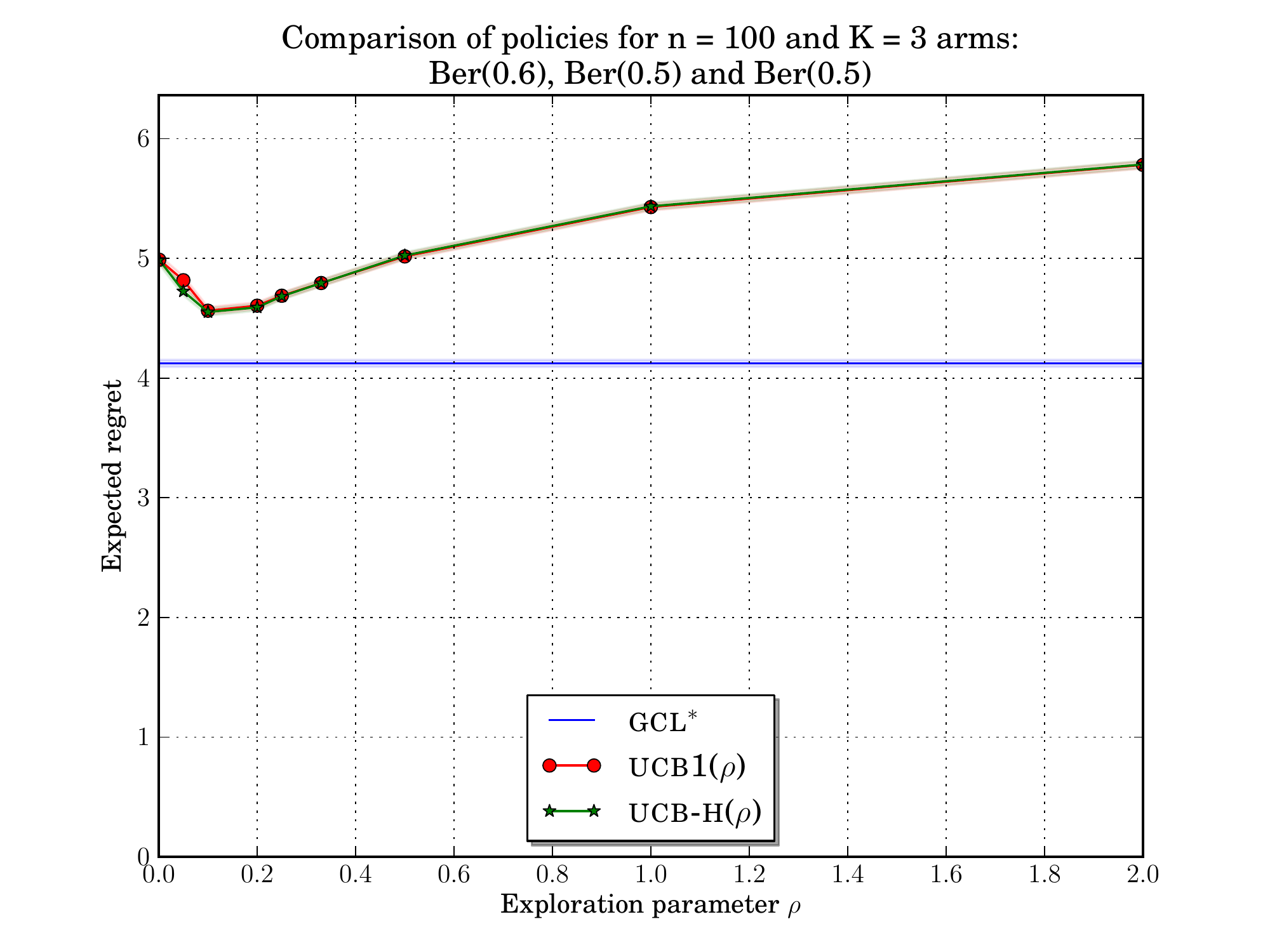}}
\scalebox{0.29}{\includegraphics{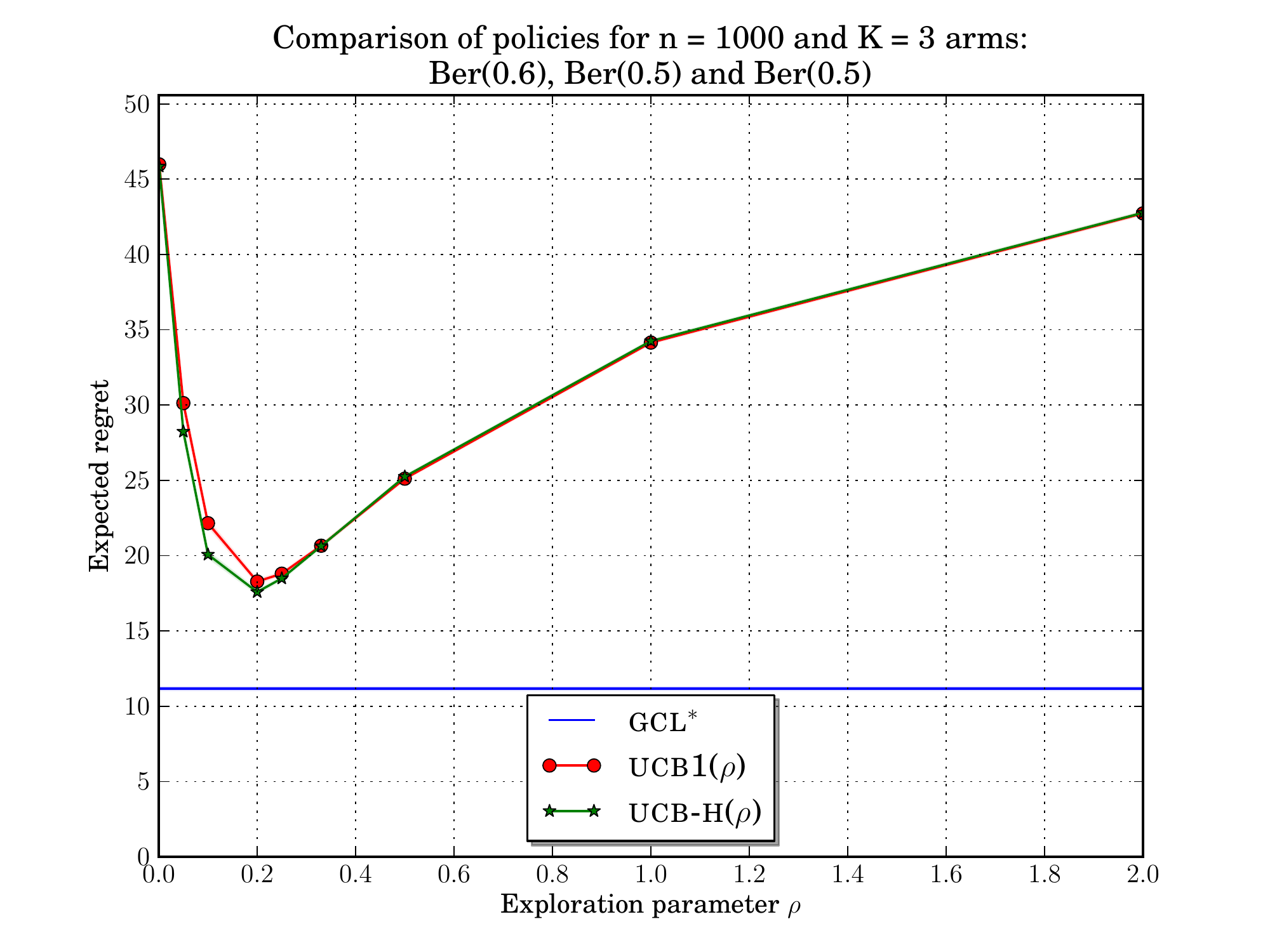}}
\\
\scalebox{0.29}{\includegraphics{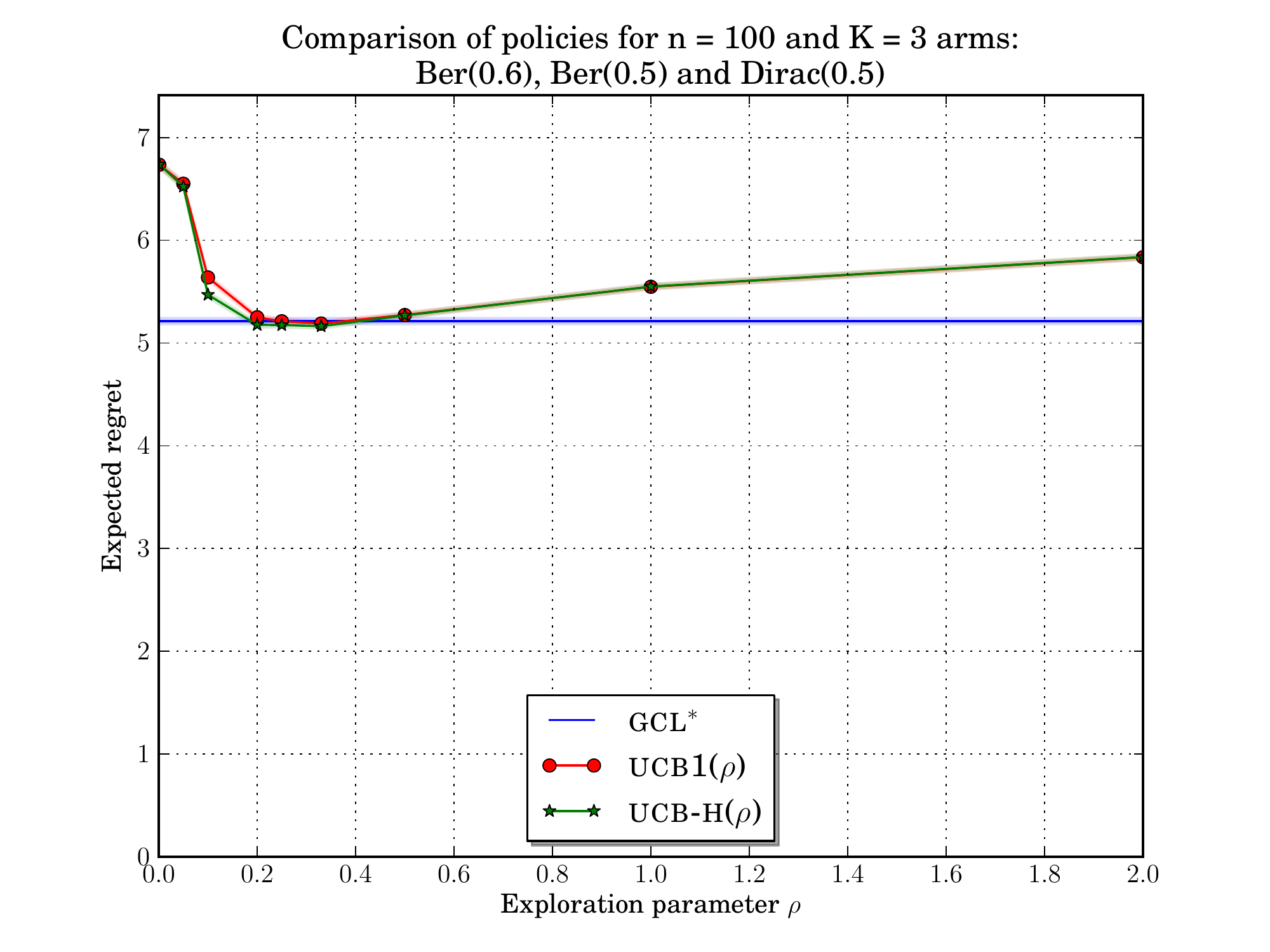}}
\scalebox{0.29}{\includegraphics{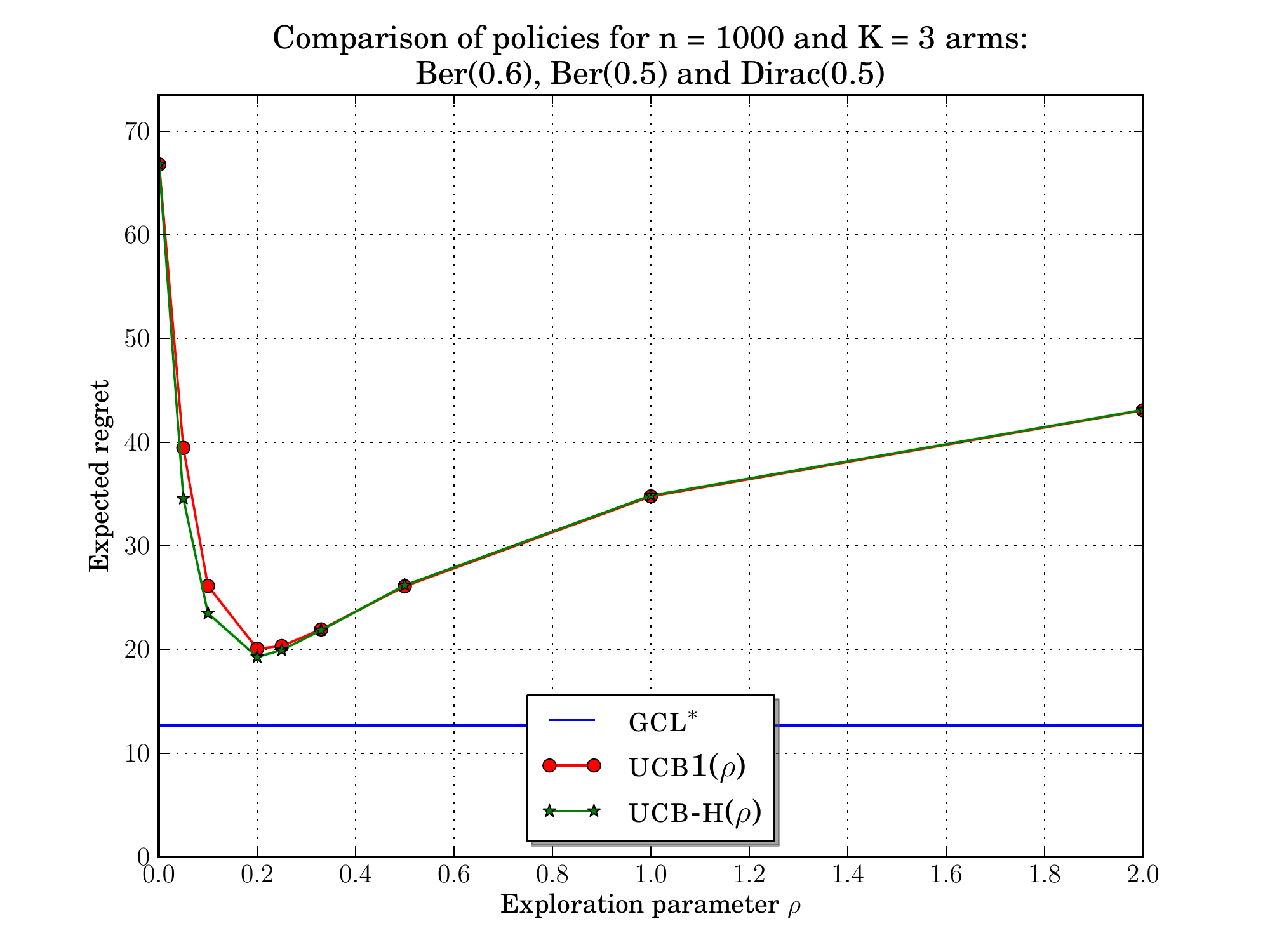}}
\\
\scalebox{0.29}{\includegraphics{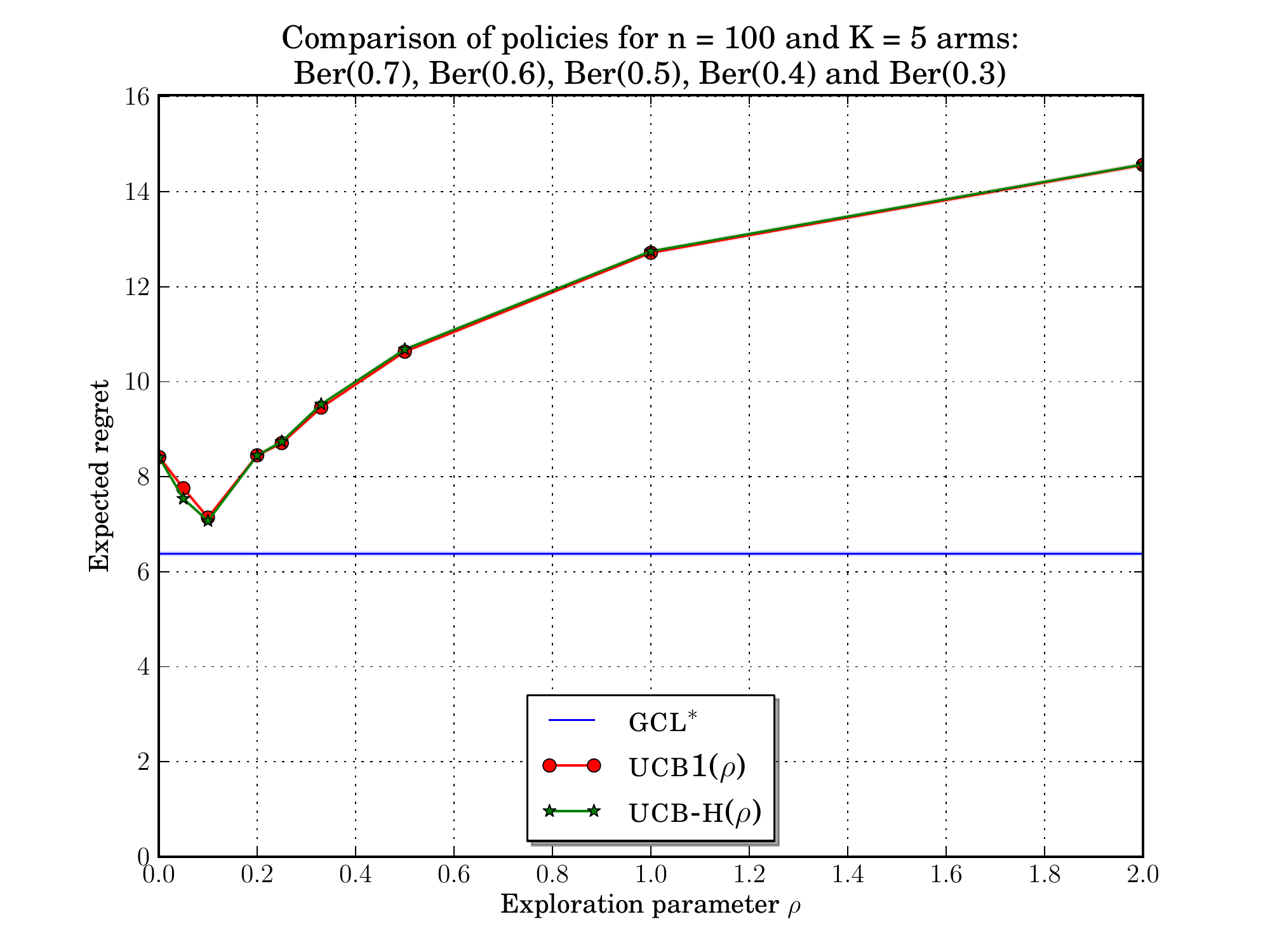}}
\scalebox{0.29}{\includegraphics{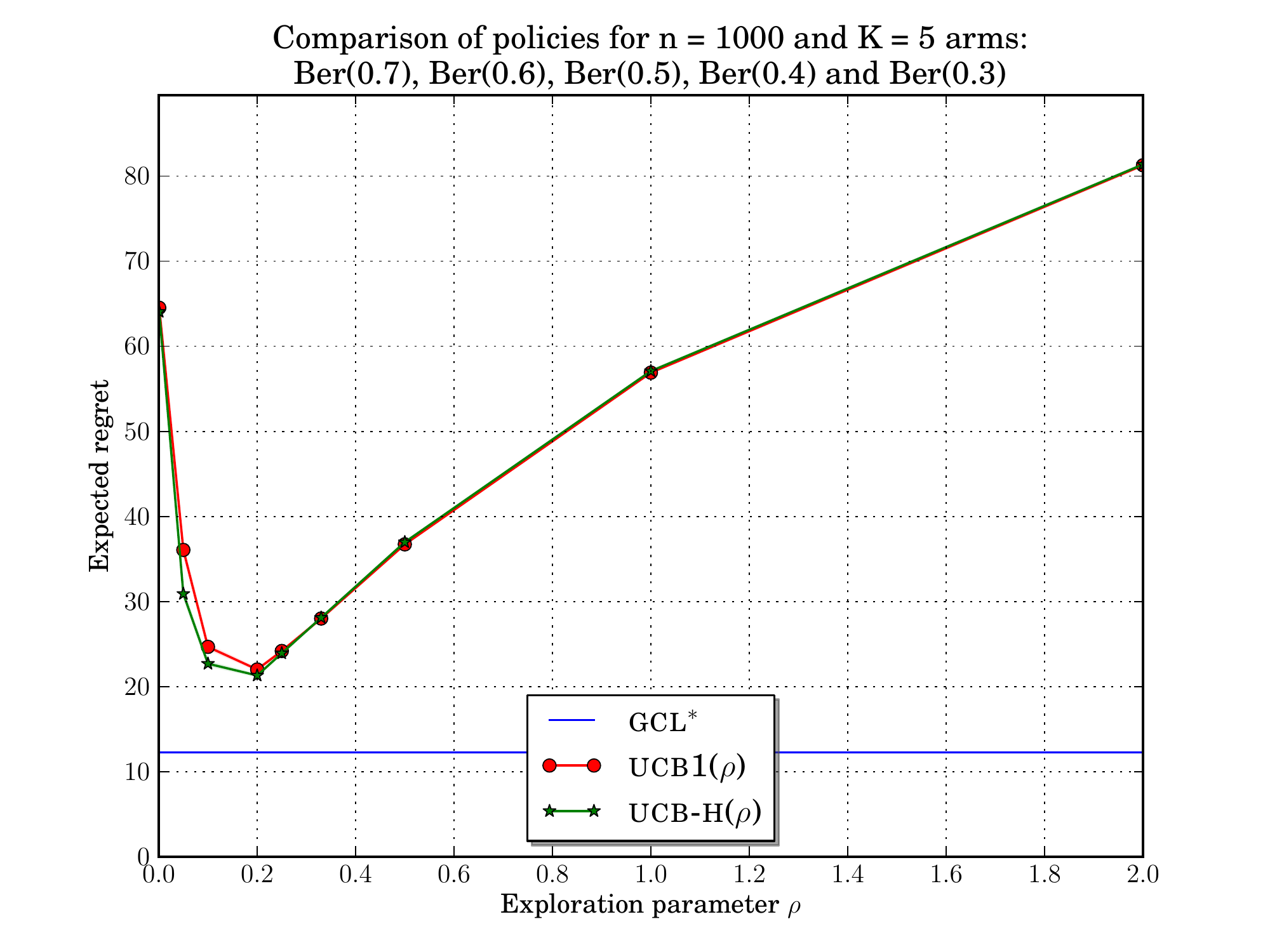}}
\end{center}
\caption{Expected regret of {\sc ucb1}($\rho$), {\sc ucb-h}($\rho$) and {\sc gcl}$^*$
for various bandit settings (2/2)} \label{fig:2}
\end{figure}



\subsection{The gain of knowing the horizon}

There is consistently a slight gain in using {\sc ucb-h}($\rho$) instead of {\sc ucb1}($\rho$) both in terms of expected regret 
(see Figures \ref{fig:1} and \ref{fig:2}) and in terms of deviations (see Figures \ref{fig:devbeg} to \ref{fig:devend} in pages 
\pageref{fig:devbeg} to \pageref{fig:devend}). 

The latter figures also show the following. If the agent's target is not to minimize its expected regret, but to minimize a quantile function at a given confidence level, increasing the exploration parameter $\rho$ (for instance taking $\rho=0.5$ instead of $\rho=0.2$) can lead to a large improvement in difficult bandit problems, but also a large decrement simple bandit problems. Besides, for large values of $\rho$ or for simple bandit problems, {\sc ucb1}($\rho$) and {\sc ucb-h}($\rho$) behave similarly and thus, there is not much gain in using {\sc ucb-h} policies instead of {\sc ucb1} policies.

\subsection{The gain of knowing the mean reward of the optimal arm}

When the mean reward $\mu^*$ of the optimal arm is known, there is a strong gain in using this information to design the policy. In all our experiments comparing the expected regret of policies, summarized in Figures \ref{fig:1} and \ref{fig:2}, the parameter-free and anytime policy 
{\sc gcl$^*$} performs a lot better than {\sc ucb1}($\rho$) and {\sc ucb-h}($\rho$), even for the best $\rho$, except in one simulation (for $n=100$, and $K=2$ arms: a Bernoulli distribution of parameter $0.6$ and a Dirac distribution at $0.5$). 
In terms of thinness of the tail distribution of the regret, {\sc gcl$^*$} outperforms all policies in simple bandit problems, while in difficult bandit problems, it generally outperforms {\sc ucb1}($0.2$) and {\sc ucb-h}($0.2$) and performs similarly to {\sc ucb1}($0.5$) and {\sc ucb-h}($0.5$) (see Figures \ref{fig:devbeg} to \ref{fig:devend} in pages 
\pageref{fig:devbeg} to \pageref{fig:devend}). 

The gain of knowing $\mu^*$ is more important than the gain of knowing the horizon.
It is not clear to us that we can have a significant gain in knowing both $\mu^*$ and the horizon $n$ compared to just knowing $\mu^*$ .

\begin{figure}
\begin{center}
\scalebox{0.18}{\includegraphics{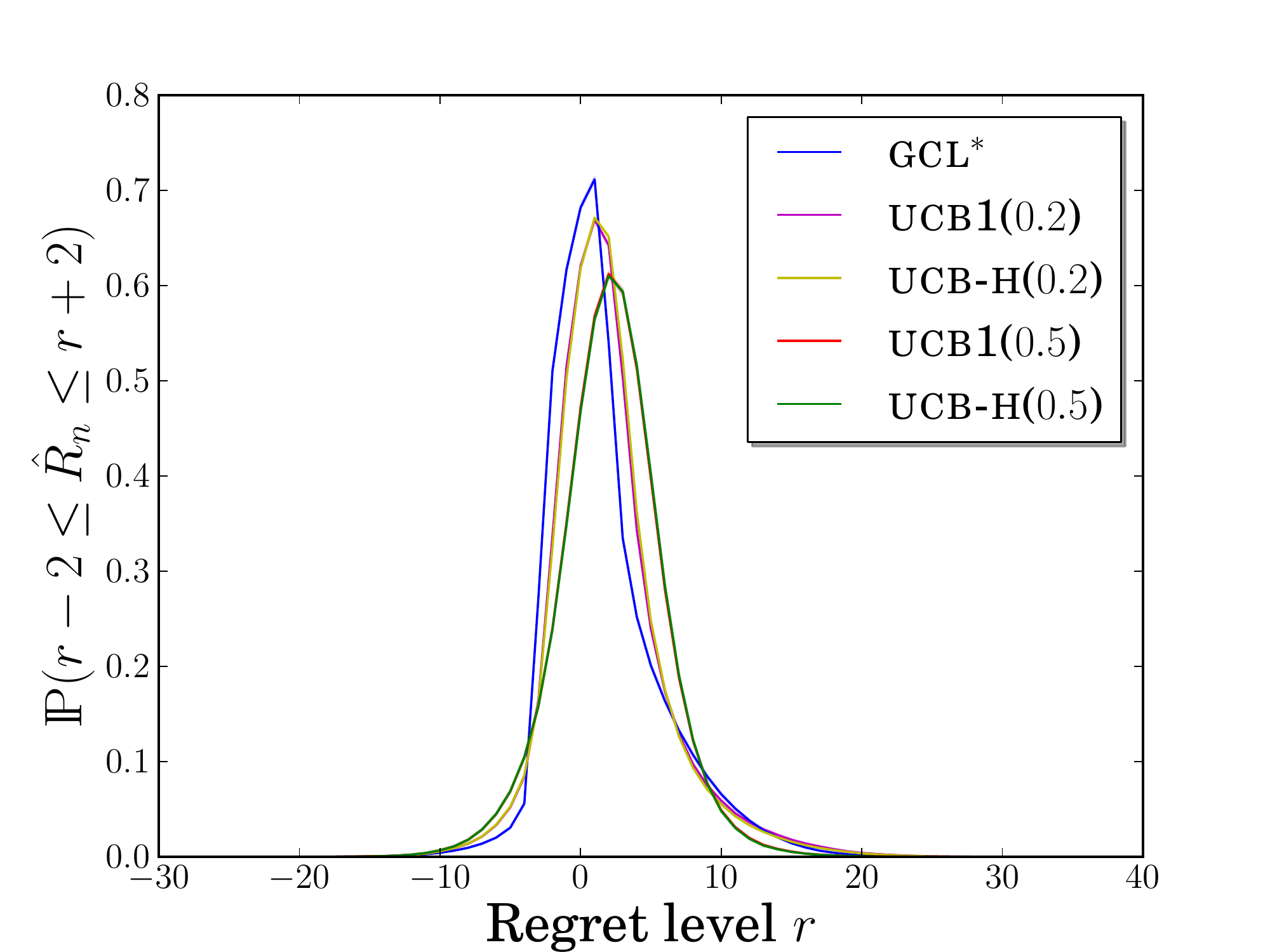}}
\scalebox{0.18}{\includegraphics{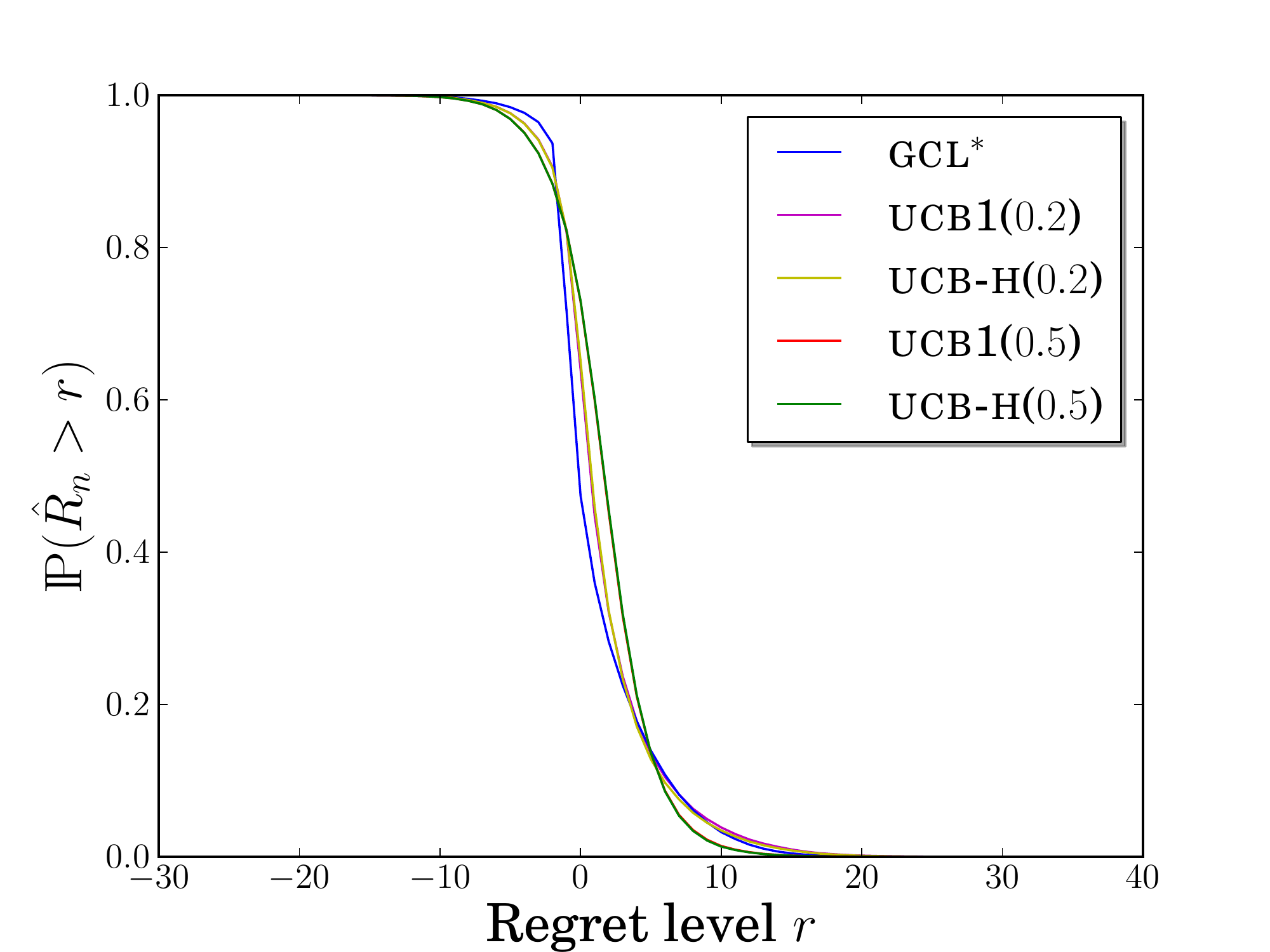}}
\scalebox{0.18}{\includegraphics{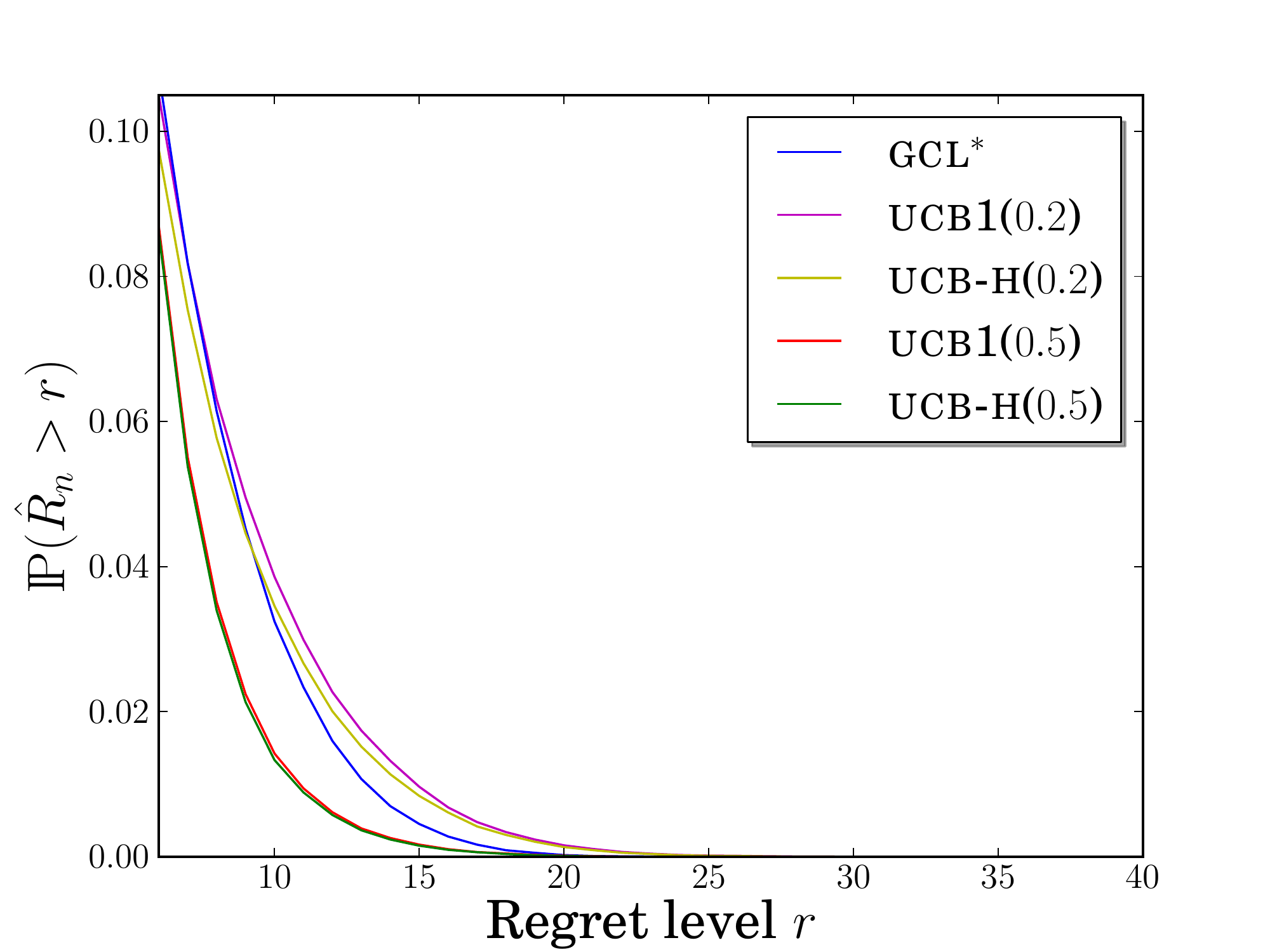}}
\end{center}
\caption{Comparison of policies for n = 100 and K = 2 arms:
Ber(0.6) and Ber(0.5). Left: smoothed probability mass function. Center and right: tail distribution of the regret.}
\label{fig:devbeg}
\end{figure}

\begin{figure}
\begin{center}
\scalebox{0.18}{\includegraphics{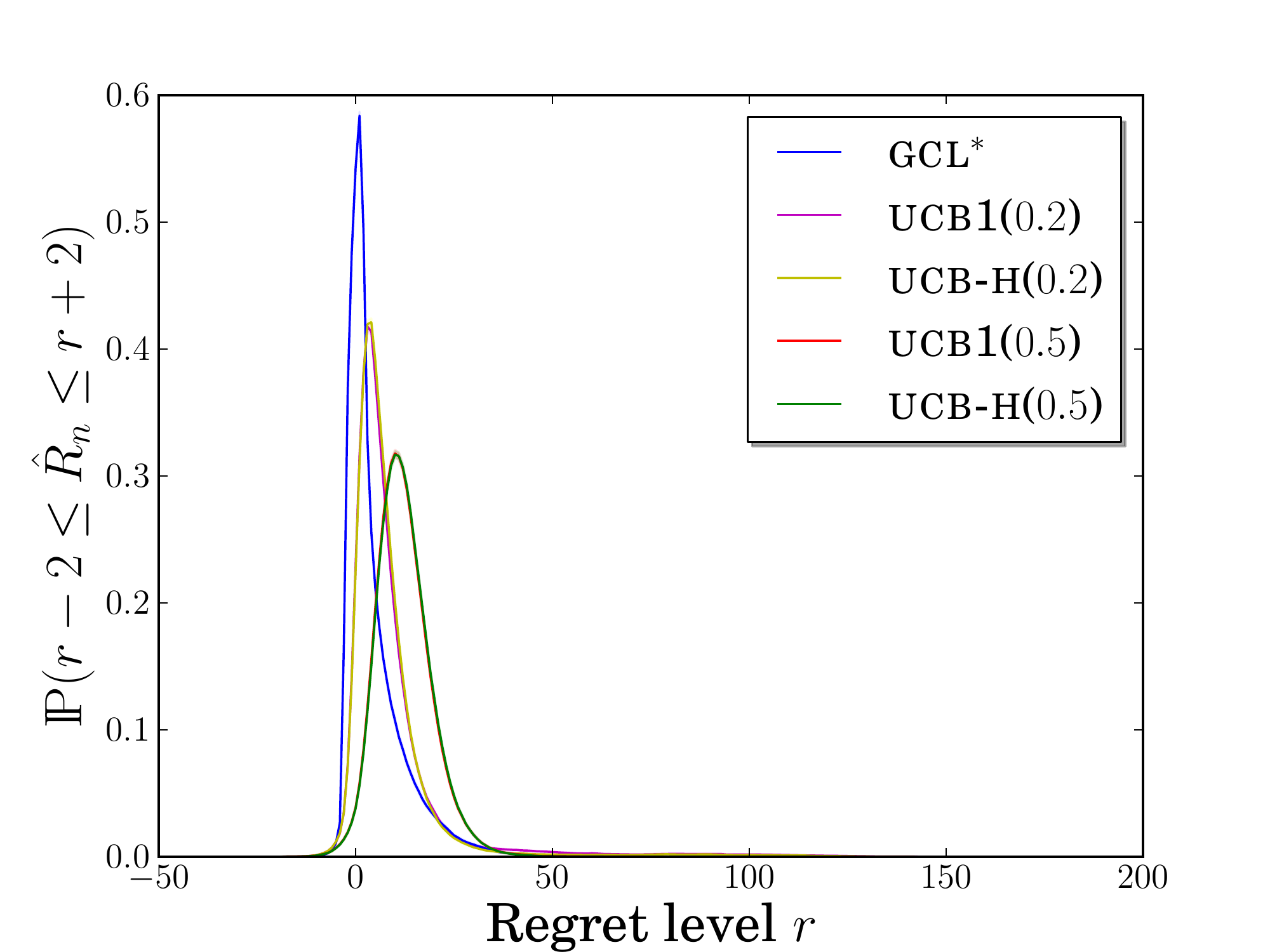}}
\scalebox{0.18}{\includegraphics{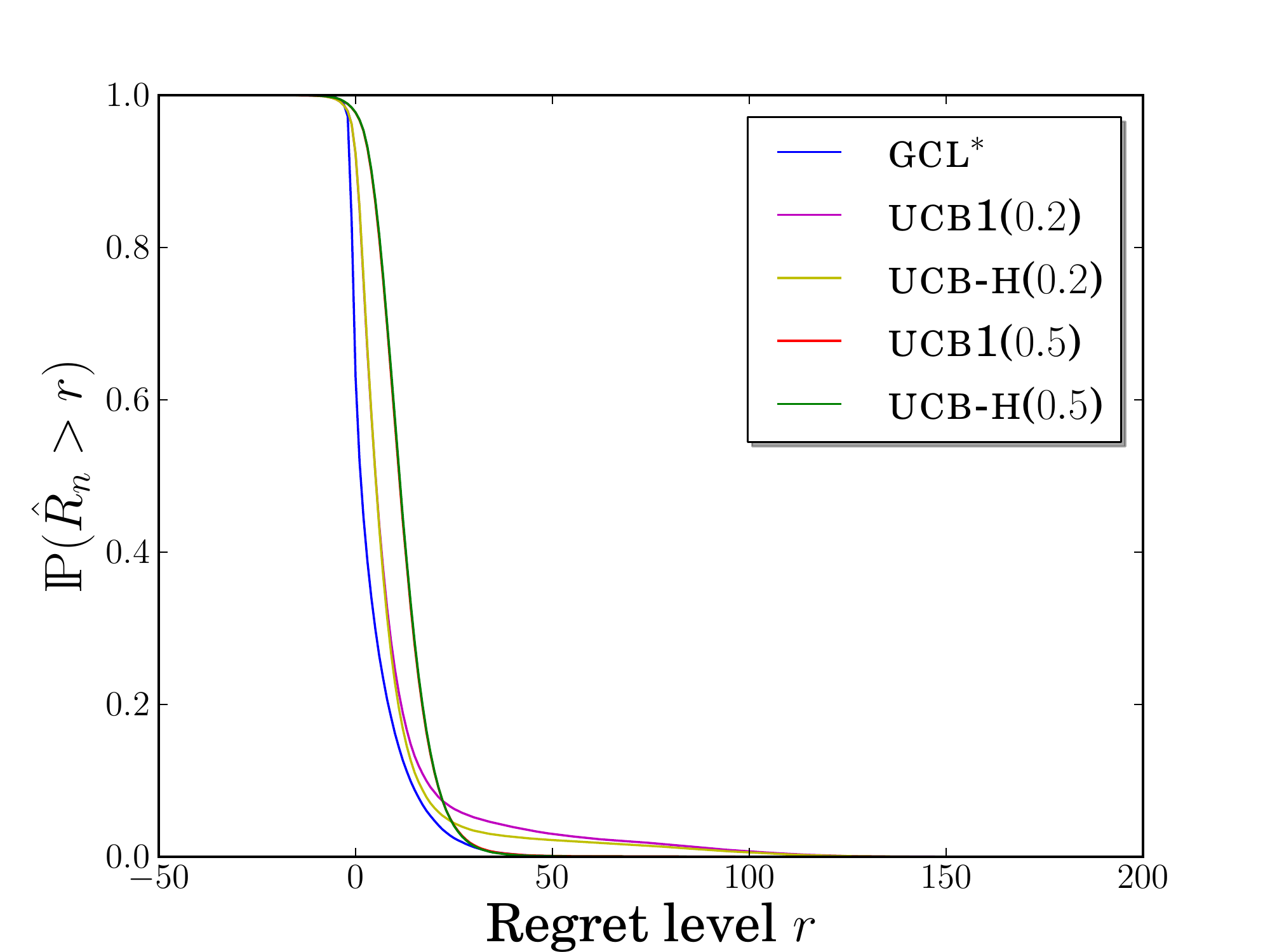}}
\scalebox{0.18}{\includegraphics{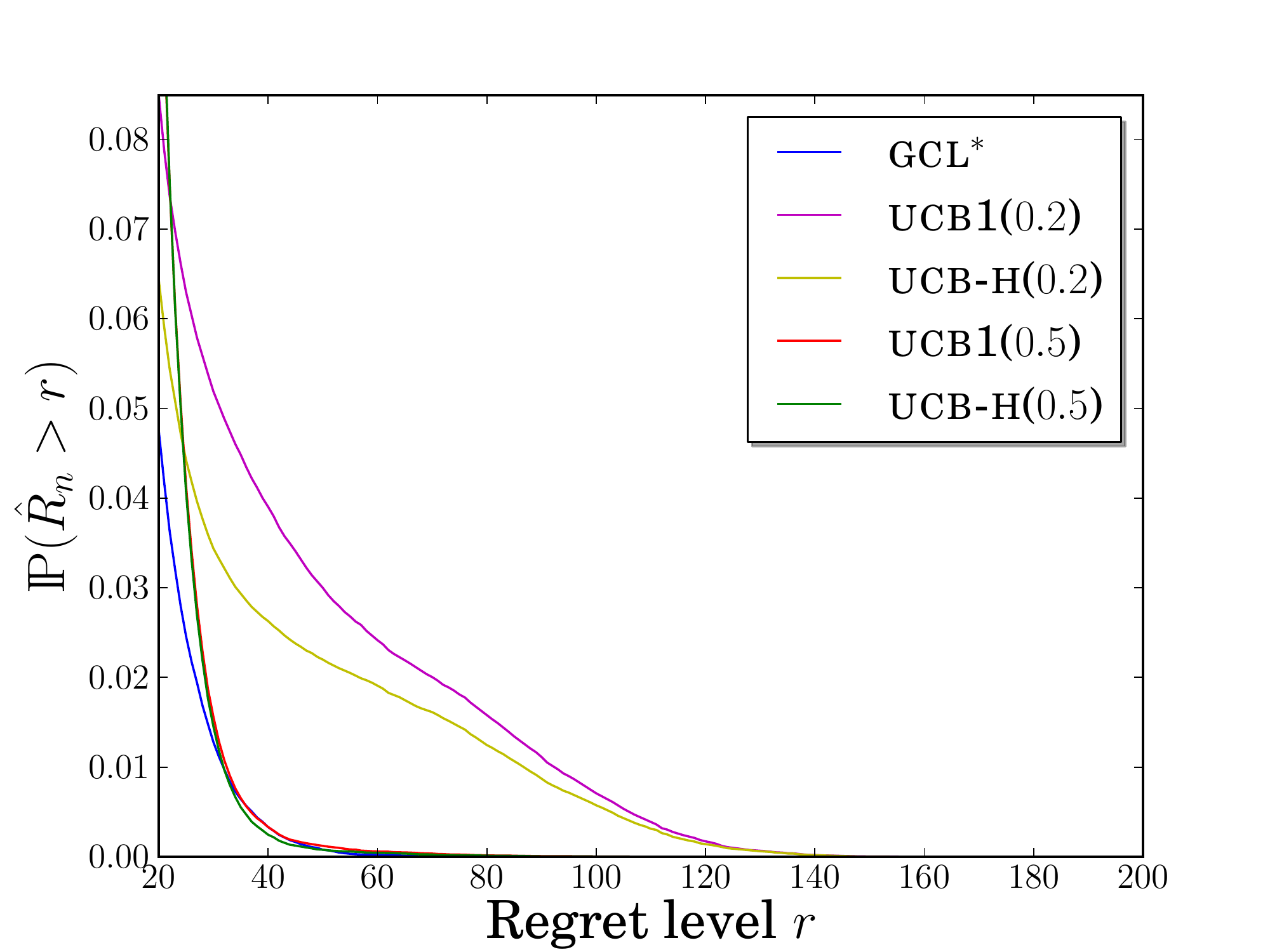}}
\end{center}
\caption{Comparison of policies for n = 1000 and K = 2 arms:
Ber(0.6) and Ber(0.5). Left: smoothed probability mass function. Center and right: tail distribution of the regret.}
\end{figure}

\begin{figure}
\begin{center}
\scalebox{0.18}{\includegraphics{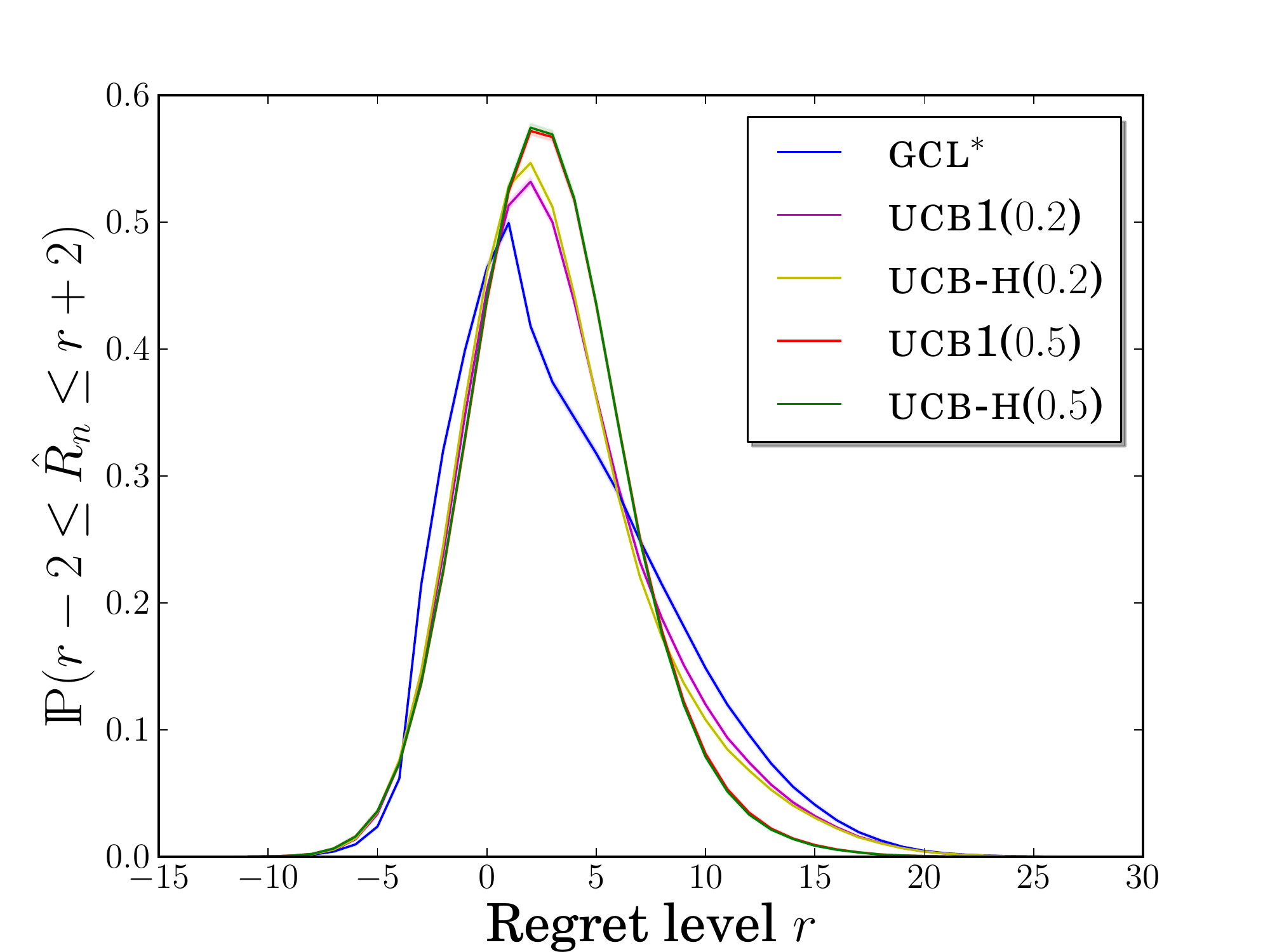}}
\scalebox{0.18}{\includegraphics{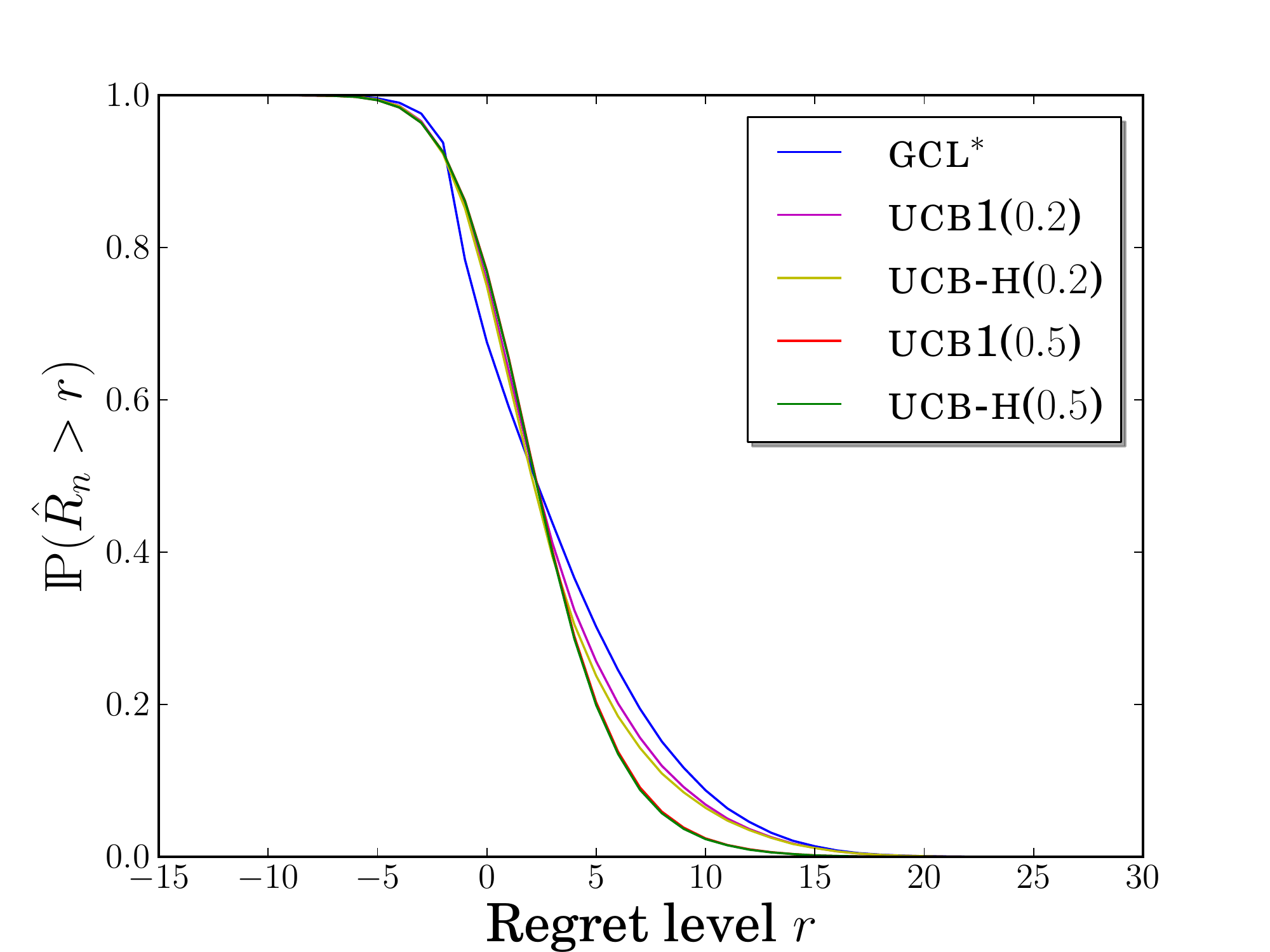}}
\scalebox{0.18}{\includegraphics{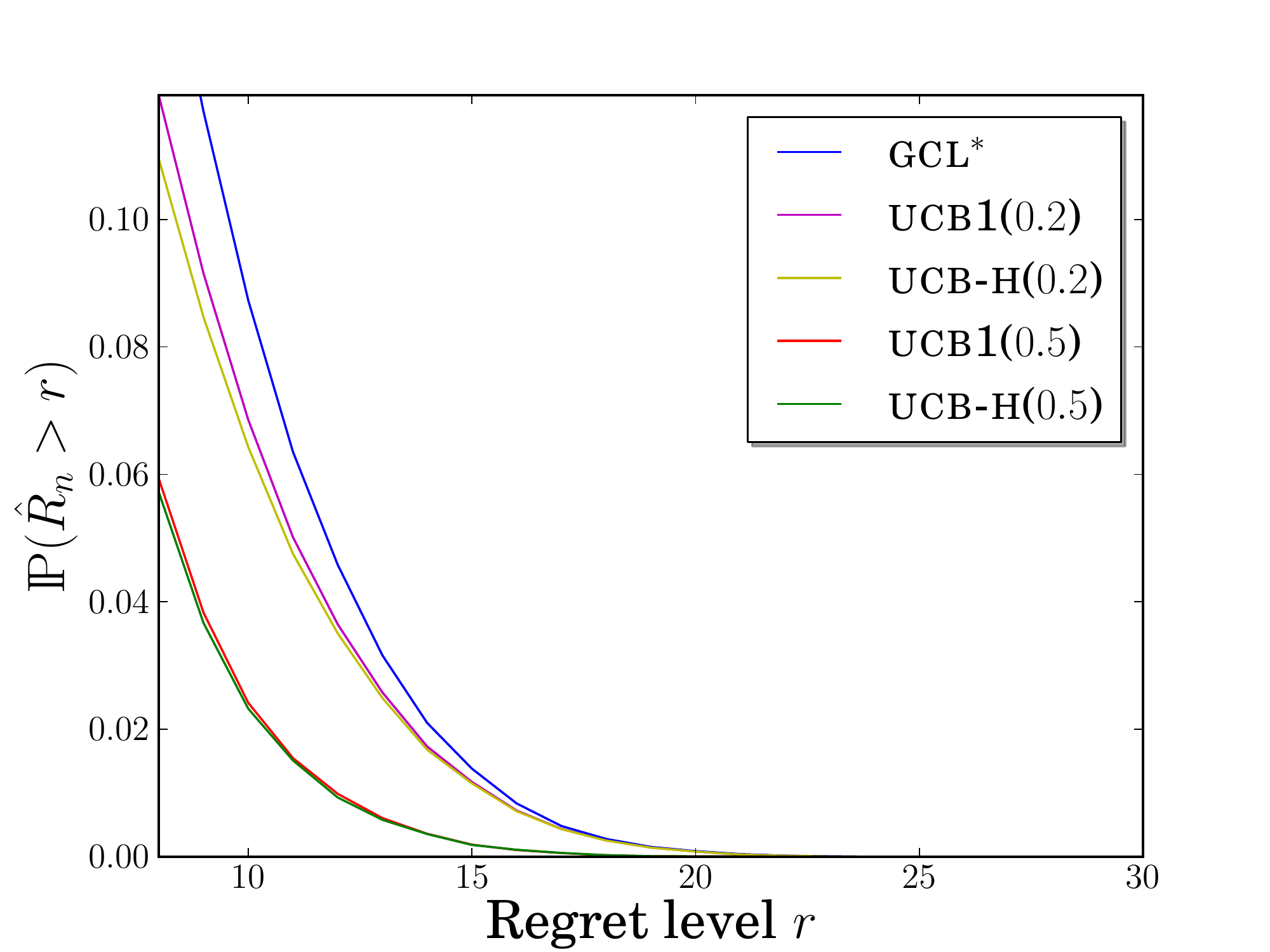}}
\end{center}
\caption{Comparison of policies for n = 100 and K = 2 arms:
Ber(0.6) and Dirac(0.5). Left: smoothed probability mass function. Center and right: tail distribution of the regret.}
\end{figure}

\begin{figure}
\begin{center}
\scalebox{0.18}{\includegraphics{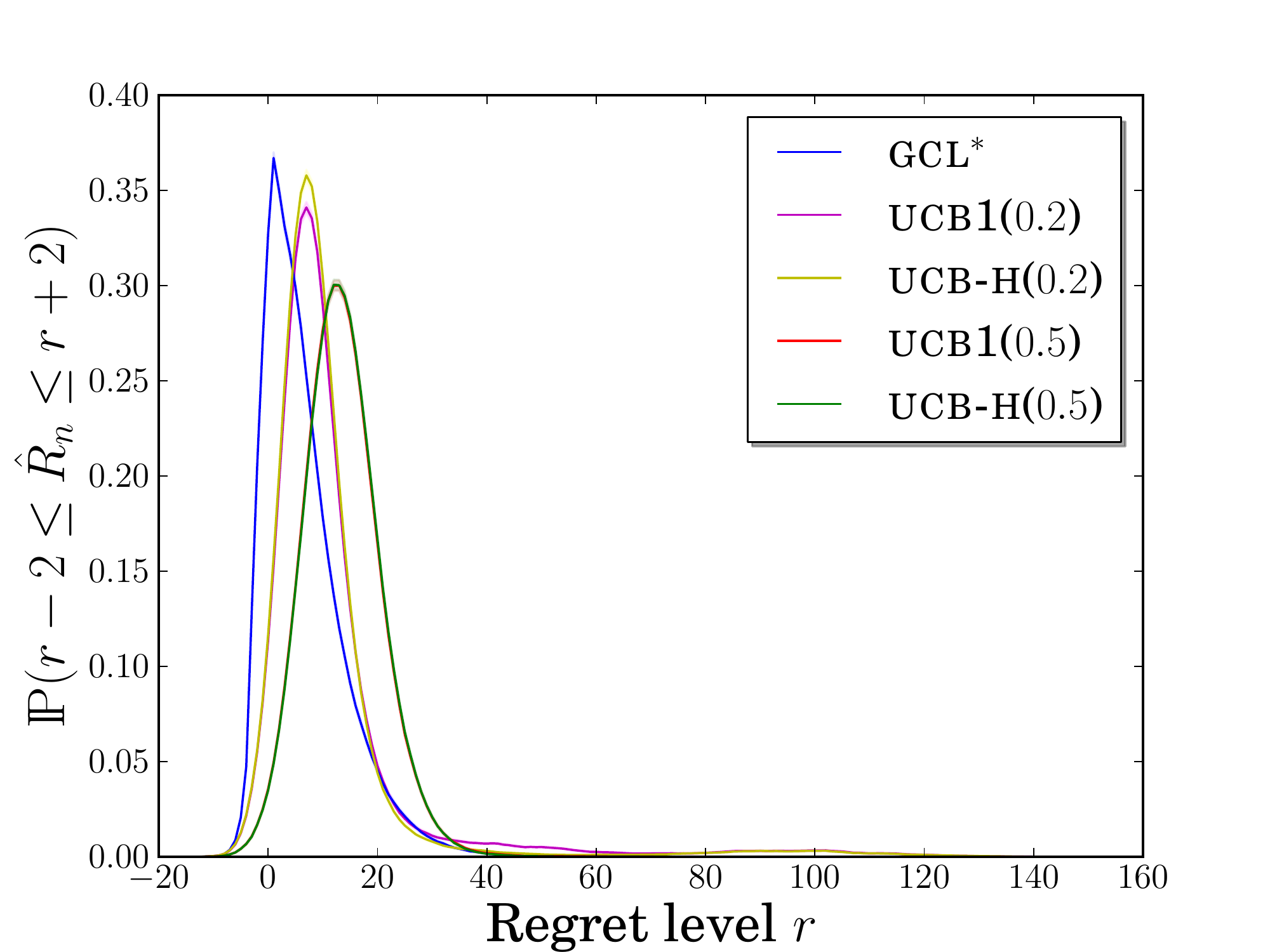}}
\scalebox{0.18}{\includegraphics{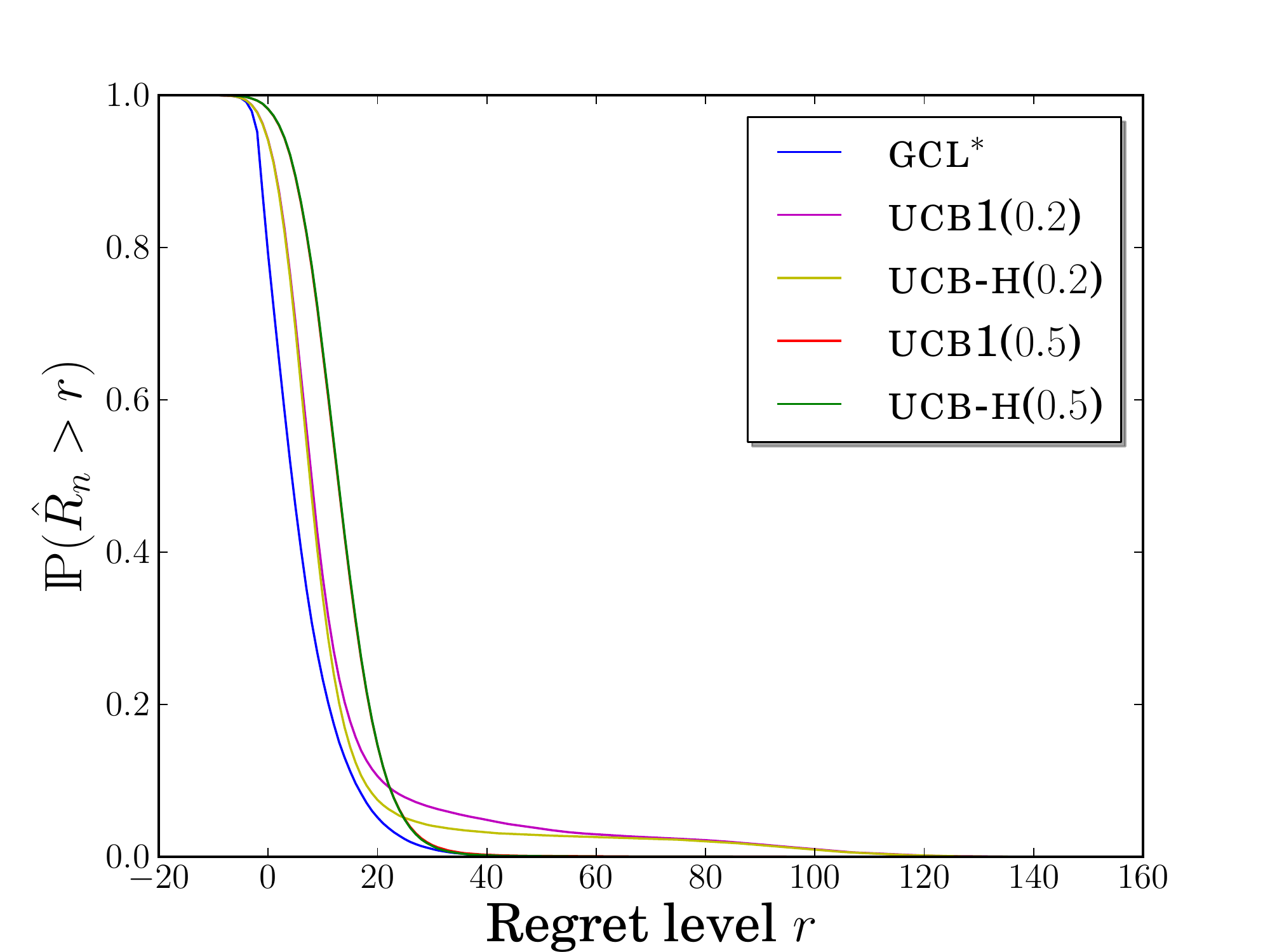}}
\scalebox{0.18}{\includegraphics{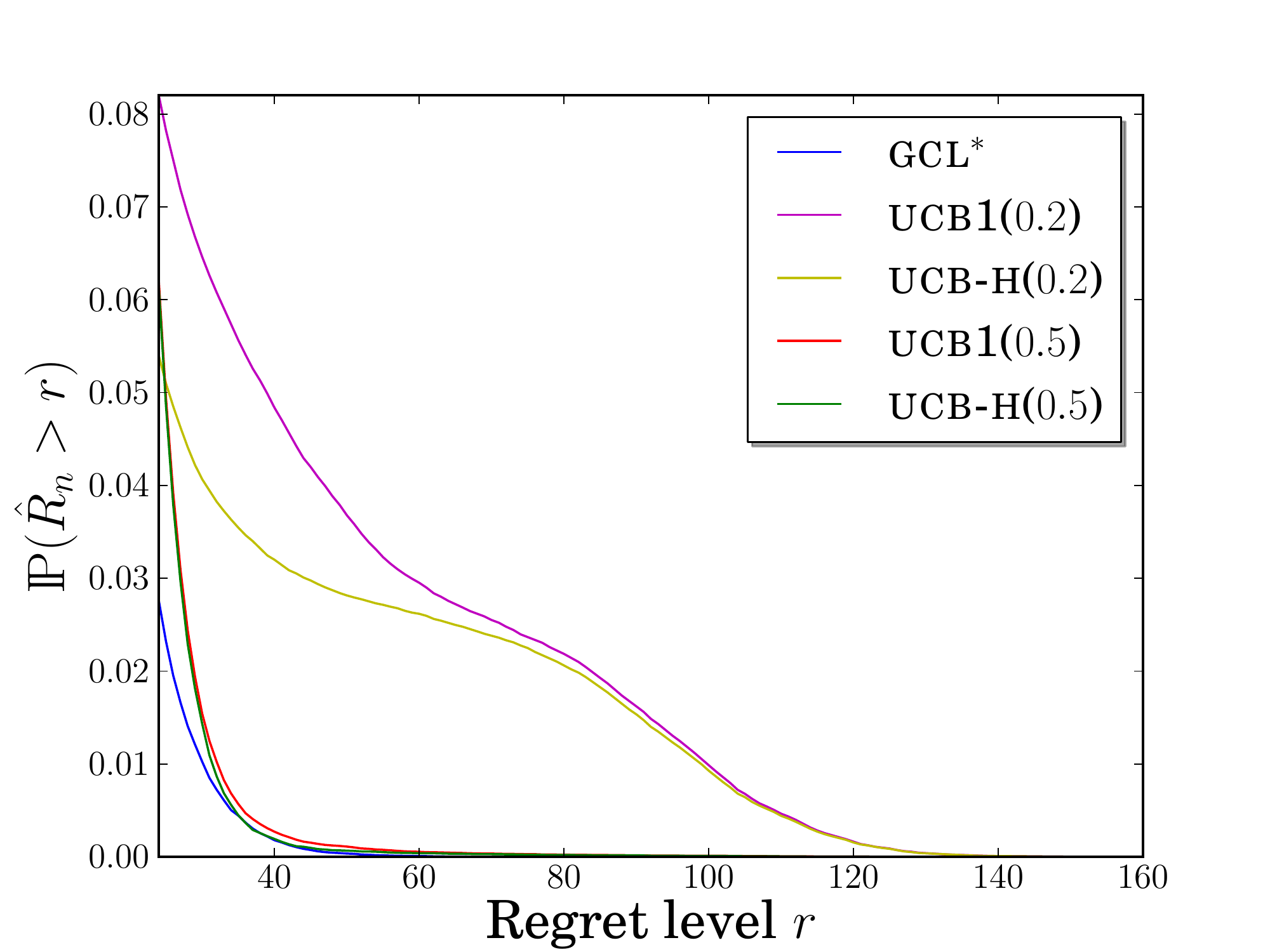}}
\end{center}
\caption{Comparison of policies for n = 1000 and K = 2 arms:
Ber(0.6) and Dirac(0.5). Left: smoothed probability mass function. Center and right: tail distribution of the regret.}
\end{figure}

\begin{figure}
\begin{center}
\scalebox{0.21}{\includegraphics{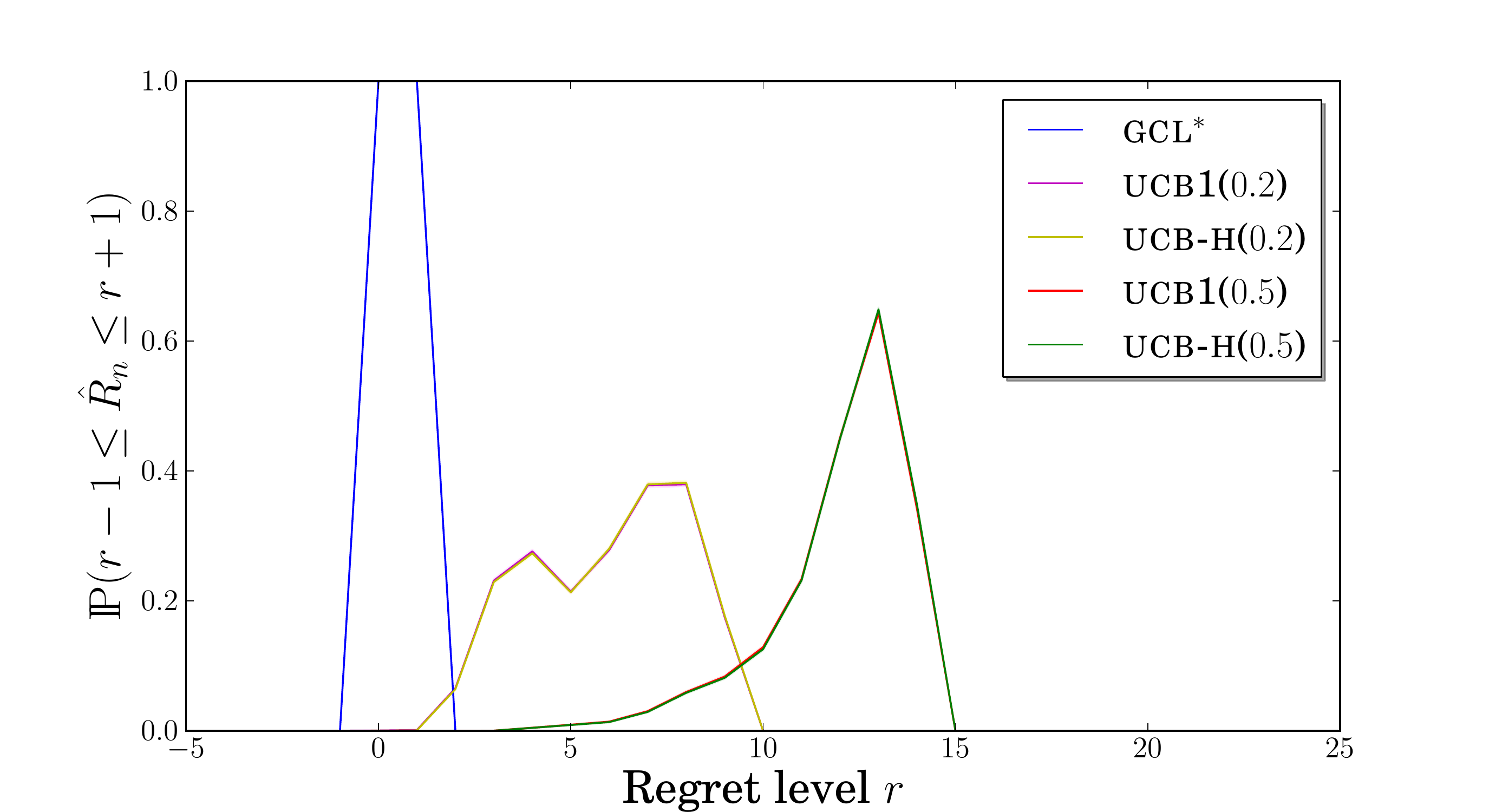}}
\scalebox{0.21}{\includegraphics{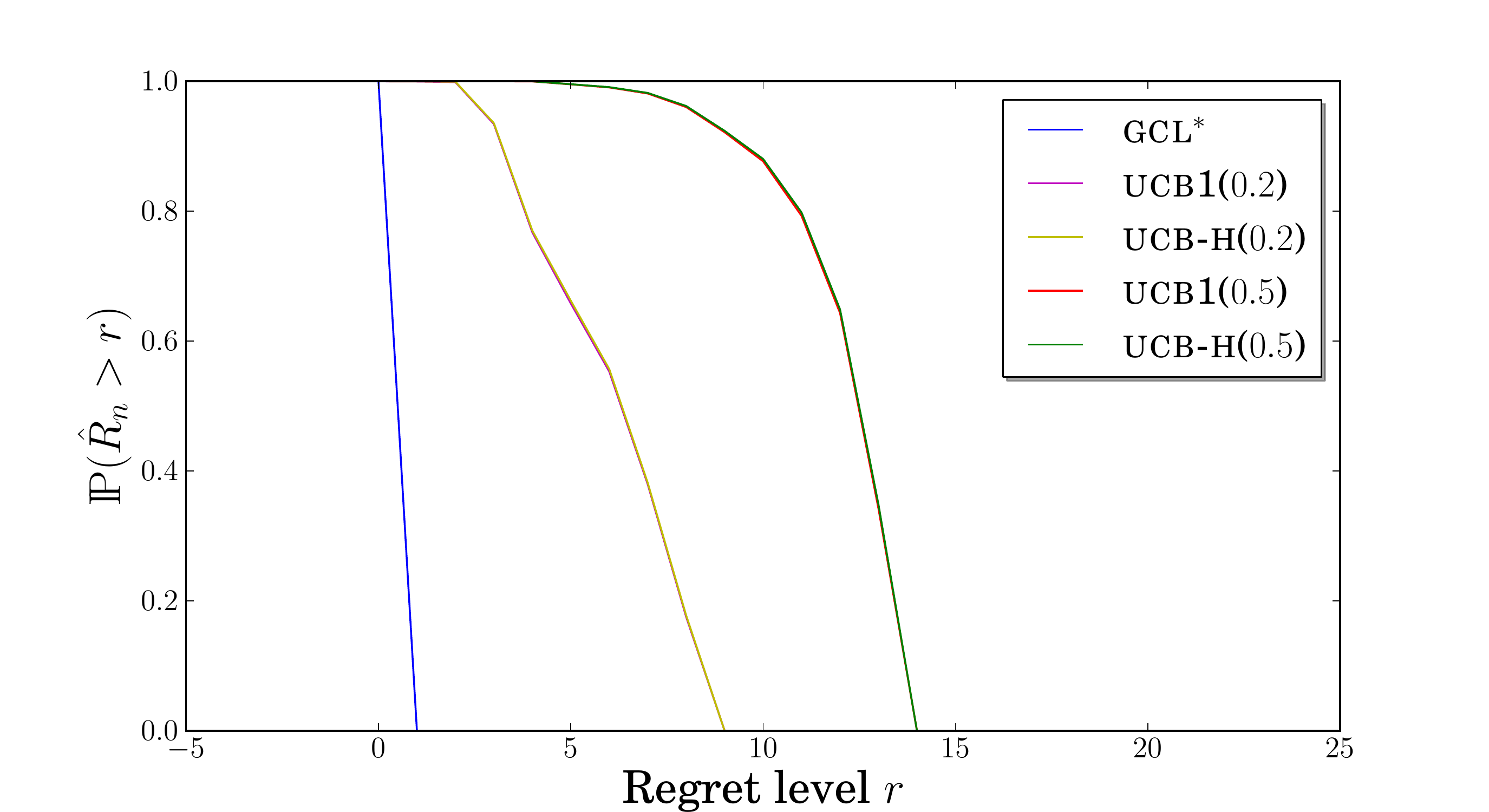}}
\end{center}
\caption{Comparison of policies for n = 1000 and K = 2 arms:
Dirac(0.6) and Ber(0.5). Left: smoothed probability mass function. Right: tail distribution of the regret.
In this simple bandit problem, {\sc ucb1} and {\sc ucb-h} curves are almost identical.}
\end{figure}

\begin{figure}
\begin{center}
\scalebox{0.21}{\includegraphics{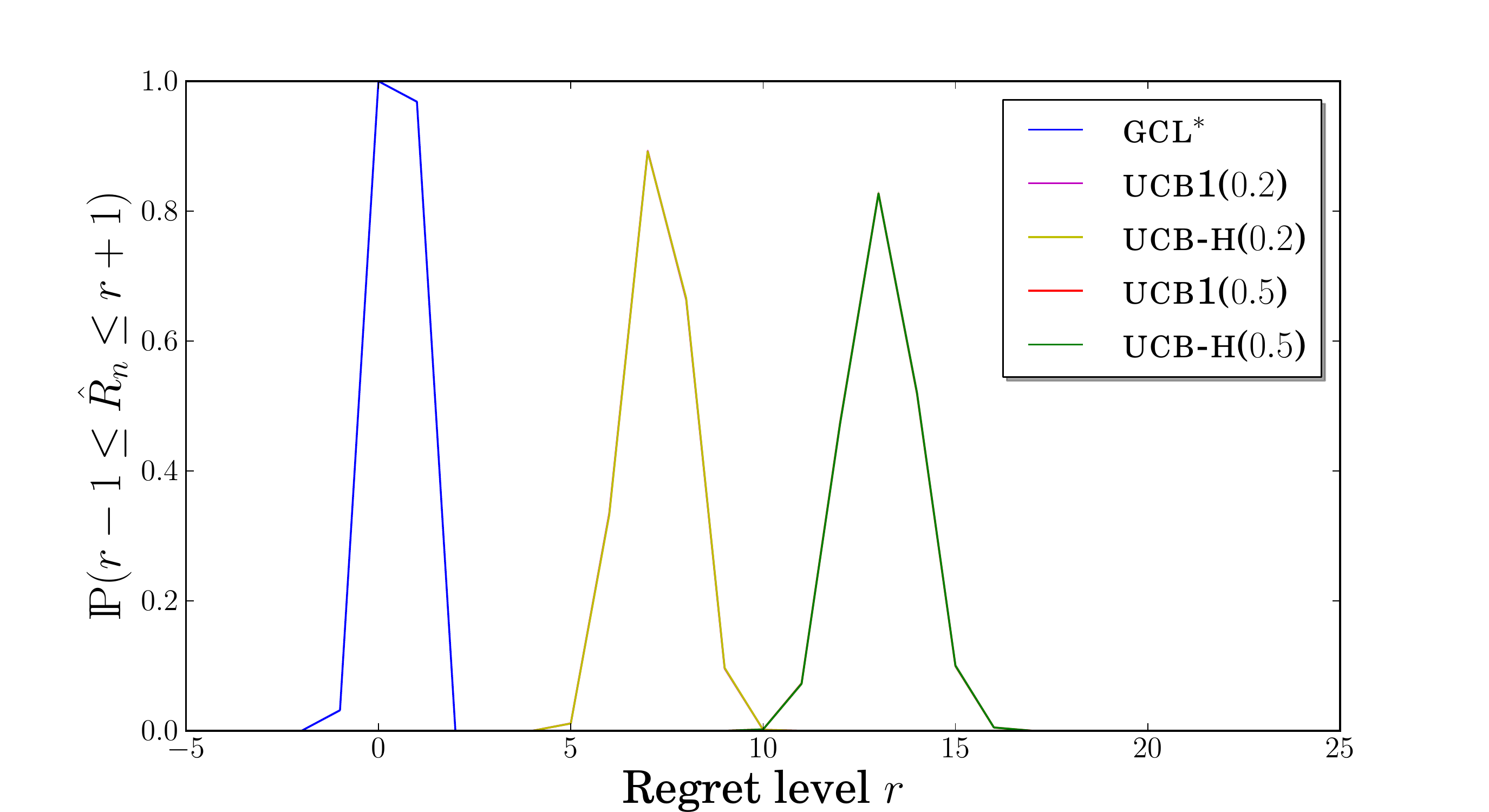}}
\scalebox{0.21}{\includegraphics{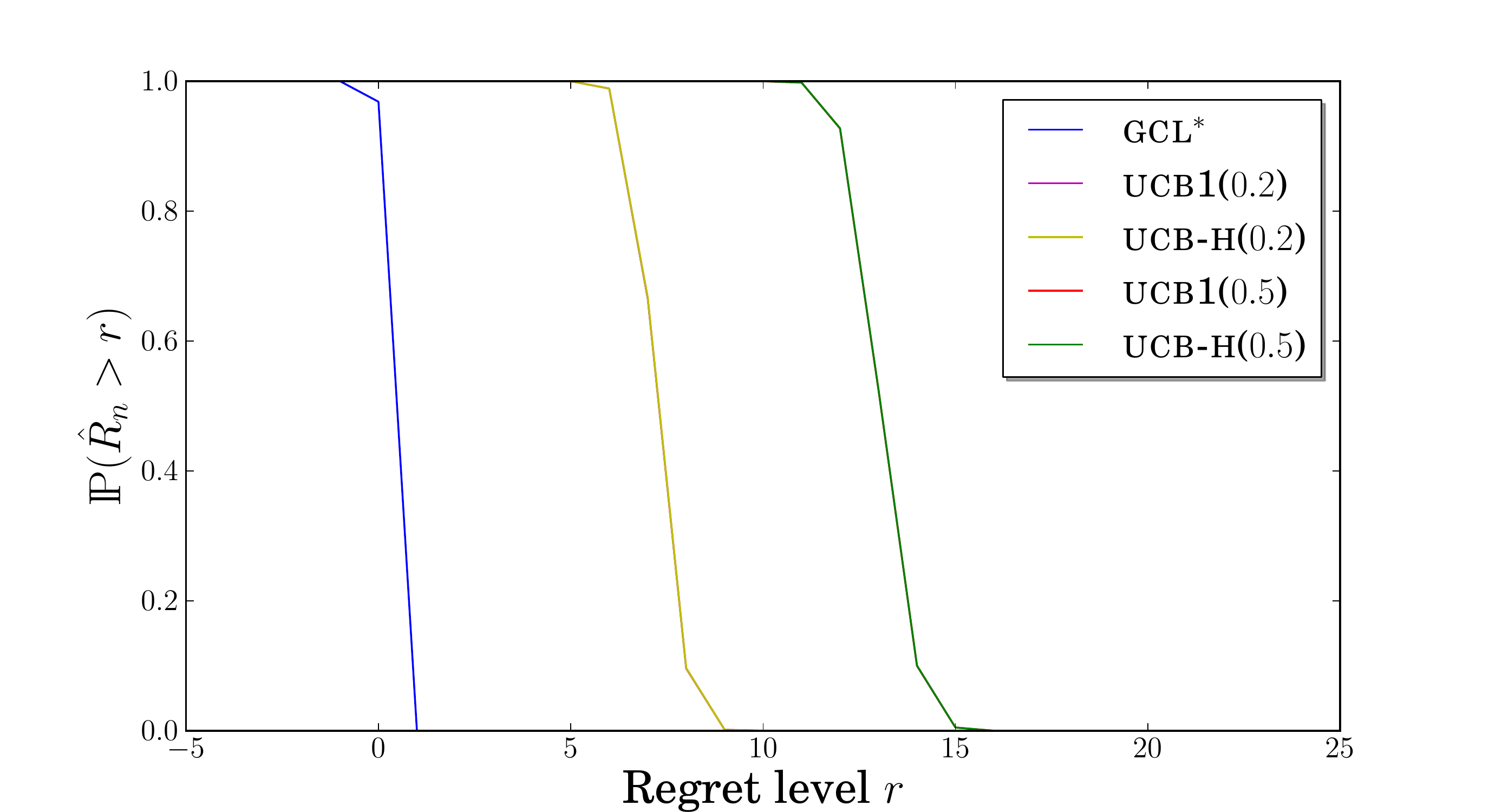}}
\end{center}
\caption{Comparison of policies for n = 1000 and K = 2 arms:
Unif([0.5,0.7]) and Unif([0.4,0.6]).
Left: smoothed probability distribution function. Right: tail distribution of the regret.
In this simple bandit problem, {\sc ucb1} and {\sc ucb-h} curves are almost identical.}
\end{figure}

\begin{figure}
\begin{center}
\scalebox{0.18}{\includegraphics{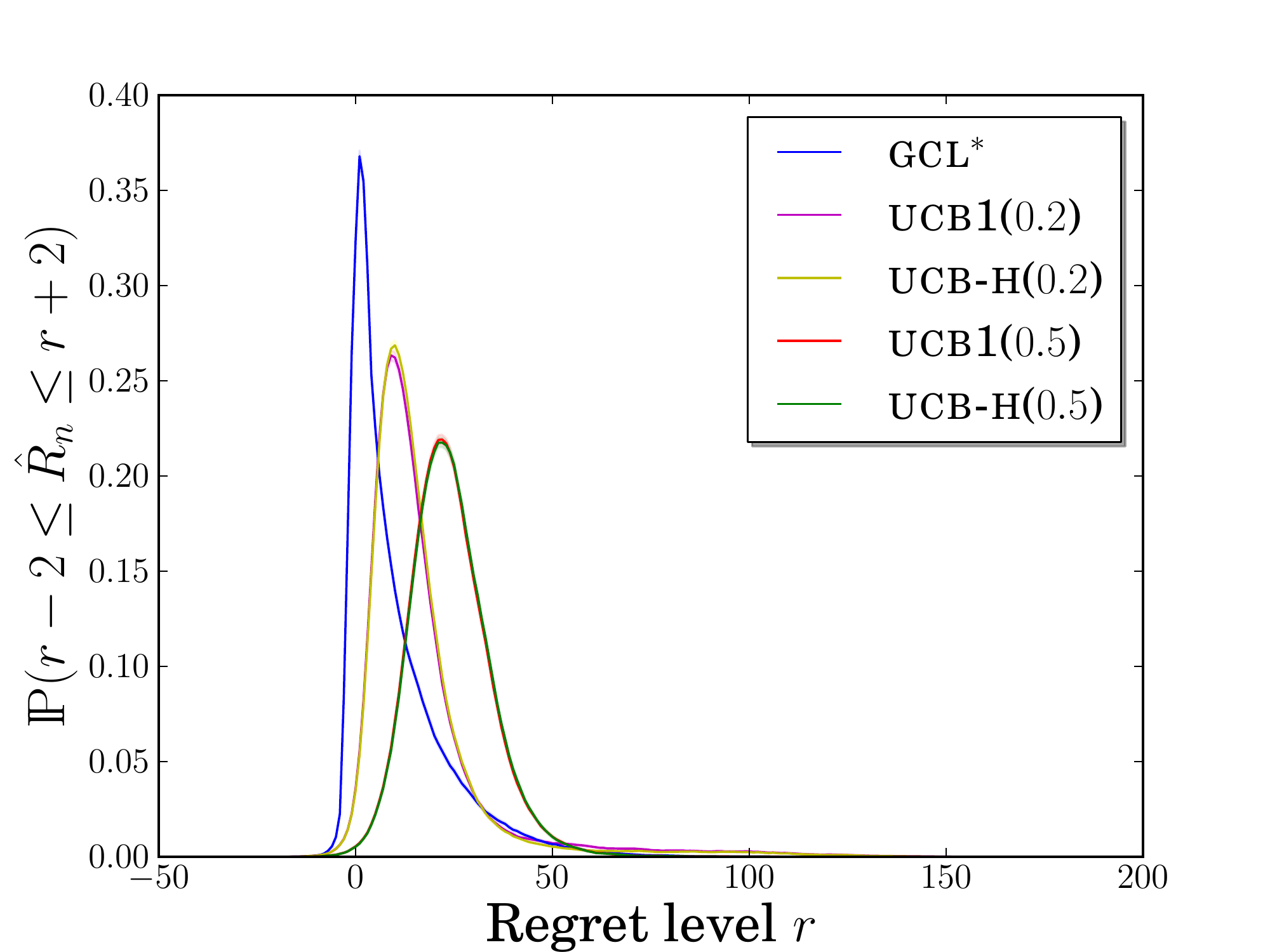}}
\scalebox{0.18}{\includegraphics{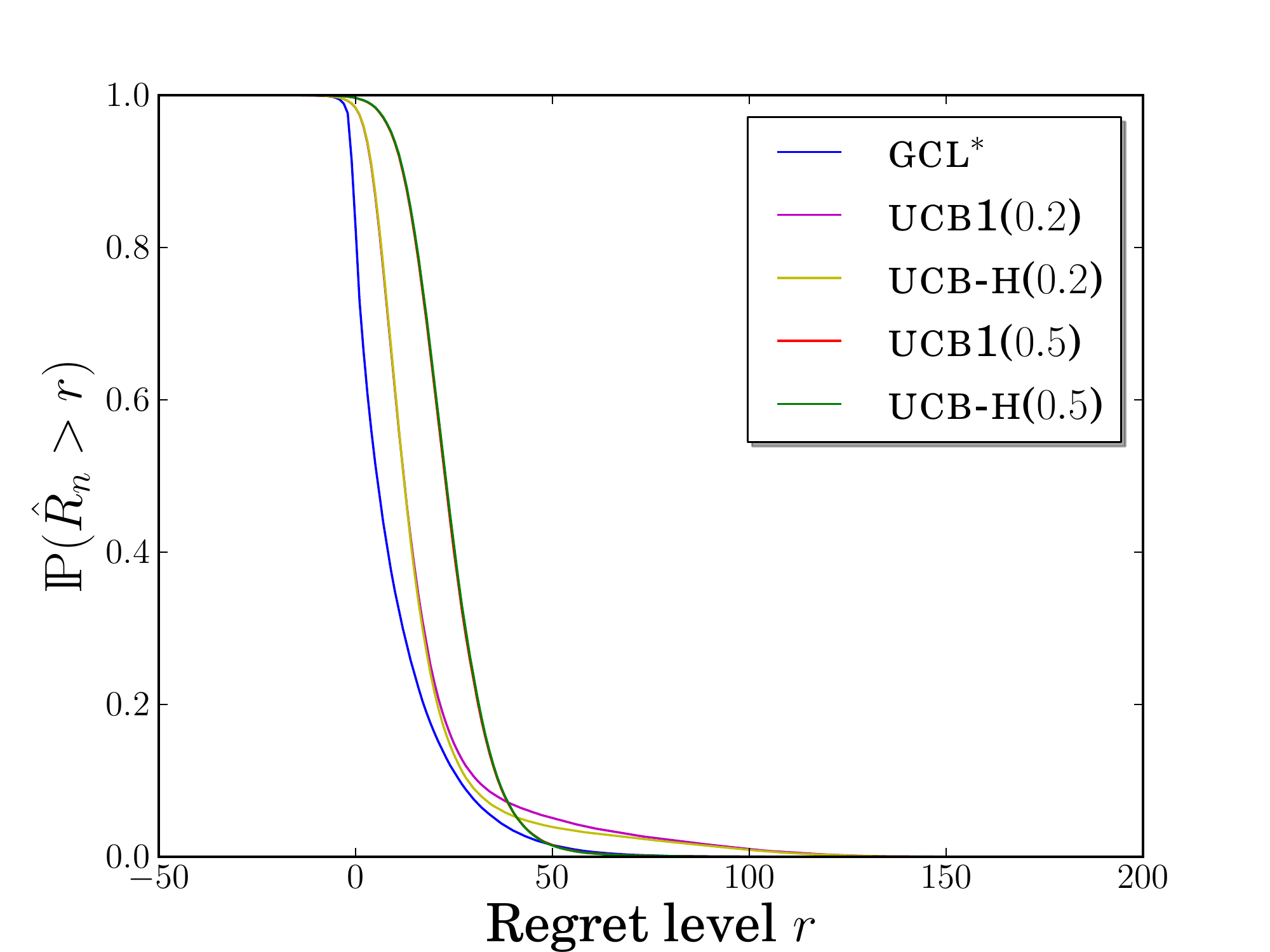}}
\scalebox{0.18}{\includegraphics{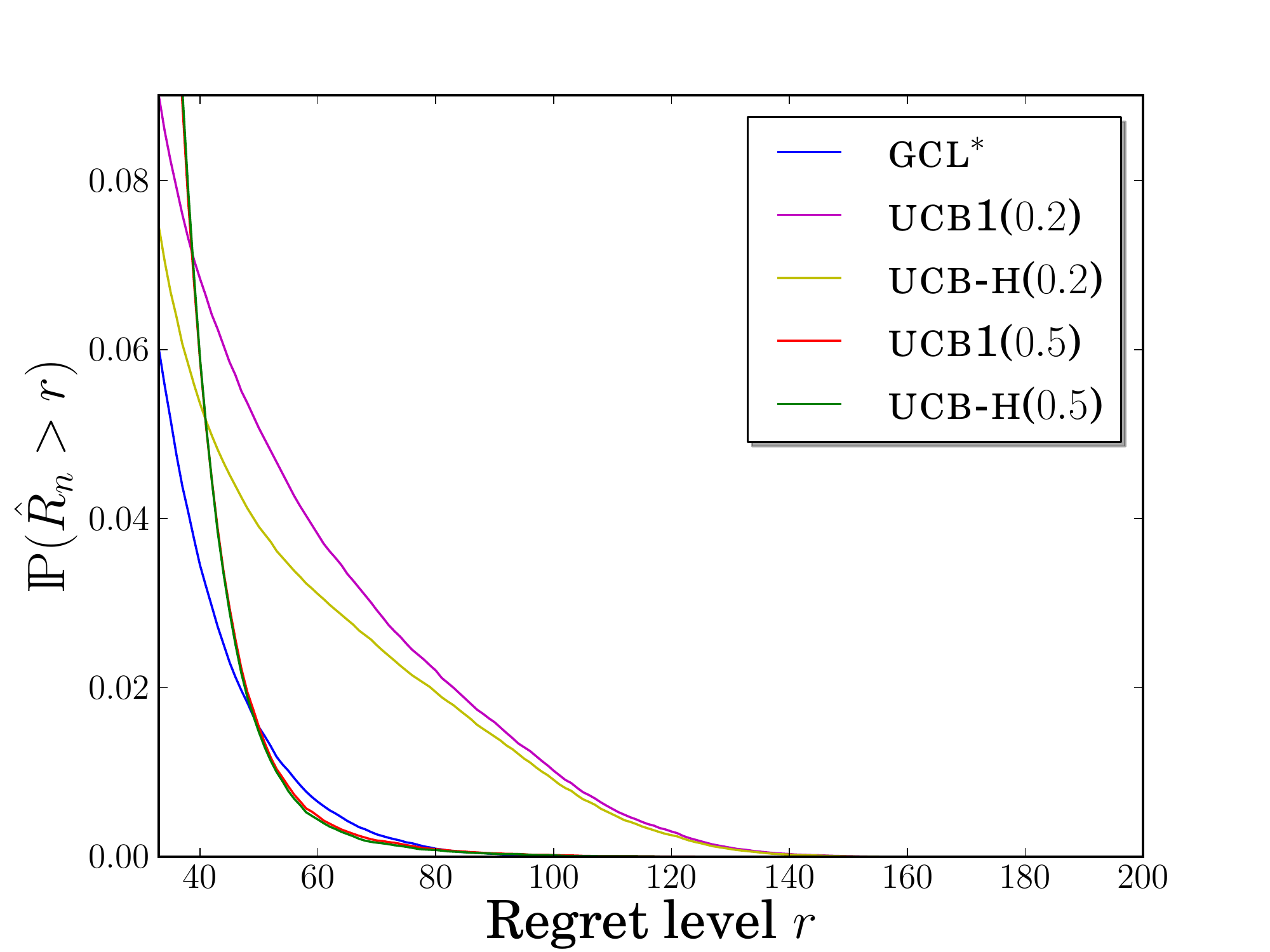}}
\end{center}
\caption{Comparison of policies for n = 1000 and K = 3 arms:
Ber(0.6), Ber(0.5) and Ber(0.5). Left: smoothed probability mass function. Center and right: tail distribution of the regret.}
\end{figure}

\begin{figure}
\begin{center}
\scalebox{0.18}{\includegraphics{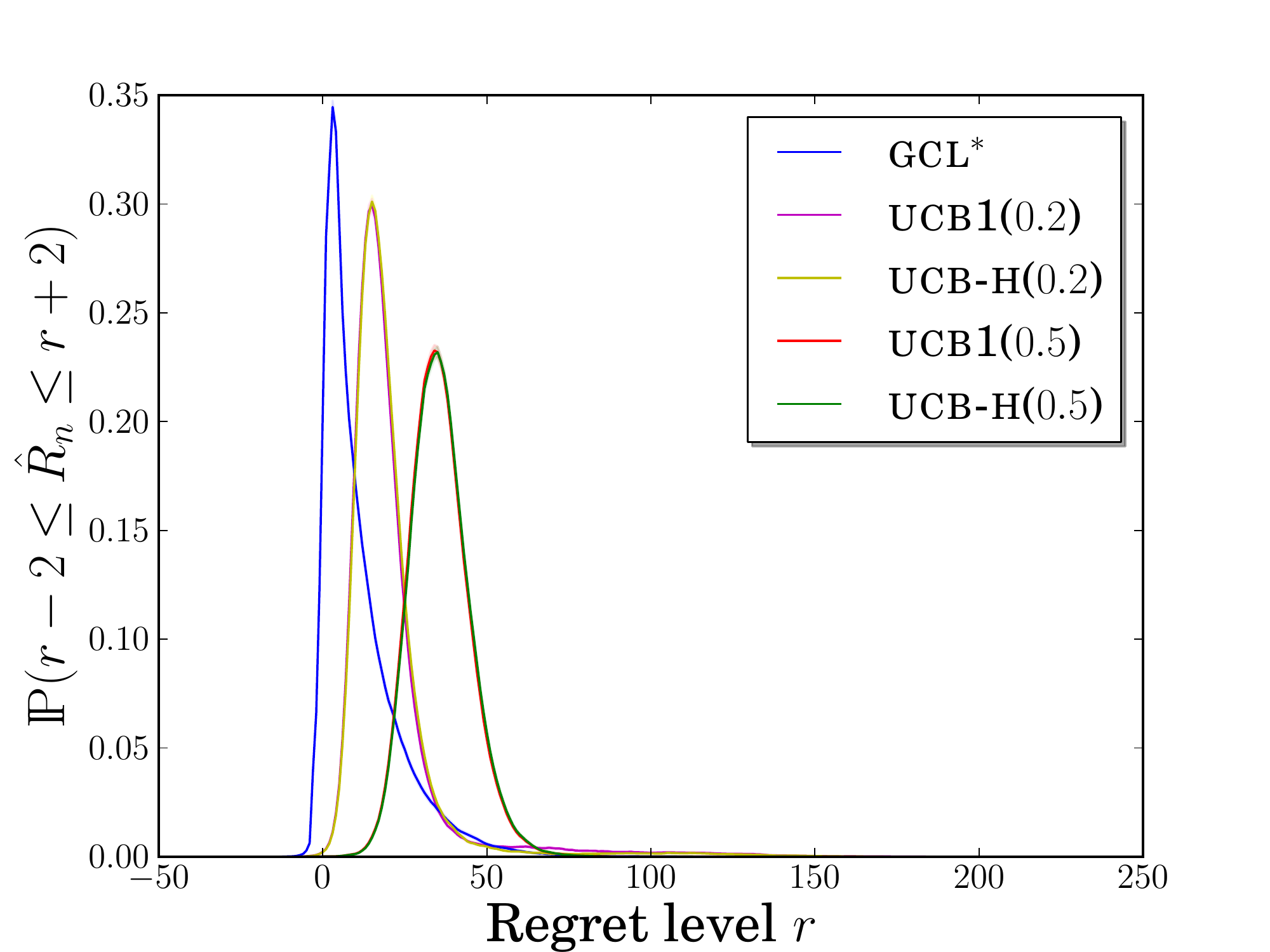}}
\scalebox{0.18}{\includegraphics{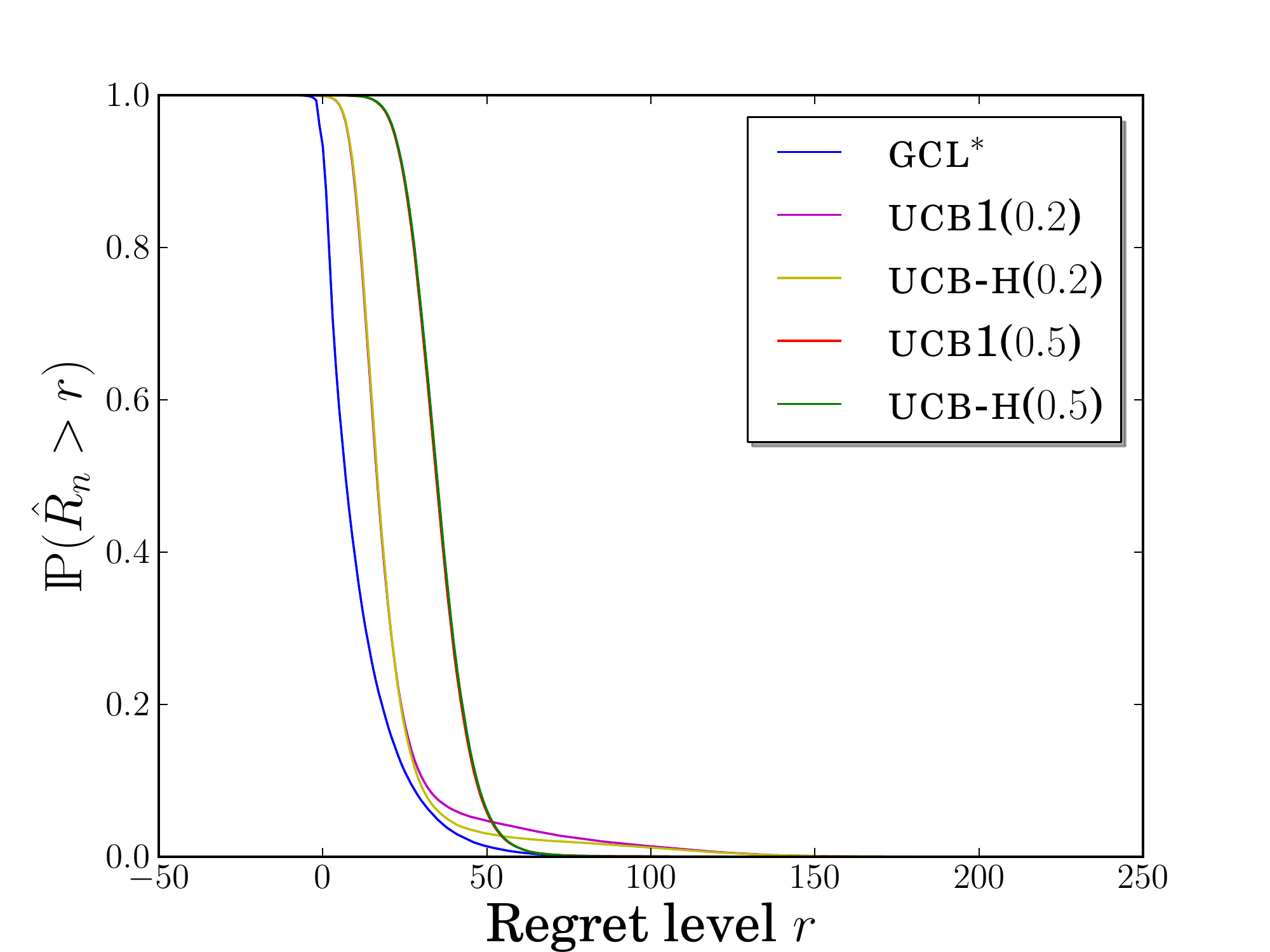}}
\scalebox{0.18}{\includegraphics{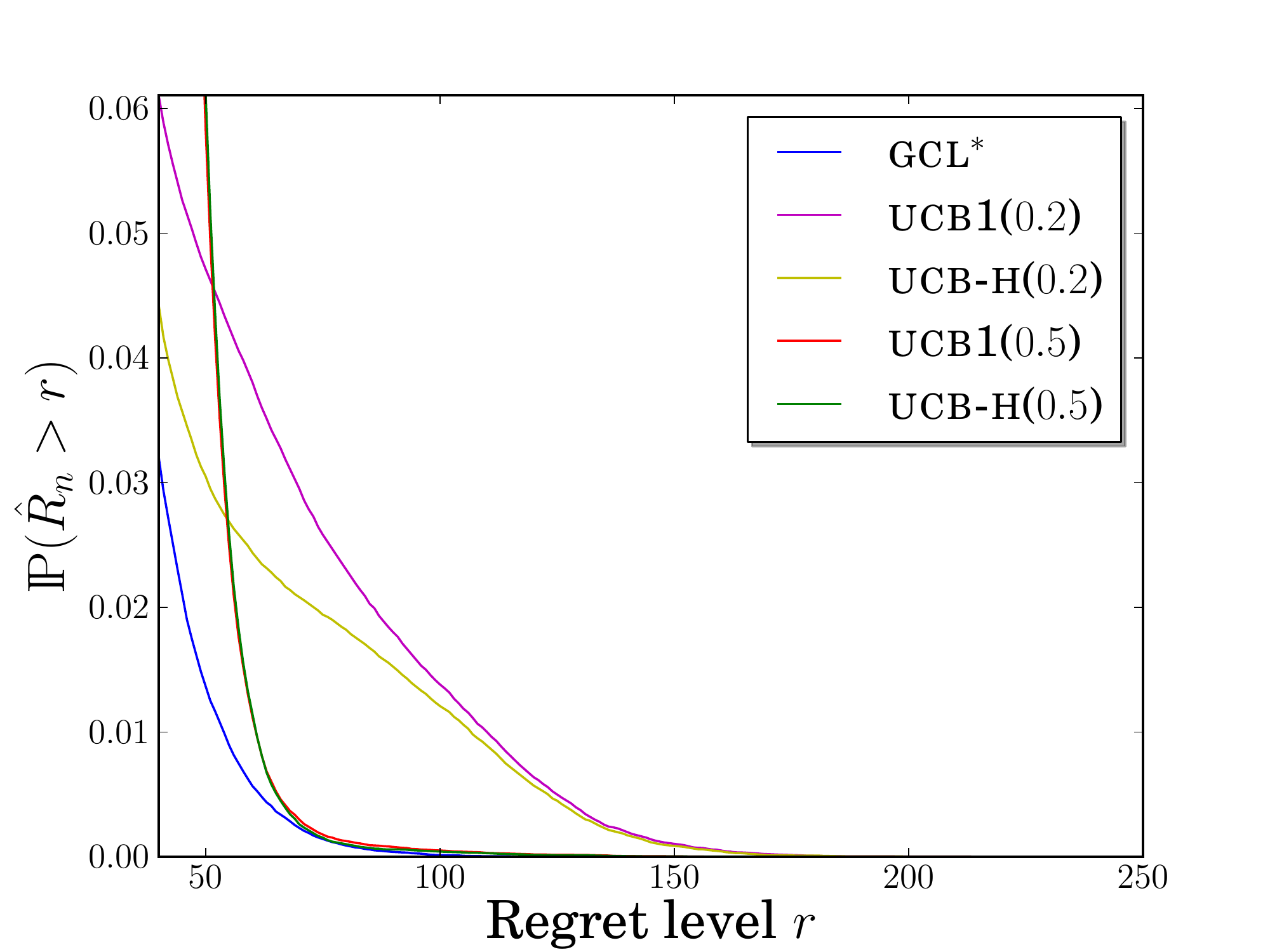}}
\end{center}
\caption{Comparison of policies for n = 1000 and K = 5 arms:
Ber(0.7), Ber(0.6), Ber(0.5), Ber(0.4) and Ber(0.3). Left: smoothed probability mass function. Center and right: tail distribution of the regret.}
\label{fig:devend}
\end{figure}


\section{Proofs}\label{proofs}

\subsection{Proof of Proposition \ref{prop:eqRT}}

$f\text{-}\mathcal T \Rightarrow f\text{-}\mathcal{R}$: 
When a policy is $f$-$\mathcal T$, by a union bound, the event
$$\xi_1=\bigg\{\exists k\in\set{1}K,\ T_k(n) \ge C\frac{\log n}{\Delta_k^2}\bigg\}$$
occurs with probability at most $\frac{K\tilde{C}}{f(n)}$.
Introduce
$
S_{k,s}=\sum_{t=1}^s (X_{k,t}-\mu_k).
$
Since we have
$$
\sum_{t=1}^n X_{I_t,T_{I_t}(t)}
=\sum_{k=1}^K S_{k,T_{k}(n)}+\sum_{k=1}^K T_{k}(n)\mu_k,
$$
we have
    \beglab{eq:creg}
    \hR_n=S_{k^*,n}-S_{k^*,T_{k^*}(n)}-\sum_{k\neq k^*} S_{k,T_{k}(n)} + \sum_{k\neq k^*} \Delta_k T_k(n) .
    \endlab
Let $T=\sum_{k\neq k^*} T_{k}(n)=n-T_{k^*}(n),$ $t^* = \sum_{k\neq k^*} C\frac{\log n}{\Delta_k^2}$, and
$W=\max_{0\le s\le t^*} (S_{k^*,n}-S_{k^*,n-s})$. Since $S_{k^*,n}-S_{k^*,T_{k^*}(n)} \le W$ on the complement $\xi_1^c$ of $\xi_1$,
we have
\beglab{eq:preg}
\hR_n \le n\IND_{\xi_1}+ W -\sum_{k\neq k^*} S_{k,T_{k}(n)} + \sum_{k\neq k^*} \Delta_k T_k(n) .
\endlab
Consider the events
    $$
    \xi_2 = \Bigg\{ W > \sum_{k\neq k^*}\sqrt{\frac{C\beta}{2}}\frac{\log n}{\Delta_k} \Bigg\},
    $$
    $$
    \xi_{3,k} = \Bigg\{ \max_{1\leq s\leq \frac{C\log n}{\Delta_k^2}}
    (-S_{k,s}) > \sqrt{\frac{C\beta}{2}}\frac{\log n}{\Delta_k} \Bigg\},
    $$
and
    $$
    \xi = \xi_1\cup \xi_2 \und{\cup}{k\neq k^*} \xi_{3,k}.
    $$
From Hoeffding's maximal inequality, we have
    \begin{align*}
    \P_{\theta}(\xi_2) & \le \exp\Bigg( - \frac{2 \big(\sum_{k\neq k^*} \sqrt{\fracc{C\beta}{2}}\frac{\log n}{\Delta_k}\big)^2} 
        {\sum_{k\neq k^*} \fracl{C\log n}{\Delta_k^2}}  \Bigg)
 \le\exp\big( - \beta\log n\big) = \frac1{n^\beta} \le \frac{\alpha}{f(n)}.
    \end{align*}
We also use Hoeffding's maximal inequality to control $\P_{\theta}(\xi_{3,k})$:
    $$
    \P_{\theta}(\xi_{3,k}) \le \exp\Bigg( - \frac{2 \big( \sqrt{\fracc{C\beta}{2}}\frac{\log n}{\Delta_k}\big)^2} 
        { \fracl{C\log n}{\Delta_k^2}}  \Bigg) = \frac1{n^\beta} \le \frac{\alpha}{f(n)}.
    $$
By gathering the previous results using a union bound, we have 
    $\P(\xi)\le \frac{2\alpha+\tC}{f(n)}$.
Besides on the complement of $\xi$, by using \eqref{eq:preg}, we have
    $$
    \hR_n < \sum_{k\neq k^*} \sqrt{\frac{C\beta}{2}}\frac{\log n}{\Delta_k}+\sum_{k\neq k^*} \sqrt{\frac{C\beta}{2}}\frac{\log n}{\Delta_k}
        + \sum_{k\neq k^*} \frac{C\log n}{\Delta_k}.
    $$
We have thus proved that 
    $$
    \forall \theta\in\Theta, \forall n\geq 1, \ \Pro_{\theta}\left(\hR_n\geq (C+\sqrt{2C\beta}) \frac{\log n}{\Delta}\right)
        \leq \frac{\tilde{C}+2\alpha}{f(n)},
    $$
hence the policy is $f$-$\mathcal R$.   

$f\text{-w}\mathcal{T} \Rightarrow f\text{-w}\mathcal{R}$: it is exactly the same proof as for $f\text{-}\mathcal{T}\Rightarrow f\text{-}\mathcal{R}$ since
the core of the argument is independent of the position of ``$\forall \theta$'' with respect to ``$\exists C,\tilde{C}$''.

$f\text{-w}\mathcal{R} \Rightarrow f\text{-w}\mathcal{T}$: let us prove the contrapositive. So we assume 
\beglab{eq:contra}
    \exists \theta \in\Theta \text{ such that }\Delta\neq 0, \ \forall C',\tilde{C}'>0, \ \exists n\geq 1, \ \exists k\neq k^*, \ 
    \P_{\theta}\left(T_k(n)\geq C'\frac{\log n}{\Delta_k^2}\right)> \frac{\tilde{C}'}{f(n)}.
\endlab 
It is enough to prove that for this $\theta$, we have
    $$
    \forall C > 9K/\Delta ,\ \forall \tilde{C}>\alpha, \ \exists n\geq 1,  
    \P_{\theta}\left(\hR_n\ge C\frac{\log n}{\Delta}\right)> \frac{\tC}{f(n)}.
    $$
To achieve this, we consider $C'= (\beta+2)C/\Delta$ and $\tC'=\max\big(2\tC,\max_{m\le K} f(m)\big)$ in \eqref{eq:contra} and let $k'\neq k^*$ be such that 
the event
    $$\xi'=\bigg\{ T_{k'}(n)\geq C'\frac{\log n}{\Delta_{k'}^2} \bigg\}$$
holds with probability greater than $\fracc{\tilde{C}'}{f(n)}=\fracc{2\tC}{f(n)}$.
From \eqref{eq:contra} and using $\tC'\ge \max_{m\le K} f(m)$, we necessarily have $n\ge K$.
Let $L=\log\big(\frac{f(n)}{\tC}nK\big)$ and
    \begin{multline*}
    \xi''= \Bigg\{ \forall k \neq k^*, \ \forall s\in\set{1}{n},\ |S_{k,s}|\le \sqrt{\frac{sL}2} \Bigg\}
        \bigcap \Bigg\{ \forall s\in\set{1}{n},\ |S_{k^*,n}-S_{k^*,n-s}|\le \sqrt{\frac{sL}2} \Bigg\}.
    \end{multline*}
By Hoeffding's inequality and a union bound, this event holds with probability at least $1-\fracc{\tC}{f(n)}$.
As a consequence, we have $\P(\xi'\cap\xi'') > \fracc{\tC}{f(n)}$. We now prove that on the event $\xi'\cap\xi''$, we have
    $$
    \hR_n \ge C\frac{\log n}{\Delta}.
    $$
First note that for any $a>0$ the function $s\mapsto a s -\sqrt{2sL}$ is decreasing on $\big[0,\frac{L}{2a^2}\big]$
and increasing on $\big[\frac{L}{2a^2},+\infty\big)$, and that 
    $$
    T_{k'}(n)\geq C'\frac{\log n}{\Delta_{k'}^2} \ge \frac{CL}{\Delta_{k'}^2}, 
    $$
since $\frac{f(n)}{\tC}nK \le \frac{\alpha n^{\beta}}{\alpha} n^2 = n^{\beta+2} \le n^{C'/C}$.
Then, by using \eqref{eq:creg} and $T_{k^*}(n)=n-\sum_{k\neq k^*} T_{k}(n)$, we have
    \begin{align*}
    \hR_n & \ge -|S_{k^*,n}-S_{k^*,T_{k^*}(n)}|-\sum_{k\neq k^*} |S_{k,T_{k}(n)}| + \sum_{k\neq k^*} \Delta_k T_k(n)\\
    & \ge - \sqrt{\frac{L\sum_{k\neq k^*} T_{k}(n)}2}
    - \sum_{k\neq k^*} \sqrt{\frac{LT_{k}(n)}2} + \sum_{k\neq k^*} \Delta_k T_k(n)\\
    & \ge \sum_{k\neq k^*} \Bigg( \Delta_k T_k(n) - \sqrt{2T_{k}(n)L} \Bigg)\\
    & \ge \frac{\Delta_{k'} T_{k'}(n)}2 + \bigg( \frac{\Delta_{k'} T_{k'}(n)}2 - \sqrt{2LT_{k'}(n)} \bigg)
        + \sum_{k\neq k^*,k\neq k'} \und{\min}{s\ge 1}\Bigg( \Delta_k s - \sqrt{2Ls} \Bigg)\\
    & \ge C'\frac{\log n}{2\Delta_{k'}} + \bigg(\frac{C}2 - \sqrt{2C}\bigg)\frac{L}{\Delta_{k'}}- \sum_{k\neq k^*,k\neq k'} \frac{L}{2\Delta_k} \\
    & \ge C'\frac{\log n}{2\Delta_{k'}} + \frac{C}6 \frac{L}{\Delta_{k'}}- \frac{KL}{2\Delta} 
    \ge C'\frac{\log n}{2\Delta_{k'}} \ge C\frac{\log n}{\Delta},
    \end{align*}
which ends the proof of the contrapositive.

\subsection{Proof of Theorem \ref{th:main}}

Let us first notice that we can remove the $\Delta_k^2$ denominator in the the definition of $f$-w$\mathcal T$ without loss of generality. This would not be possible for the $f$-$\mathcal T$ definition owing to the different position of ``$\forall \theta$'' with respect to ``$\exists C,\tilde{C}$''.\\
Thus, a policy is $f$-w$\mathcal T$ if and only if
$$\forall \theta\in\Theta \text{ such that }\Delta\neq 0, \ \exists C,\tilde{C}>0, \ \forall n\geq 2, \ \forall k\neq k^*, \ \Pro_{\theta}\left(T_k(n)\geq C{\log n}\right)\leq \frac{\tilde{C}}{f(n)}.$$

Let us assume that the policy has the $f$-upper tailed property in $\theta$, i.e., there exists $C,\tilde{C}>0$
\begin{equation}\label{eq:uptheta}
\forall N\geq 2, \ \forall \ell\neq k, \ \Pro_{\theta}\big(T_\ell(N)\geq C\log N\big)\leq \frac{\tilde{C}}{f(N)}.
\end{equation}
Let us show that this implies that the policy cannot have also the $f$-upper tailed property in $\tth$.
To prove the latter, it is enough to show that for any $C',\tilde{C}'>0$
\begin{equation}\label{eq:uptth}
\exists n\geq 2, \ \Pro_{\tth}\big(T_k(n)\geq C'\log n\big)> \frac{\tilde{C}'}{f(n)}.
\end{equation}
since $k$ is suboptimal in environment $\tth$. 
Note that proving \eqref{eq:uptth} for $C'= C$ is sufficient. Indeed if \eqref{eq:uptth} holds for $C'=C$, 
it a fortiori holds for $C'<C$. Besides, when $C'>C$, \eqref{eq:uptheta} holds for $C$ replaced by $C'$, and we are thus brought back to the situation when $C=C'$. So we only need to lower bound $\Pro_{\tth}\big(T_k(n)\geq C\log n\big)$.

From Lemma \ref{le:mus}, $\P_{\tth}\big(\frac{d\nu_\ell}{d\tilde{\nu}_\ell}(X_{\ell,1})>0\big)>0$ is equivalent to $\P_{\theta}\big(\frac{d\tnu_\ell}{d\nu_\ell}(X_{\ell,1})>0\big)>0$. 
By independence of $X_{1,1},\dots,X_{K,1}$ under $\P_{\theta}$, condition (c) in the theorem may be written as
$$
\Pro_{\theta}\Bigg(\prod_{\ell\neq k}\frac{d\tnu_\ell}{d\nu_\ell}(X_{\ell,1})>0\Bigg)>0.
$$
Since $\left\{\prod_{\ell\neq k}\frac{d\tnu_\ell}{d\nu_\ell}(X_{\ell,1})>0\right\}=\cup_{m\geq 2}\left\{\prod_{\ell\neq k}\frac{d\tnu_\ell}{d\nu_\ell}(X_{\ell,1})\geq \frac{1}{m} \right\}$, this readily implies that

$$\exists \eta \in (0,1), \ \Pro_{\theta}\Bigg(\prod_{\ell\neq k}\frac{d\tnu_\ell}{d\nu_\ell}(X_{\ell,1})\geq \eta\Bigg)>0.
$$
Let $a=\Pro_{\theta}\left(\prod_{\ell\neq k}\frac{d\tnu_\ell}{d\nu_\ell}(X_{\ell,1})\geq \eta\right)$.

Let us take $n$ large enough such that $N=\left\lfloor 4C\log n \right\rfloor$  satisfies
$N<n$, $C\log N <\frac{N}{2K}$ and
$f(n) \eta^t\big(a^t-\frac{(K-1)\tilde{C}}{f(N)}\big)>\tC'$ for $t=\left\lfloor C\log N\right\rfloor$. 
For any $\tC'$, such a $n$ does exist since $f \gg_{+\infty} \log^\alpha$ for any $\alpha>0$.

The idea is that if until round $N$, arms $\ell\neq k$ have a behaviour that is typical of $\theta$, then the arm $k$ (which is suboptimal in $\tth$) may be pulled about $C\log n$ times at round $N$.
Precisely, we prove that 
$\forall \ell\neq k, \ \Pro_{\theta}\big(T_\ell(N)\geq C\log N\big)\leq \frac{\tilde{C}}{f(N)}$
implies 
$\Pro_{\tth}\big(T_k(n)\geq C'\log n\big)> \frac{\tilde{C}'}{f(n)}$.
Let us denote
$A_t=\cap_{s=1\hdots t}\left\{\prod_{\ell\neq k}\frac{d\tnu_\ell}{d\nu_\ell}(X_{\ell,s})\geq \eta\right\}$. By independence and by definition of $a$, we have $\P_{\theta}(A_t)=a^t$. We also have
\begin{eqnarray}
\Pro_{\tilde{\theta}}\big(T_k(n)\geq C\log n\big)&\geq&\Pro_{\tilde{\theta}}\bigg(T_k(N)\geq \frac{N}{2}\bigg)\nonumber\\
&\geq&\Pro_{\tilde{\theta}}\Bigg(\bigcap_{\ell\neq k}\left\{T_\ell(N) \leq \frac{N}{2K}\right\}\Bigg)\nonumber\\
&\geq&\Pro_{\tilde{\theta}}\Bigg(\bigcap_{\ell\neq k}\bigg\{T_\ell(N) < C\log N\bigg\}\Bigg)\nonumber\\
&\geq& \Pro_{\tilde{\theta}}\Bigg(A_t\cap\Bigg\{\bigcap_{\ell\neq k}\bigg\{T_\ell(N) < C\log N\bigg\}\Bigg\}\Bigg).\nonumber 
\end{eqnarray}
Introduce $B_N=\bigcap_{\ell\neq k}\big\{T_\ell(N) < C\log N\big\}$, 
and the function $q$ such that 
$$
\IND_{A_t\cap B_N}=q\big((X_{\ell,s})_{\ell\neq k, \ s=1..t},(X_{k,s})_{s=1..N}\big).
$$
Since $\tnu_k=\nu_k$, by definition of $A_t$ and by standard properties of density functions $\frac{d\tnu_\ell}{d\nu_\ell}$, we have
\begin{eqnarray}
&&\Pro_{\tilde{\theta}}\Bigg(A_t\cap\Bigg\{\bigcap_{\ell\neq k}\left\{T_\ell(N) < C\log N\right\}\Bigg\}\Bigg)\nonumber\\
&=&\int q\big((x_{\ell,s})_{\ell\neq k, \ s=1..t},(x_{k,s})_{s=1..N}\big)\prod_{\tiny \begin{array}c \ell\neq k \nonumber\\
s=1..t\end{array}}d\tnu_\ell(x_{\ell,s})\prod_{s=1..N}d\tnu_k(x_{k,s})\nonumber\\
&\geq& \eta^t\int q\big((x_{\ell,s})_{\ell\neq k, \ s=1..t},(x_{k,s})_{s=1..N}\big)\prod_{\tiny \begin{array}c \ell\neq k \\ s=1..t\end{array}}d\nu_\ell(x_{\ell,s})\prod_{s=1..N}d\nu_k(x_{k,s})\nonumber\\
&=& \eta^t \Pro_{\theta}\Bigg(A_t\cap\Bigg\{\bigcap_{\ell\neq k}\left\{T_\ell(N) < C\log N\right\}\Bigg\}\Bigg)\nonumber\\
&\geq & \eta^t\bigg(a^t-\frac{(K-1)\tilde{C}}{f(N)} \bigg)\nonumber\\
&> & \frac{\tilde{C}'}{f(n)}, \nonumber
\end{eqnarray}
where the one before last step relies on a union bound with \eqref{eq:uptheta} and $\P_{\theta}(A_t)=a^t$,
and the last inequality uses the definition of $n$.
We have thus proved that \eqref{eq:uptth} holds, and thus the policy cannot have the $f$-upper tailed property simultaneously in environment $\theta$ and $\tth$.

\subsection{Proof of Theorem \ref{th:mainpositive}}\label{proofmainpos}
Let $\theta$ be in $\Theta$. 
Consider the event
$$\xi = \left\{\forall k\in \{1,\hdots,K\}, T \in \{1, \hdots,n\}, T\Vert\hat{F}_{k,T} - F_{\nu_k}\Vert_{\infty}^2 < \frac{\beta+1}{2}\log n \right\}.$$
From Massart's inequality (see \cite{massart1990tight}) applied $nK$ times corresponding to the different times and arms and a union bound to combine the inequalities, we have $$\P_{\theta}(\xi) \geq 1 - nK(2e^{-(\beta+1)\log n})=1-\frac{2K}{n^{\beta}}.$$

We show that on the event $\xi$, inequalities $T_k(n)\leq \frac{2(\beta+1)\log n}{\delta_k^2}+1$ hold for any $k\neq k^*$, where $\delta_k=\min_{\tilde{\theta}\in \Theta_k}\Vert F_{\nu_k}-F_{\tilde{\nu}_k} \Vert_{\infty}$.
 Note that $\delta_k>0$: if not, it would mean that $k$ is suboptimal in $\theta$ and optimal in an other environment $\tilde{\theta}$, with $\nu_k=\tilde{\nu}_k$. In this case, by hypothesis there exists $\ell\neq k$ such that $\frac{d\tilde{\nu}_\ell}{d\nu_\ell}(X_{\ell,1})=0$ $\Pro_{\theta}$-a.s. Thus $\tilde{\theta}$ is almost surely removed during the first rounds of the policy and, as $\Theta$ is finite, all of these problematic $\tilde{\theta}$ are removed almost surely. Note also that $\theta$ cannot be removed: it is readily seen that $\P_{\theta}\left(\frac{d\nu_\ell}{d\tilde{\nu}_\ell}(X_{\ell,1})>0\right)=1$ for all $\tilde{\theta}\in\Theta$ and, still because $\Theta$ is finite, it is almost sure that $\frac{d\nu_\ell}{d\tilde{\nu}_\ell}(X_{\ell,1})>0$ for all $\tilde{\theta}\in\Theta$. A last consequence of the finiteness of $\Theta$ is that terms $\delta_k$ are uniformly bounded away from zero over $\Theta$, and so are the terms $\Delta_k$, so that the inequalities we are going to prove easily lead to the conclusion of the proof.\\

Assume by contradiction that there exists $k\neq k^*$ such that $T_k(n)> \frac{2(\beta+1)\log n}{\delta_k^2}+1$. Then there exists $t\leq n$ such that $I_t=k$ and $T_k(t-1)>\frac{2(\beta+1)\log n}{\delta_k^2}.$\\
As arm $k$ is chosen at round $t$, we have:
$$
T_{k^*}(t-1)\inf_{\tth\in\Theta_{k^*}}\Vert \hat{F}_{k^*,T_k^*(t-1)}-F_{\tnu_{k^*}}\Vert_{\infty}^2\geq T_{k}(t-1)\inf_{\tth\in\Theta_{k}}\Vert \hat{F}_{k,T_k(t-1)}-F_{\tnu_{k}}\Vert_{\infty}^2
$$
On the one hand, we have:
$$
\frac{\beta+1}{2}\log n>T_{k^*}(t-1)\inf_{\tth\in\Theta_{k^*}}\Vert \hat{F}_{k^*,T_k^*(t-1)}-F_{\tnu_{k^*}}\Vert_{\infty}^2,
$$
and on the other hand

%
%

\begin{eqnarray*}
\sqrt{T_{k}(t-1)}\inf_{\tth\in\Theta_{k}}\Vert \hat{F}_{k,T_k(t-1)}-F_{\tnu_{k}}\Vert_{\infty}
&\geq& \sqrt{T_{k}(t-1)}\left(\delta_k-\Vert \hat{F}_{k,T_k(t-1)}-F_{\nu_k}\Vert_{\infty}\right)\\
&\geq& \sqrt{T_{k}(t-1)}\left(\delta_k-\sqrt{\frac{(\beta+1)\log n}{2T_k(t-1)}}\right)\\
&=& \sqrt{T_{k}(t-1)}\delta_k-\sqrt{\frac{\beta+1}{2}\log n}.
\end{eqnarray*}

By combining the former inequalities, we get:
$$
\sqrt{\frac{\beta+1}{2}\log n}> \sqrt{T_{k}(t-1)}\delta_k-\sqrt{\frac{\beta+1}{2}\log n}
$$
and
$$
T_k(t-1)<\frac{2(\beta+1)\log n}{\delta_k^2},
$$
which is the contradiction expected.

\subsection{Proof of Theorem \ref{th:bern}}
The proof of the first part of the theorem is the same as the previous section \ref{proofmainpos}, except that one has to substitute $\delta_k$ by $d_k$ and that the $d_k$ ($k\neq k^*$) are not necessarily non negative. Indeed, the distance $\Vert\hat{F}_{k,T} - F_{\nu_k}\Vert_{\infty}$ equals $|\hat X_{k,T}-\mu_k|$ in the context of Bernoulli laws.\\

The proof of the second part is similar to the one of Theorem \ref{th:main}: we assume by contradiction that there exists a policy such that
\[
\exists C,\tilde{C}>0, \forall \theta\in\Theta, \ \forall n\geq 2, \ \forall k\neq k^*, \ \Pro_{\theta}\left(T_k(n)\geq C\log n\right)\leq \frac{\tilde{C}}{f(n)}.
\]

 The main difference is that we cannot fix $\theta,\tth$ such that $\theta\in\Theta_k$, $\tth\in\Theta\smallsetminus\Theta_k$ and $\mu_k=\tilde \mu_k$. The hypothesis only allows us to take $\mu_k$ and $\tilde \mu_k$ arbitrarily close. This means that we are allowed to consider two sequences $(\theta^n)_{n\geq 1}$ and $(\tth^n)_{n\geq 1}$ such that, for all $n\geq 1$ (with obvious notations):
\begin{itemize}
\item $\theta^n\in\Theta_k,\ \tth^n\in\Theta\smallsetminus\Theta_k,$
\item $\tilde \mu_k^n \geq 2^{-\frac{1}{N}}\mu_k^n,$
\item $1-\tilde \mu_k^n \geq 2^{-\frac{1}{N}}(1-\mu_k^n),$
\end{itemize}
where $N=\left\lfloor 4C\log n \right\rfloor$.\\

On the other hand, the hypothesis readily implies that 
$$
\forall \theta,\tth\in\Theta, \ \forall \ell\in\{1,\cdots,K\}, \ \frac{d\tnu_\ell}{d\nu_\ell}(1)=\frac{\tilde \mu_l}{\mu_l}\geq \gamma
$$
and
\begin{eqnarray*}
\Pro_{\theta}\left(\prod_{\ell\neq k}\frac{d\tnu_\ell}{d\nu_\ell}(X_{\ell,1})\geq \gamma^{K-1}\right)&\geq&
\Pro_{\theta}\left(\bigcap_{\ell\neq k}\left\{\frac{d\tnu_\ell}{d\nu_\ell}(X_{\ell,1})\geq \gamma\right\}\right)=
\prod_{\ell\neq k}\Pro_{\theta}\left(\frac{d\tnu_\ell}{d\nu_\ell}(X_{\ell,1})\geq \gamma\right)\\
&\geq&\prod_{\ell\neq k}\Pro_{\theta}\left(X_{\ell,1}=1\right)=\prod_{\ell\neq k} \mu_l\geq \gamma^{K-1}.
\end{eqnarray*}
Let us denote $a=\gamma^{K-1}$ and $A_t=\bigcap_{s=1}^t\left\{\prod_{\ell\neq k}\frac{d\tnu_\ell}{d\nu_\ell}(X_{\ell,s})\geq a\right\}$. By independence, we have $\Pro_{\theta}(A_t)=a^t$.\\

To find a contradiction, we set $t=\left\lfloor C\log N\right\rfloor$ and we adapt the reasoning of the former proof.\\
If $n$ is chosen large enough, one has $N<n$ and $C\log N <\frac{N}{2K}$, and then:
\begin{eqnarray}
\Pro_{\tilde{\theta}^n}\left(T_k(n)\geq C\log n\right)&\geq&
\Pro_{\tilde{\theta}^n}\left(T_k(N)\geq \frac{N}{2}\right)\nonumber\\
&\geq&\Pro_{\tilde{\theta}^n}\left(\bigcap_{\ell\neq k}\left\{T_\ell(N) \leq \frac{N}{2K}\right\}\right)\nonumber\\
&\geq&\Pro_{\tilde{\theta}^n}\left(\bigcap_{\ell\neq k}\left\{T_\ell(N) < C\log N\right\}\right).\nonumber\\
&\geq& \Pro_{\tilde{\theta}^n}\left(A_t\cap\left\{\bigcap_{\ell\neq k}\left\{T_\ell(N) < C\log N\right\}\right\}\right).\nonumber
\end{eqnarray}
Let us denote $B_N=\bigcap_{\ell\neq k}\left\{T_\ell(N) < C\log N\right\}$. $B_N$ is measurable w.r.t. $X_{k,1},\hdots,X_{k,N}$ and $X_{\ell,1},\hdots,X_{\ell,t}$ ($\ell\neq k$), and $A_t$ is measurable w.r.t. $X_{\ell,1},\hdots,X_{\ell,t}$ ($\ell\neq k$), so that we can write
$$
\IND_{A_t\cap B_N}=c_{t,N}\left((X_{\ell,s})_{\ell\neq k, \ s=1..t},(X_{k,s})_{s=1..N}\right).
$$
By properties of $\tnu_k^n$ and $\nu_k^n$ and by definition of $A_t$ we have
\begin{eqnarray}
&&\Pro_{\tilde{\theta}^n}\left(A_t\cap\left\{\bigcap_{\ell\neq k}\left\{T_\ell(N) < C\log N\right\}\right\}\right)\nonumber\\
&=&\int c_{t,N}\left((x_{\ell,s})_{\ell\neq k, \ s=1..t},(x_{k,s})_{s=1..N}\right)\prod_{\tiny \begin{array}c \ell\neq k \\
s=1..t\end{array}}d\tnu_\ell^n(x_{\ell,s})\prod_{s=1..N}d\tnu_k^n(x_{k,s})\nonumber\\
&\geq& \int c_{t,N}\left((x_{\ell,s})_{\ell\neq k, \ s=1..t},(x_{k,s})_{s=1..N}\right)a^t\prod_{\tiny \begin{array}c \ell\neq k \\ s=1..t\end{array}}d\nu_\ell^n(x_{\ell,s})\prod_{s=1..N}\left(2^{-\frac{1}{N}}d\nu_k^n(x_{k,s})\right)\nonumber\\
&=&\frac{a^t}{2} \Pro_{\theta^n}\left(A_t\cap\left\{\bigcap_{\ell\neq k}\left\{T_\ell(N) < C\log N\right\}\right\}\right)\nonumber\\
&\geq & \frac{a^t}{2}\left(a^t-\frac{(K-1)\tilde{C}}{f(N)} \right).\nonumber
\end{eqnarray}
By straightforward calculations, one can then show that $f(n)\Pro_{\tilde{\theta}^n}\left(T_k(n)\geq C\log n\right)\xrightarrow[N\to+\infty]{}+\infty$, which is the contradiction expected.

\subsection{Proof of Theorem \ref{th:mu*}}
The proof is similar the one of Theorem \ref{th:mainpositive}, except that we use Hoeffding's inequality rather than Massart's one. Consider the event
$$\xi = \left\{\forall k\in \{1,\hdots,K\}, s \in \{1, \hdots,n\}, s(\hX_{k,s} - \mu_k)^2 < \frac{\beta+1}2\log n \right\}.$$
From Hoeffding's inequality applied $2nK$ times corresponding to the different times and arms and a union bound to combine the inequalities, we have
$\P(\xi) \geq 1-2nKe^{-(\beta+1)\log n}=1-\frac{2K}{n^{\beta}}.$ 
We will prove by contradiction that on the event $\xi$, we have $T_k(n) \le 1+\frac{2(\beta+1)\log n}{\Delta_k^2}$ for all $k\neq k^*$.
For this, consider $k\neq k^*$ such that $T_k(n) > \frac{2(\beta+1)\log n}{\Delta_k^2}+1.$
Then there exists $t\le n$ such that $I_t=k$ and $T_k(t-1) > \frac{2(\beta+1)\log n}{\Delta_k^2}$.
Since the arm $k$ is chosen at time $t$, it means that
    \begin{equation}\label{ineq}
    T_k(t-1) \big(\mu^*-\hX_{k,T_k(t-1)}\big)_+^2 \le T_{k^*}(t-1) \big(\mu^*-\hX_{k^*,T_{k^*}(t-1)}\big)_+^2.
    \end{equation}
Let us split the proof into two cases.\\

First case: $\hX_{k,T_k(t-1)}\geq \mu_*$.\\
Then $\hX_{k,T_k(t-1)}-\mu_k\geq \Delta_k$ and $T_k(t-1) \big(\hX_{k,T_k(t-1)}-\mu_k\big)^2\geq T_k(t-1)\Delta_k^2$. The contradiction readily comes from the definition of $\xi$.\\

Second case: $\hX_{k,T_k(t-1)}<\mu_*$.\\
From inequality \eqref{ineq} one has $\hX_{k^*,T_{k^*}(t-1)}<\mu^*$, and \eqref{ineq} can be written as:
$$
    T_k(t-1) \big(\hX_{k,T_k(t-1)}-\mu^*\big)^2 \le T_{k^*}(t-1) \big(\hX_{k^*,T_{k^*}(t-1)}-\mu^*\big)^2.
$$
On the one hand, we have:
$$
\frac{\beta+1}{2}\log n>T_{k^*}(t-1) \big(\hX_{k^*,T_{k^*}(t-1)}-\mu^*\big)^2,
$$
and on the other hand

\begin{eqnarray*}
\sqrt{T_{k}(t-1)}\big|\hX_{k,T_k(t-1)}-\mu^*\big|
&\geq& \sqrt{T_{k}(t-1)}\left(\Delta_k-\big|\hX_{k,T_k(t-1)}-\mu_k\big|\right)\\
&\geq& \sqrt{T_{k}(t-1)}\left(\Delta_k-\sqrt{\frac{(\beta+1)\log n}{2T_k(t-1)}}\right)\\
&=& \sqrt{T_{k}(t-1)}\Delta_k-\sqrt{\frac{\beta+1}{2}\log n}.
\end{eqnarray*}
The former inequalities leads to $$\sqrt{\frac{\beta+1}{2}\log n}>\sqrt{T_{k}(t-1)}\Delta_k-\sqrt{\frac{\beta+1}{2}\log n}\Rightarrow T_k(t-1)<\frac{2(\beta+1)\log n}{\Delta_k^2}.$$
%
%
Thus there is a contradiction, meaning that there is no
$k$ such that $T_k(n) > \frac{2(\beta+1)\log n}{\Delta_k^2}+1.$

\subsection{Proof of Theorem \ref{th:hor}}

We assume by contradiction that there exists a $f$-w$\mathcal T$ policy. As in the proof of Theorem \ref{th:main}, on can remove the $\Delta_k^2$ denominator, so that we have:
\begin{eqnarray*}
\exists C,\tilde{C}>0, \ \forall n\geq 2, \ \forall \ell\neq k, \ \Pro_{\theta}\big(T_\ell(n)\geq C\log n\big)\leq \frac{\tilde{C}}{f(n)}.
\end{eqnarray*}
Let us show that this implies that the policy cannot have also the $f$-upper tailed property in $\tth$.
To prove the latter, it is enough to show that for any $C',\tilde{C}'>0$
\begin{equation}\label{eq:uptthhor}
\exists n\geq 2, \ \Pro_{\tth}\big(T_k(n)\geq C'\log n\big)> \frac{\tilde{C}'}{f(n)},
\end{equation}
since $k$ is suboptimal in environment $\tth$.\\

Similarly to the proof of theorem \ref{th:main}, proving \eqref{eq:uptthhor} for $C'= C$ is sufficient. Moreover, there exists $\eta\in(0,1)$ such that the event $A=\Big\{\prod_{\ell=1}^K\frac{d\tnu_\ell}{d\nu_\ell}(X_{\ell,1})\geq \eta\Big\}$ has probability $a>0$ under $\Pro_{\theta}$. We denote
$A_t=\cap_{s=1\hdots t}\left\{\prod_{\ell=1}^K\frac{d\tnu_\ell}{d\nu_\ell}(X_{\ell,s})\geq \eta\right\}$, and by independence we have $\P_{\theta}(A_t)=a^t$.\\

Let us set $N=\left\lceil KC\log n \right\rceil$, choose $n$ large enough so that $n>N$, and denote $Y$ a r.v. that equals the index of an arm among those that have been pulled the most after time step $N$, e.g.\\ $Y=\min\left(\argmax_{l\in \{1,\ldots,K\}} T_l(N)\right)$. Obviously, such an arm has been pulled at least $C\log n$ at step $N$ (i.e. $T_Y(N)\geq C\log n$ a.s.), so that one has:\\

$
\Pro_{\tth}\left(T_k(n)\geq C\log n\right)\geq \Pro_{\tth}\left(T_k(N)\geq C\log n\right) \geq\Pro_{\tth}\left(Y=k\right)\geq \Pro_{\tth}\left(A_N\cap\{Y=k\}\right).
$

\bigskip

Introduce the function $q$ such that 
$$
\IND_{A_N\cap \{Y=k\}}=q\big((X_{\ell,s})_{1\leq\ell\leq K, \ s=1..N}\big).
$$
One has:

\begin{eqnarray*}
\Pro_{\tth}\left(A_N\cap\{Y=k\}\right)&=&\int q\big((x_{\ell,s})_{1\leq \ell\leq K, \ s=1..N},(x_{k,s})_{s=1..N}\big)\prod_{\tiny \begin{array}c 1\leq \ell\leq K\\
s=1..N\end{array}}d\tnu_\ell(x_{\ell,s})\\
&\geq& \eta^N \int q\big((x_{\ell,s})_{1\leq \ell\leq K, \ s=1..N},(x_{k,s})_{s=1..N}\big)\prod_{\tiny \begin{array}c 1\leq \ell\leq K\\
s=1..N\end{array}}d\nu_\ell(x_{\ell,s})\\
&=& \eta^N \Pro_{\theta}\left(A_N\cap\{Y=k\}\right)\geq \eta^N \left(\Pro_{\theta}(A_N)-\Pro_{\theta}(Y\neq k)\right)\\
&\geq&\eta^Na^N- \eta^N\Pro_{\theta}\left(\exists l\neq k, \ T_l(N)\geq C\log n  \right) \\
&\geq&\eta^Na^N- \eta^N\Pro_{\theta}\left(\exists l\neq k, \ T_l(n)\geq C\log n  \right) \\
&\geq& (\eta a)^N- \eta^N \frac{(K-1)\tilde{C}}{f(n)}.
\end{eqnarray*}

As $N$ is of order $\log n$, it is then readily seen that $f(n)\Pro_{\tth}\left(T_k(n)\geq C\log n\right)\xrightarrow[n\to +\infty]{}+\infty$, hence the result.

\bibliography{bandit_stat}

\end{document}